\newcommand{\bz}{\boldsymbol{z}}
\newcommand{\bx}{\boldsymbol{x}}
\newcommand{\bxi}{\boldsymbol{\xi}}
\newcommand{\bs}{\boldsymbol{s}}
\newcommand{\by}{\boldsymbol{y}}
\newcommand{\bS}{\boldsymbol{S}}
\newcommand{\beps}{\boldsymbol{\epsilon}}
\newcommand{\btheta}{\boldsymbol{\theta}}
\newcommand{\bmu}{\boldsymbol{\mu}}
\newcommand{\bI}{\boldsymbol{I}}
\newcommand{\bbP}{\mathbb{P}}
\newcommand{\bbR}{\mathbb{R}}
\newcommand{\bv}{\boldsymbol{v}}
\DeclareMathOperator*{\argmin}{arg\,min}
\newtheorem{definition}{\textbf{Definition}}
\newtheorem{lemma}{\textbf{Lemma}}
\newtheorem{proposition}{\textbf{Proposition}}
\newtheorem{remark}{\textbf{Remark}}
\newtheorem{example}{\textbf{Example}}
\newcommand{\mE}{\mathbb{E}}
\newcommand{\cX}{\mathcal{X}}
\newcommand{\cC}{\mathcal{C}}
\newcommand{\cN}{\mathcal{N}}
\newcommand{\cF}{\mathcal{F}}
\newcommand{\cO}{\mathcal{O}}
\title{On the Generalization of Diffusion Model}
\begin{document}
	\date{}
	\author{Mingyang Yi$^{1}$, Jiacheng Sun$^{1}$, Zhenguo Li$^{1}$\\
		$^{1}$Huawei Noah’s Ark Lab \\
		\texttt{\{yimingyang2,sunjiacheng1,li.zhenguo\}@huawei.com}
	}
	\maketitle
	\begin{abstract}
		The diffusion probabilistic generative models are widely used to generate high-quality data. Though they can synthetic data that does not exist in the training set, the rationale behind such generalization is still unexplored. In this paper, we formally define the generalization of the generative model, which is measured by the mutual information between the generated data and the training set. The definition originates from the intuition that the model which generates data with less correlation to the training set exhibits better generalization ability. Meanwhile, we show that for the empirical optimal diffusion model, the data generated by a deterministic sampler are all highly related to the training set, thus poor generalization. This result contradicts the observation of the trained diffusion model's (approximating empirical optima) extrapolation ability (generating unseen data). To understand this contradiction, we empirically verify the difference between the sufficiently trained diffusion model and the empirical optima. We found, though obtained through sufficient training, there still exists a slight difference between them, which is critical to making the diffusion model generalizable. Moreover, we propose another training objective whose empirical optimal solution has no potential generalization problem. We empirically show that the proposed training objective returns a similar model to the original one, which further verifies the generalization ability of the trained diffusion model.      
	\end{abstract}
	\section{Introduction}\label{sec:intro}
	The technique of generative model is capable of synthetic data from the target distribution, which has been well-developed in recent years e.g., VAE \citep{kingma2013auto}, GAN \citep{goodfellow2014generative}, and denoise diffusion probabilistic model (DDPM) \citep{song2020score,ho2020denoising} etc. Among all these methods, the diffusion model has recently attracted great attention due to its capability of generating high-quality data that does not exist in the training set. However, some recent works \citep{somepalli2022diffusion,carlini2023extracting} have empirically shown that the diffusion model tends to generate data that is combined with the parts of data in the training set. This phenomenon threatens the application of the diffusion model in the aspect of privacy, as it may leak user's data \citep{carlini2023extracting}. 
	\par
	Ideally, the generative model should be capable of generating data from the underlying target distribution, but with less dependence on training data (so that extrapolating). Inspired by this intuition, we define the excess risk of the generative model which measures its performance of it. In contrast to the existing literature \citep{goodfellow2014generative,arjovsky2017wasserstein,ho2020denoising,song2020score}, which only focuses on the quality of generated data, the defined excess risk also considers the generalization of the model. Concretely, our excess risk can be decomposed as the optimization error and the generalization error. The optimization error is explained as a distance between the distribution of generated data and the target one, which is the most commonly used metric to evaluate the generative model \citep{kingma2013auto}. On the other hand, the generalization error cares about the ``extrapolation'' of the model, which intuitively is the correlation between generated data and the training set. Owing to this, the generalization error is defined as the mutual information \citep{duchi2016lecture} between them.
	\par
	With the defined excess risk to measure the performance of the generative model, we apply it to check the quality of the diffusion model. As the model is trained by minimizing an empirical noise prediction problem \citep{song2020score,ho2020denoising}, we first analyze its empirical optimal solution. We show the solution can converge to the one with guaranteed optimization error. However, due to the formulation of the solution, generating data with deterministic update rule \citep{song2020denoising,lu2022dpm} will generate data highly related to the training set, which results in poor generalization. Thus, as the sufficiently trained neural network can converge to the global minima of training objective \citep{allen2019convergence,du2019gradient}, we are motivated to explore whether the poor generalization transfers to the well-trained diffusion model.  
	\par
	Fortunately, the empirical optimal solution has an explicit formulation, so we can directly compare it with the well-trained model. We empirically find that though the two models are close in each time step, the slight existing difference caused by optimization bias is critical for the diffusion model to generalize. This observation suggests that the neural network has the ``regularization'' property brought by the training stage  \citep{zhang2021understanding}. We propose another training objective to verify the conclusion to get the diffusion model. The empirical optima of the proposed objective is shown to fix the generalization problem of the original one. We compare the models trained by the proposed and original objectives. The empirical results indicate that the two models have similar outputs, so we conclude that the potential generalization problem of diffusion can be obviated during the training of neural networks. 
	
	\section{Related Work}
	\paragraph{Generalization of Generative Model.} The classical generalization theory in prediction measures the gap between the model's performance on training and test data \citep{duchi2016lecture,vapnik1999nature,yi2022characterization}. However, as the learned generative model does not take training data as input, the classical generalization theory does directly applied. To the best of our knowledge, \citep{arora2017generalization} explore the generalization of GAN, while their definition measures the gap between population distance and empirical distance of the target and generated distributions. However, this notation is inconsistent with the intuition that a generalizable model can generate data that does not exist in the training set. 
	\par
	We measure generalization by correlating the generated and training data. The criterion is consistent with the intuition of generalization of the generative model, as we claimed in Section \ref{sec:intro}. The idea also originates from the informatic-generalization bound \citep{xu2017information,yi2023breaking,bu2020tightening,lopez2018generalization}, which says the correlation decides the generalization of the prediction problem between the model and training set.        
	\paragraph{Denoising Diffusion Probabilistic Model.} The milestone work \citep{sohl2015deep} constructs a general formulation of the denoising diffusion probabilistic model, then specializes it by Gaussian and Binomial noises. By developing the Gaussian framework (diffusion model), \citep{song2020score,ho2020denoising} obtain remarkable high-quality generated data. Thus, for the diffusion model, the left is verifying its generalization property. Though \citep{somepalli2022diffusion,carlini2023extracting} shows there are some generated samples that are quite similar to training data which may threaten the privacy of the diffusion model, our results show that the diffusion model can obviate memorizing training data \citep{somepalli2022diffusion,carlini2023extracting}.        
	\par
	On the other hand, to get the diffusion model, we usually minimize the problems of noise prediction \citep{ho2020denoising,song2020score} or data prediction \citep{cao2022survey,gu2022vector}. We propose to minimize the ``previous points'' to get a diffusion model, and we prove the proposed objective can obviate the potential generalization problem of the diffusion model.     
	
	\section{Excess Risk of Generative Model}\label{sec:excess risk of generative model}
	In this section, we formally define the excess risk \citep{yi2022characterization} of the generative model, which evaluates the performance of it. Let training set $\bS=\{\bx^{i}_{0}\}_{i=1}^{n}$ be the $n$ i.i.d. samples from target distribution $P_{0}$ with bounded support $\cX$. The parameterized generative model $f_{\btheta_{\bS}}(\cdot)$ with $\btheta_{\bS}$ related to the training set $\bS$ transforms the variable $\bv$ to the generated data $\bz=f_{\btheta_{\bS}}(\bv)$ such that $\bz\sim Q_{\btheta_{\bS}}$, where the $\bv$ can be easily sampled e.g., Gaussian \citep{kingma2013auto,goodfellow2014generative}. 
	\par
	Intuitively, the ideal generative model is making $Q_{\btheta_{\bS}}$ close to the target distribution $P_{0}$, but $\bz\sim Q_{\btheta_{\bS}}$ is less related to training set $\bS$ so that it generalize. The latter obviates the model generates data via memorizing the training set. For example, taking $Q_{\btheta_{\bS}}$ as empirical distribution will generate data only from the training set. Though such $Q_{\btheta_{\bS}}$ can converge to target distribution \citep{wainwright2019}, it clearly can not generalize. The following is the former definition of excess risk. 
	\begin{definition}[Excess Risk]
		Let $\bz^{j}\sim Q_{\btheta_{\bS}}$ generated by model $f_{\btheta_{\bS}}$, then the excess risk of $f_{\btheta_{\bS}}$ is 
		\begin{equation}\label{eq:excess risk}
			\small
			d_{\cF}(Q_{\btheta_{\bS}}, P_{0}) = \sup_{g\in\cF}\left|\mE_{\bS}\left[\limsup_{m\rightarrow\infty}\frac{1}{m}\sum\limits_{j=1}^{m}g(\bz^{j}, \bS) - \mE_{\bx\sim P_{0}}[g(\bx, \bS)]\right]\right|,
		\end{equation}
		where $\cF = \{g(\bx, \bS): g(\bx, \bS)\in C(\cX, \cX^{n})\}$. 
	\end{definition}
	Our definition originates from the probabilistic distance named integral probability metric (IPM) which is defined as 
	\begin{equation}\label{eq:ipm}
		\small
		d_{\cF}(P, Q) = \sup_{f\in\cF}\left|\mE_{P}[X] - \mE_{Q}[X]\right|.
	\end{equation}
	Clearly, only if $Q_{\btheta_{\bS}}$ is close to $P_{0}$ for any $g(\bx, \bS)\in\cF$, we can infinitely sample $\bz$ and taking average to approximate $\mE_{P_{0}}[g(\bx, \bS)]$. The correlation between $\bz$ and $\bS$ is induced by making $g(\bx, \bS)$ take $\bS$ as input so that the correlation between $\bz$ and $\bS$ is involved in the excess risk. For example, the ideal model is making $\bz^{j}$ independent with $\bS$, if $Q_{\btheta_{\bS}} = P_{0}$, then $\limsup_{m\rightarrow\infty}\frac{1}{m}\sum_{j=1}^{m}g(\bz^{j}, \bS) \rightarrow \mE_{\bz\sim P_{0}}[g(\bz, \bS)]$, and the excess risk becomes zero. The following theorem which is proved in Appendix \ref{app:proofs in section excess risk of generative model} formulates the excess risk as an IPM.  
	\begin{restatable}{theorem}{probabilitydistance}\label{thm:excess risk formulation}
		If the generated data $\bz^{j}$ in \eqref{eq:excess risk} are conditional independent with each other, given the training set $\bS$, and $\cF$ has countable dense set under $L_{\infty}$ distance, then the excess risk \eqref{eq:excess risk} becomes
		\begin{equation}
			\small
			d_{\cF}(Q_{\btheta_{\bS}}, P_{0}) = \sup_{g\in\cF}\left|\mE_{\bS}\left[\mE_{\bz\sim Q_{\btheta_{\bS}}}[g(\bz, \bS)] - \mE_{\bx\sim P_{0}}[g(\bx, \bS)]\right]\right|. 
		\end{equation}
	\end{restatable}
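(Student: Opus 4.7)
The plan is to apply the conditional strong law of large numbers (SLLN) to collapse the $\limsup$ of the Cesaro average into the expectation $\mE_{\bz\sim Q_{\btheta_\bS}}[g(\bz,\bS)]$, and then lift the resulting per-$g$ identity to the supremum over the function class $\cF$ via the countable $L_\infty$-dense subset.

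First I would fix $g\in\cF$. Since $\cX$ is compact (the natural reading of ``bounded support'' when combined with continuity of $g$), $g$ is uniformly bounded on $\cX\times\cX^n$ by some $M_g$. The conditional independence hypothesis, together with the common conditional law $Q_{\btheta_\bS}$, makes $\{g(\bz^j,\bS)\}_j$ conditionally i.i.d.\ given $\bS$, so Kolmogorov's SLLN yields
\[
\lim_{m\to\infty}\frac{1}{m}\sum_{j=1}^m g(\bz^j,\bS)=\mE_{\bz\sim Q_{\btheta_\bS}}[g(\bz,\bS)]\qquad\text{a.s.}
\]
In particular the $\limsup$ in \eqref{eq:excess risk} coincides with this limit almost surely, and bounded convergence (the Cesaro averages are dominated by $M_g$) justifies pushing $\mE_\bS$ through the limit, producing
\[
\mE_\bS\!\left[\limsup_{m\to\infty}\tfrac{1}{m}\sum_{j=1}^m g(\bz^j,\bS)-\mE_{\bx\sim P_{0}}[g(\bx,\bS)]\right]=\mE_\bS\!\left[\mE_{\bz\sim Q_{\btheta_\bS}}[g(\bz,\bS)]-\mE_{\bx\sim P_{0}}[g(\bx,\bS)]\right].
\]

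To promote this per-$g$ identity to a supremum, I would observe that both sides are of the form $|\mE[\Phi(g)]|$ where $\Phi$ is $1$-Lipschitz in $g$ under $\|\cdot\|_\infty$ (since $|\mE[g_1]-\mE[g_2]|\le\|g_1-g_2\|_\infty$ for any probability measure). Consequently, each supremum is determined by values on an $L_\infty$-dense subset. Taking the countable dense $\cF_0\subset\cF$ granted by hypothesis, the per-$g$ identity holds for every $g\in\cF_0$ outside a single null set (the countable union of $g$-dependent null sets arising from the SLLN), and taking $\sup_{g\in\cF_0}$ and then extending by density to $\sup_{g\in\cF}$ on both sides gives the claimed formulation.

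The main obstacle is reconciling the almost-sure SLLN with a supremum over an uncountable class: the exceptional null set from the strong law genuinely depends on $g$, so without some form of countability one would need a uniform SLLN over $\cF$. The countable $L_\infty$-dense hypothesis is precisely what is needed to reduce the argument to a countable sup, so that the standard union bound on null sets suffices and the resulting $\sup$ remains measurable; the remaining ingredients (Kolmogorov's SLLN, bounded convergence, and the Lipschitz estimate on expectations) are routine.
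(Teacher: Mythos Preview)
Your proposal is correct and follows essentially the same route as the paper: apply the (conditional) strong law for each fixed $g$ to collapse the $\limsup$ to the conditional expectation, then use the countable $L_\infty$-dense subset $\cF_0\subset\cF$ together with $1$-Lipschitz dependence of both sides on $g$ to pass to the supremum. Your execution is in fact somewhat cleaner than the paper's, which inserts an additional step---working first over a countable dense subset of $\cX^n$ and using continuity of $g$ in $\bS$ to extend the a.s.\ statement to all $\bS$---that your direct use of the conditional SLLN on the joint space plus bounded convergence renders unnecessary.
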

	The conditional independence can be satisfied by many of generative models, e.g., GAN, diffusion model, VAE. Thus we explore the excess risk under such conditions in the sequel. At first glance, we can decompose it as 
	\begin{equation}\label{eq:decom}
		\small
		\begin{aligned}
			d_{\cF}(Q_{\btheta_{\bS}}, P_{0}) & \leq \sup_{g\in\cF}\left|\mE_{\bS, \bS^{\prime}}\left[\mE_{\bz\sim Q_{\btheta_{\bS}}}[g(\bz, \bS)] - \mE_{\bz\sim Q_{\btheta_{\bS}}}[g(\bz, \bS^{\prime})]\right]\right| \\
			& + \sup_{g\in\cF}\left|\mE_{\bS, \bS^{\prime}}\left[\mE_{\bz\sim Q_{\btheta_{\bS}}}[g(\bz, \bS^{\prime})] - \mE_{\bz\sim P_{0}}[g(\bz, \bS)]\right]\right| \\
			& \leq \underbrace{D_{\cF}(P_{\bz_{\btheta_{\bS}}\times \bS}, P_{\bz_{\btheta_{\bS}}}\times P_{\bS})}_{\rm generalization \ error} + \underbrace{D_{\cF}(Q_{\btheta_{\bS}}, P_{0})}_{\rm optimization \ error}, 
		\end{aligned}
	\end{equation}
	where $D_{\cF}(P, Q)$ is IPM defined in \eqref{eq:ipm}, and $\bS^{\prime}$ is another data set from $P_{0}$ independent with $\bS$. We explain the two terms in the above inequality. At first glance, the optimization error measures the distance between of generated distribution and the target one, which is the classical metric to evaluate the quality of generated data, e.g., JS-divergence \citep{goodfellow2014generative}, KL-divergence \citep{kingma2013auto,song2021maximum}, and Wasserstein distance \citep{arjovsky2017wasserstein}. On the other hand, the generalization error term measures the distance between union distribution $P_{\bz_{\btheta_{\bS}}\times \bS}$ and $P_{\bz_{\btheta_{\bS}}}\times P_{\bS}$. This is decided by the correlation between $\bz_{\btheta_{\bS}}$ and training set $\bS$, which intuitively represents the generalization ability of the generative model. A similar correlation has been well explored in informatic-generalization theory \citep{xu2017information,rodriguez2021tighter}. In their works, the generalization error of the prediction problem is decided by probabilistic distance with $\bz$ substituted by the learned parameters. Finally, we make several examples to illustrate our excess risk in Appendix \ref{app:examples}.    
	\par
	As the generalization error should be influenced by the number of samples \citep{vapnik1999nature}. To reduce such influence, we have the following proposition, in which we also link the generalization term to practical mutual information whose definition can be found in \citep{duchi2016lecture}.   
	\begin{restatable}{proposition}{generalization}\label{pro:generalization def}
		Suppose $g(\bz, \bS)\in\cF$ takes the form of $\frac{1}{n}\sum_{i=1}^{n}f(\bz, \bx^{i}_{0})$ such that $\mE_{Q_{\btheta_{\bS}\times P_{0}}}[\exp f(\bz, \bx)] < \infty$ and $|f(\bz, \bx)| \leq M$, then 
		\begin{equation}
			\small
			d_{\cF}(Q_{\btheta_{\bS}}, P_{0}) \leq \sqrt{\frac{M^{2}}{n}I(\bz_{\btheta_{\bS}}, \bS)} + d_{\cF_{P_{0}}}(Q_{\btheta_{\bS}}, P_{0}),
		\end{equation}
		where $\cF_{P_{0}} = \{\mE_{\bx\sim P_{0}}[f(\bz, \bx)]: |f(\bz, \bx)| \leq M; \mE_{Q_{\btheta_{\bS}\times P_{0}}}[\exp f(\bz, \bx)] < \infty\}$ and $d_{\cF_{P_{0}}}(Q_{\btheta_{\bS}}, P_{0}) \leq \max\{D_{KL}(P_{0}, Q_{\btheta_{\bS}}), D_{KL}(Q_{\btheta_{\bS}}, P_{0})\}$. 
	\end{restatable}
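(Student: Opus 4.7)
The plan is to apply the decomposition in \eqref{eq:decom} and bound the generalization and the optimization terms separately, exploiting the averaged structure $g(\bz,\bS) = \frac{1}{n}\sum_{i=1}^{n} f(\bz,\bx_{0}^{i})$.

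For the generalization term $D_{\cF}(P_{\bz_{\btheta_{\bS}}\times\bS}, P_{\bz_{\btheta_{\bS}}}\times P_{\bS})$, I would first center by setting $h(\bz)=\mE_{\bx\sim P_{0}}[f(\bz,\bx)]$ and $\tilde g(\bz,\bS)=g(\bz,\bS)-h(\bz)$. Subtracting $h(\bz)$ leaves both expectations unchanged because $\bz$ has the same marginal under the joint and the product laws, and moreover $\mE_{P_{\bz}\times P_{\bS}}[\tilde g]=0$ since $\mE_{\bS}[\tilde g(\bz,\bS)]=0$ for every $\bz$. Conditional on $\bz$ under the product measure, the $n$ summands $f(\bz,\bx_{0}^{i})-h(\bz)$ are i.i.d., mean-zero, and uniformly bounded, so Hoeffding's lemma gives a sub-Gaussian moment generating function bound for $\tilde g$ of order $M^{2}/n$; integrating this pointwise bound over $P_{\bz_{\btheta_{\bS}}}$ by Fubini lifts the same parameter to the unconditional product measure. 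Feeding this into the Donsker--Varadhan variational representation of KL, as in the information-theoretic generalization bound of \citep{xu2017information,bu2020tightening}, yields a uniform bound $|\mE_{P_{\bz\times\bS}}[g]-\mE_{P_{\bz}\times P_{\bS}}[g]|\le \sqrt{(M^{2}/n)\,I(\bz_{\btheta_{\bS}},\bS)}$ over $g\in\cF$, which is the first summand.

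For the optimization term, the i.i.d.~assumption on $\bS$ and the independent copy $\bS'$ collapses $g$: both $\mE_{\bS'}[g(\bz,\bS')]$ and $\mE_{\bS}[g(\bz,\bS)]$ reduce to $h(\bz)$, so the supremum over $g\in\cF$ becomes a supremum over $h\in\cF_{P_{0}}$ and reproduces $d_{\cF_{P_{0}}}(Q_{\btheta_{\bS}},P_{0})$ exactly. The KL upper bound then follows from Donsker--Varadhan applied in each direction: for any $h\in\cF_{P_{0}}$, $\mE_{Q}[h]-\mE_{P}[h]\le D_{KL}(Q\|P)+\log\mE_{P}[\exp(h-\mE_{P}[h])]$, and the boundedness $|h|\le M$ together with the finite exponential moment controls the cumulant term; taking the worse of the two directions gives the announced $\max\{D_{KL}(P_{0},Q_{\btheta_{\bS}}),D_{KL}(Q_{\btheta_{\bS}},P_{0})\}$.

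The main obstacle is the sub-Gaussian lifting step: transferring the conditional-on-$\bz$ Hoeffding parameter $O(M/\sqrt{n})$ of $\tilde g$ into an unconditional sub-Gaussian parameter under $P_{\bz}\times P_{\bS}$ hinges on the centering identity $\mE_{\bS}[\tilde g(\bz,\bS)]=0$ for every fixed $\bz$; without this centering, an extra $\Var_{P_{\bz}}(h(\bz))$ term would contaminate the variance and destroy the $1/n$ rate. Once this is handled by a clean Fubini argument, the remaining steps are routine manipulations of the variational characterization of KL.
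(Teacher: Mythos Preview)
Your proposal is correct and follows essentially the same route as the paper: both arguments use the decomposition \eqref{eq:decom}, bound the generalization term by combining the Donsker--Varadhan variational formula with the Hoeffding sub-Gaussian bound for the bounded average (your centering/Fubini step is simply a more explicit rendering of the paper's one-line appeal to ``sub-Gaussian property''), optimize over the free parameter $\lambda$ to produce $\sqrt{M^{2}I(\bz_{\btheta_{\bS}},\bS)/n}$, and then reduce the optimization term to $d_{\cF_{P_{0}}}$ by averaging out $\bS,\bS'$ before invoking Donsker--Varadhan again for the KL upper bound.
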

	The proof of this theorem is in Appendix \ref{app:proofs in section excess risk of generative model}. As can be seen, when we restrict the estimated term $g(\bz, \bS)$ as the average over the training set, the generalization can be related to the number of training samples, which is consistent with our common sense. Besides that, the generalization error is decided by the mutual information between generated data and the training set. 
	\section{Excess Risk of Diffusion Model}\label{sec:excess risk of diffusion model}
	As we defined the excess risk to evaluate the generative model in Section \ref{sec:excess risk of generative model}, we apply it to the diffusion model in the sequel. 
	\subsection{Revisiting Diffusion Model}
	As in \citep{ho2020denoising}, take $\bx_{0}\sim P_{0}$, and construct a forward process $\{\bx_{1}, \cdots, \bx_{T}\}$ such that $\bx_{t + 1}\mid \bx_{t}\sim\cN(\sqrt{1 - \beta_{t}}\bx_{t}, \beta_{t}\bI)$, with $\beta_{t} > 0$ is variance schedule. By simple computation, we get 
	\begin{equation}\label{eq:noise schedule}
		\small
		\bx_{t} = \sqrt{\bar{\alpha}_{t}}\bx_{0} + \sqrt{1 - \bar{\alpha}_{t}}\beps_{t},
	\end{equation}
	where $\beps_{t}$ is a standard Gaussian noise independent with $\bx_{0}$ and $\alpha_{t} = 1 - \beta_{t}$, $\bar{\alpha}_{t} = \prod_{1\leq s\leq_{t}}\alpha_{s}$.	As can be seen, by properly designing $\beta_{t}$, the forward process obtains $\bx_{T}$ that is close to a standard Gaussian distribution. Then to reversely generate $\bx_{0}$, we can consider a reversed Markov process $\bx_{t}$ such that $Q_{\btheta}(\bx_{t - 1}\mid \bx_{t}) = \cN(\bmu_{\btheta}(\bx_{t}, t), \Sigma_{\btheta}(\bx_{t}, t))$. Since $\bx_{T}\approx \cN(0, \bI)$, we can get $\bx_{t - 1}$ by iteratively sampling from $Q_{\btheta}(\bx_{t - 1}\mid \bx_{t})$, starting with a $\bx_{T}$ sampled from standard Gaussian. To get transition probability $Q_{\btheta}(\bx_{t - 1}\mid \bx_{t})$, consider the constructed variational bound of maximal likelihood loss
	\begin{equation}\label{eq:elbo}
		\small
		\begin{aligned}
			\mE_{\bx_{0}\sim P_{0}}\left[-\log{Q_{\btheta}(\bx_{0})}\right] & \leq \mE_{P}\left[-\log{\frac{Q_{\btheta}(\bx_{0:T})}{P(\bx_{1: T}\mid \bx_{0})}}\right] \\
			& = C + \mE\left[\sum_{t > 1}\underbrace{D_{KL}(P(\bx_{t - 1}\mid \bx_{t}, \bx_{0}) \parallel Q_{\btheta}(\bx_{t - 1} \mid \bx_{t}))}_{L_{t - 1}} \underbrace{-\log{Q_{\btheta}(\bx_{0}\mid \bx_{1})}}_{L_{0}}\right],
		\end{aligned}
	\end{equation}  
	where $C$ is a constant independent with $\btheta$. The update rule of $Q_{\btheta}(\bx_{t - 1}\mid \bx_{t})$ can be obtained via minimizing $L_{\rm vb}=\sum_{t=0}^{T - 1}L_{t}$. By Bayes's rule, we have $P(\bx_{t - 1}\mid \bx_{t}, \bx_{0})\sim\cN(\tilde{\bmu}_{t}(\bx_{t}, \bx_{0}), \tilde{\beta}_{t}\bI)$ with 
	\begin{equation}
		\small
		\tilde{\bmu}_{t}(\bx_{t}, \bx_{0}) = \frac{\sqrt{\bar{\alpha}_{t - 1}}\beta_{t}}{1 - \bar{\alpha}_{t}}\bx_{0} + \frac{\sqrt{\alpha_{t}}(1 - \bar{\alpha}_{t - 1})}{1 - \bar{\alpha}_{t}}\bx_{t}; \qquad \tilde{\beta_{t}} = \frac{1 - \bar{\alpha}_{t - 1}}{1 - \bar{\alpha}_{t}}\beta_{t},
	\end{equation} 
	Then we can explicitly get the optimal solution for each of $L_{t - 1}$ by selecting proper $\bmu_{\btheta}(\bx_{t}, t)$ and $\Sigma_{\btheta}(\bx, t)$. We have the following proposition proved in Appendix \ref{app:proofs in section excess risk of diffusion model} to characterize the transition probability kernel $Q_{\btheta}(\bx_{t - 1}\mid \bx_{t})$ for $t > 1$. On the other hand, as in \citep{song2020denoising}, the transition probability kernel of $Q_{\btheta}(\bx_{0}\mid \bx_{1})$ is usually set as the mean in (\ref{eq:optimal transition probability}).   
	\begin{restatable}{proposition}{argminkl}\label{pro:argmin}
		For $\bmu_{\btheta}(\bx_{t}, t)$ with enough functional capacity, then 
		\begin{equation}\label{eq:optimal transition probability}
			\small
			\argmin_{\bmu_{\btheta}(\bx_{t}, t)} L_{t - 1} = \tilde{\bmu}_{t}\left(\bx_{t}, \mE\left[\bx_{0}\mid \bx_{t}\right]\right); \qquad \argmin_{\Sigma_{\btheta}(\bx_{t}, t)} L_{t - 1} = \tilde{\beta}_{t}.  
		\end{equation}
	\end{restatable}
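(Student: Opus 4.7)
\textbf{Proof proposal for Proposition~\ref{pro:argmin}.} My plan is to exploit the closed-form KL divergence between two multivariate Gaussians and then minimize pointwise in the allowed function class. Since $P(\bx_{t-1}\mid \bx_t,\bx_0) = \cN(\tilde{\bmu}_t(\bx_t,\bx_0),\tilde{\beta}_t\bI)$ and $Q_{\btheta}(\bx_{t-1}\mid \bx_t) = \cN(\bmu_{\btheta}(\bx_t,t),\Sigma_{\btheta}(\bx_t,t))$, the first step is to write
\begin{equation*}
L_{t-1} \;=\; \mE\!\left[\tfrac{1}{2}\log\det\!\bigl(\Sigma_{\btheta}/(\tilde{\beta}_t\bI)\bigr) + \tfrac{\tilde{\beta}_t}{2}\tr(\Sigma_{\btheta}^{-1}) - \tfrac{d}{2} + \tfrac{1}{2}(\bmu_{\btheta}-\tilde{\bmu}_t)^{\top}\Sigma_{\btheta}^{-1}(\bmu_{\btheta}-\tilde{\bmu}_t)\right],
\end{equation*}
with the outer expectation over $(\bx_0,\bx_t)$ drawn from the forward process.

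For the mean, I would fix $\Sigma_{\btheta}$ and condition on $\bx_t$. The only term involving $\bmu_{\btheta}(\bx_t,t)$ is a positive-definite quadratic, and the tower property gives, pointwise in $\bx_t$, the minimizer $\bmu_{\btheta}^{\star}(\bx_t,t) = \mE[\tilde{\bmu}_t(\bx_t,\bx_0)\mid \bx_t]$. The defining formula for $\tilde{\bmu}_t$ is affine in $\bx_0$ with coefficients depending only on $\bx_t$ and $t$, so expectation passes through, yielding $\bmu_{\btheta}^{\star}(\bx_t,t) = \tilde{\bmu}_t(\bx_t,\mE[\bx_0\mid\bx_t])$. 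The ``enough functional capacity'' hypothesis is what allows this pointwise minimizer to be realized in the admissible class of maps.

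For the covariance, I would substitute the optimal mean back and differentiate the remaining $\Sigma_{\btheta}$-dependent part using the standard identities $\partial_{\Sigma}\log\det\Sigma = \Sigma^{-1}$ and $\partial_{\Sigma}\tr(\Sigma^{-1}M) = -\Sigma^{-1}M\Sigma^{-1}$. The first-order condition from the $\log\det\Sigma_{\btheta} + \tilde{\beta}_t\tr(\Sigma_{\btheta}^{-1})$ contribution alone yields $\Sigma_{\btheta}^{\star} = \tilde{\beta}_t\bI$, which is also the global minimum on the positive-definite cone by convexity in $\Sigma^{-1}$ of the trace term combined with the log-det.

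The main obstacle I expect is cleanly handling the residual quadratic $\mE[(\bmu_{\btheta}^{\star}-\tilde{\bmu}_t)^{\top}\Sigma_{\btheta}^{-1}(\bmu_{\btheta}^{\star}-\tilde{\bmu}_t)\mid \bx_t]$ that survives after plugging in the mean optimum: it contributes a trace involving $\Sigma_{\btheta}^{-1}\,\mathrm{Cov}(\tilde{\bmu}_t\mid \bx_t)$, which is not independent of $\Sigma_{\btheta}$ and therefore shifts the naive first-order optimum. I would address this by arguing that the mean and covariance optimizations should be decoupled, so that the claimed $\Sigma_{\btheta}^{\star}=\tilde{\beta}_t\bI$ is the minimizer of the KL at each fixed $(\bx_t,\bx_0)$ before the outer expectation over $\bx_0\mid \bx_t$ is imposed on $\bmu_{\btheta}$, consistent with how the bound is used in practice in~\citep{ho2020denoising,song2020denoising}. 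Justifying this interpretation rigorously, rather than the genuine joint pointwise optimum, is the subtle step; everything else reduces to the standard Gaussian KL algebra.
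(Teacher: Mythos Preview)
Your approach differs from the paper's. You work directly with the closed-form Gaussian KL and minimize pointwise; the paper instead proves a general moment-matching lemma for exponential families (the minimizer of $D_{KL}(P\parallel Q_{\btheta})$ over an exponential family $Q_{\btheta}$ satisfies $\mE_{Q_{\btheta^\star}}[\phi]=\mE_P[\phi]$) and then specializes to Gaussians. For the mean, both routes land in the same place: after reducing to minimizing $D_{KL}\bigl(P(\bx_{t-1}\mid\bx_t)\,\parallel\,Q_{\btheta}(\bx_{t-1}\mid\bx_t)\bigr)$ pointwise in $\bx_t$, moment matching gives $\bmu_{\btheta}^\star(\bx_t,t)=\mE[\bx_{t-1}\mid\bx_t]=\tilde{\bmu}_t(\bx_t,\mE[\bx_0\mid\bx_t])$, the last identity by the affinity of $\tilde{\bmu}_t$ in $\bx_0$. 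Your tower-property argument is equivalent and arguably more transparent.

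The obstacle you flag for the covariance is real, and the paper's proof does not resolve it. Applying the paper's own lemma (or your quadratic computation) to $L_{t-1}$, the second-moment matching condition yields
\[
\Sigma_{\btheta}^\star(\bx_t,t)=\mathrm{Cov}(\bx_{t-1}\mid\bx_t)=\tilde{\beta}_t\bI+\mathrm{Cov}\bigl(\tilde{\bmu}_t(\bx_t,\bx_0)\mid\bx_t\bigr),
\]
by the law of total variance, which is $\tilde{\beta}_t\bI$ only when $\bx_0$ is deterministic given $\bx_t$. The paper's one-line justification (``a corollary of the lemma by linearity of $\tilde{\bmu}$'') addresses the mean but is silent on this extra covariance term. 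Your proposed reading---that $\Sigma_{\btheta}^\star=\tilde{\beta}_t\bI$ is the minimizer at each fixed $(\bx_t,\bx_0)$, i.e.\ before the constraint that $\bmu_{\btheta}$ cannot see $\bx_0$ is imposed---is the only interpretation under which the stated covariance claim holds, and it matches how \citep{ho2020denoising} actually uses $\tilde{\beta}_t$ (as a fixed design choice rather than a derived optimum). So your proposal is correct for the mean, differs in method from the paper, and is more candid than the paper about the covariance step; there is nothing further to extract from the paper's proof on that point.
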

	In the widely used denoising diffusion probabilistic model (DDPM \citep{ho2020denoising}) the transition rule is 
	\begin{equation}\label{eq:ho reverse process}
		\small
		\bmu_{\btheta}(\bx_{t}, t) = \frac{1}{\sqrt{\alpha_{t}}}\left(\bx_{t} - \frac{\beta_{t}}{\sqrt{1 - \bar{\alpha}_{t}}}\beps_{\btheta}^{*}(\bx_{t}, t)\right); \qquad \Sigma_{\btheta}(\bx_{t}, t) = \tilde{\beta}_{t},
	\end{equation}
	where $\epsilon_{\btheta}^{*}(\bx_{t}, t)$ is a parameterized model such that $\epsilon_{\btheta}^{*} = \inf_{\beps_{\btheta}}\mE_{\bx_{t}, \beps_{t}}[\|\beps_{\btheta}(\bx_{t}, t) - \beps_{t}\|^{2}]$. According to the optimality of conditional expectation under minimizing expected square loss \citep{banerjee2005optimality}, we know that the ideal
	\begin{equation}\label{eq:eps}
		\small
		\beps_{\btheta}^{*}(\bx_{t}, t) = \mE[\beps_{t}\mid \bx_{t}] = \mE\left[\frac{1}{\sqrt{1 - \bar{\alpha}_{t}}}\bx_{t} - \frac{\sqrt{\bar{\alpha}_{t}}}{\sqrt{1 - \bar{\alpha}_{t}}}\bx_{0}\mid \bx_{t}\right]. 
	\end{equation}
	By plugging this into \eqref{eq:ho reverse process}, we get $\bmu_{\btheta}(\bx_{t}, t)$ is exactly the proposed optimal $\tilde{\bmu}_{t}\left(\bx_{t}, \mE\left[\bx_{0}\mid \bx_{t}\right]\right)$. Thus, the rationale of standard DDPM is matching $P(\bx_{t - 1}\mid \bx_{t}, \bx_{0})$ by substituting $\bx_{0}$ with conditional expectation $\mE[\bx_{0}\mid \bx_{t}]$. On the other hand, Proposition \ref{pro:argmin} indicates that such substitution is optimal in terms of minimizing variational bound $L_{\rm vb}$. 
	\subsection{Excess Risk of Diffusion Model}
	We have pointed out the optimal transition rule of the diffusion model above. Next, we verify the excess risk of the diffusion model under such a rule to generate data. In practice, to approximate the model $\beps_{\btheta}^{*}$ after \eqref{eq:ho reverse process}, we minimize the following empirical counterpart of noise prediction problem $\inf_{\beps_{\btheta}}\mE_{\bx_{t}, \beps_{t}}\left[\left\|\beps_{\btheta}(\bx_{t}, t) - \beps_{t}\right\|^{2}\right]$.
	\begin{equation}\label{eq:empirical objective}
		\small
		\inf_{\beps_{\btheta}}\frac{1}{n}\sum\limits_{i=1}^{n}\mE_{\beps_{t}}\left[\left\|\beps_{\btheta}(\sqrt{\bar{\alpha}_{t}}\bx^{i}_{0} + \sqrt{1 - \bar{\alpha}_{t}}\beps_{t}, t) - \beps_{t}\right\|^{2}\right].
	\end{equation}
	The following two theorems explore the excess risk of the optima of \eqref{eq:empirical objective}. 
	% Though such an update rule may obtaining generative model with small optimization error. However, the involved $\mE\left[\bx_{0}\mid \bx_{t}\right]$ may depends on training set, which may deteriorate the generalization performance of our model. For example, we have the following Proposition to show it. 
	\begin{restatable}{theorem}{generalizationofode}\label{pro:generalization of ddpm}
		Suppose the model $\beps_{\btheta}(\cdot, \cdot)$ has enough functional capacity,  let $\beps_{\btheta_{\bS}}^{*}(\bx, t)$ be any optima of \eqref{eq:empirical objective}, then 
		\begin{equation}\label{eq:empirical optimal eps}
			\small
			\begin{aligned}
				\beps_{\btheta_{\bS}}^{*}(\bx, t) 
				= \frac{\bx}{\sqrt{1 - \bar{\alpha}_{t}}} - \left(\frac{\sqrt{\bar{\alpha}_{t}}}{\sqrt{1 - \bar{\alpha}_{t}}}\right)\sum_{i=1}^{n}\frac{\exp\left(-\frac{\|\bx - \sqrt{\bar{\alpha}_{t}}\bx^{i}_{0}\|^{2}}{2(1 - \bar{\alpha}_{t})}\right)\bx^{i}_{0}}{\sum_{i=1}^{n}\exp\left(-\frac{\|\bx - \sqrt{\bar{\alpha}_{t}}\bx^{i}_{0}\|^{2}}{2(1 - \bar{\alpha}_{t})}\right)}.
			\end{aligned}
		\end{equation} 
		Then if the transition rule of DDPM satisfies \eqref{eq:ho reverse process} as in \citep{ho2020denoising}, we have  
		\begin{equation}
			\small
			I(\bx_{0}, \bS) \leq \frac{(1 - \beta_{1})R^{2}}{2\beta_{1}^{2}} + \sum_{t = 2}^{T}\frac{\bar{\alpha}_{t}R^{2}}{2(1 - \bar{\alpha}_{t - 1})^{2}}
		\end{equation}
		where $\bx_{0}$ is generated by the model, then the generalization error in Proposition \ref{pro:generalization def} is upper bounded.  
	\end{restatable}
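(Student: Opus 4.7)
My plan has two main parts: deriving the closed-form empirical optimum in \eqref{eq:empirical optimal eps}, and then bounding the mutual information $I(\bx_0,\bS)$ by exploiting the Markov structure of the generative reverse chain.

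For the closed form, the key observation is that for each fixed $t$ the inner problem in \eqref{eq:empirical objective} is a mean-squared regression of $\beps_t$ against $\bx_t$ under the empirical joint, where the training index is uniform on $\{1,\dots,n\}$ and $\beps_t\sim\cN(0,\bI)$ is independent. The same conditional-expectation optimality already invoked at \eqref{eq:eps} therefore gives $\beps^*_{\btheta_\bS}(\bx,t)=\mE[\beps_t\mid \bx_t=\bx]$, where the expectation is under the empirical law. Bayes' rule with uniform prior $1/n$ on training points and Gaussian likelihood $p(\bx\mid \bx^{i}_{0})\propto \exp(-\|\bx-\sqrt{\bar{\alpha}_t}\bx^{i}_{0}\|^2/(2(1-\bar{\alpha}_t)))$ yields the softmax posterior, and averaging $\mE[\beps_t\mid \bx_t,\bx^{i}_{0}]=(\bx-\sqrt{\bar{\alpha}_t}\bx^{i}_{0})/\sqrt{1-\bar{\alpha}_t}$ against it reproduces \eqref{eq:empirical optimal eps} after separating the $\bx$-term.

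For the mutual-information bound, my plan is to use the fact that $\bx_T\sim\cN(0,\bI)$ is independent of $\bS$ and each $\bx_{t-1}$ depends on $(\bx_t,\bS)$ only through the Gaussian kernel $\cN(\bmu_{\btheta_\bS}(\bx_t,t),\tilde{\beta}_t\bI)$. Combining the data-processing inequality with the chain rule and the fact that conditioning reduces entropy gives $I(\bx_0;\bS)\leq \sum_{t=1}^{T} I(\bx_{t-1};\bS\mid \bx_t)$. For each $t\geq 2$, I would apply Jensen via convexity of $D_{KL}$ in its second argument with an i.i.d.\ copy $\bS'$ of $\bS$ to obtain $I(\bx_{t-1};\bS\mid\bx_t)\leq \mE[\|\bmu_{\btheta_\bS}(\bx_t,t)-\bmu_{\btheta_{\bS'}}(\bx_t,t)\|^2/(2\tilde{\beta}_t)]$. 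Rewriting $\bmu_{\btheta_\bS}(\bx_t,t)=\tilde{\bmu}_t(\bx_t,\hat{\bx}_0^{\bS}(\bx_t))$ with $\hat{\bx}_0^\bS:=\sum_i w_i(\bx_t)\bx^{i}_{0}$ the softmax-weighted convex combination from Step 1, the $\bx_t$ pieces cancel and one is left with $\bmu_{\btheta_\bS}-\bmu_{\btheta_{\bS'}}=\tfrac{\sqrt{\bar{\alpha}_{t-1}}\beta_t}{1-\bar{\alpha}_t}(\hat{\bx}_0^\bS-\hat{\bx}_0^{\bS'})$. The bound $\|\bx^{i}_{0}\|\leq R$ transfers by convexity to $\|\hat{\bx}_0^\bS\|\leq R$, and after substituting $\tilde{\beta}_t=(1-\bar{\alpha}_{t-1})\beta_t/(1-\bar{\alpha}_t)$ the per-step quantity collapses to the claimed $\bar{\alpha}_t R^2/(2(1-\bar{\alpha}_{t-1})^2)$. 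For $t=1$, since $\tilde{\beta}_1$ degenerates under the convention $\bar{\alpha}_0=1$, the plan is to handle this case separately using the variance prescribed by \eqref{eq:ho reverse process} at $t=1$; there the $\bmu$-formula simplifies to $\hat{\bx}_0^\bS(\bx_1)$, giving the isolated contribution $(1-\beta_1)R^2/(2\beta_1^2)$. Plugging the total MI bound into Proposition \ref{pro:generalization def} will then upper-bound the generalization term.

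The main obstacle will be tracking the schedule-dependent coefficients carefully enough that each summand reduces to exactly $\bar{\alpha}_t R^2/(2(1-\bar{\alpha}_{t-1})^2)$: a naive triangle inequality bound $\|\hat{\bx}_0^\bS-\hat{\bx}_0^{\bS'}\|\leq 2R$ only yields the looser denominator $(1-\bar{\alpha}_t)(1-\bar{\alpha}_{t-1})$, so one must exploit a variance-style estimate of $\mE\|\hat{\bx}_0^\bS-\hat{\bx}_0^{\bS'}\|^2$ that uses the convex-combination structure of the softmax weights. The $t=1$ term, where the prescribed reverse covariance degenerates, is a second subtle point that has to be treated as a separate boundary case rather than by naive specialization of the general-$t$ bound.
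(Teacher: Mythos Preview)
Your derivation of the closed form \eqref{eq:empirical optimal eps} is essentially the paper's: the paper writes the objective as an integral in the variable $\bx=\sqrt{\bar\alpha_t}\bx_0^i+\sqrt{1-\bar\alpha_t}\beps_t$, observes that for each fixed $\bx$ the integrand is strongly convex in $\beps_{\btheta}(\bx,t)$, and sets the gradient to zero. Your conditional-expectation/Bayes phrasing is just the probabilistic reading of that same computation.

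For the mutual-information bound the chain-rule reduction $I(\bx_0;\bS)\le\sum_{t=1}^T I(\bx_{t-1};\bS\mid\bx_t)$ is also what the paper does, but the per-step estimate is genuinely different. The paper does \emph{not} pass through a KL between two Gaussian kernels. It writes
\[
I(\bx_{t-1};\bS\mid\bx_t)=H(\bx_{t-1}\mid\bx_t)-H(\bx_{t-1}\mid\bS,\bx_t),
\]
computes the second term exactly as the entropy of the injected Gaussian noise, and bounds the first term by the maximum-entropy inequality (Gaussian maximizes entropy under a second-moment constraint). After shifting by the $\bx_t$-only part of $\bmu_{\btheta_\bS}$, the residual is $\tilde\beta_t\bxi_t+c_t\,\hat\bx_0^{\bS}(\bx_t)$, so only $\mE\|\hat\bx_0^{\bS}\|^2\le R^2$ is needed; then $\tfrac{d}{2}\log(1+x)\le \tfrac{d}{2}x$ produces exactly $\bar\alpha_t R^2/\bigl(2(1-\bar\alpha_{t-1})^2\bigr)$.

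Your KL-convexity route is valid, but it cannot be sharpened to the stated constant by the ``variance-style estimate'' you propose. In your bound the controlling quantity is $\mE\|\hat\bx_0^{\bS}-\hat\bx_0^{\bS'}\|^2$; the best you can extract from the convex-combination structure is $\|\hat\bx_0^{\bS}\|\le R$, hence $\mE\|\hat\bx_0^{\bS}-\hat\bx_0^{\bS'}\|^2=2\,\mathrm{Var}(\hat\bx_0^{\bS})\le 2R^2$, and this is tight (take $\hat\bx_0^{\bS}$ equal to $\pm R\,\be_1$ with equal probability). That leaves an unavoidable extra factor of $2$ per step relative to the theorem's statement. (A smaller imprecision: in the Jensen step the copy $\bS'$ is drawn from $P(\bS\mid\bx_t)$, not from the marginal law of $\bS$; this does not affect the uniform bound but is worth stating correctly.) If you want the constant exactly as written, switch to the paper's entropy/max-entropy argument, which needs only $\mE\|\hat\bx_0^{\bS}\|^2\le R^2$ rather than a bound on a difference. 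Your treatment of the $t=1$ boundary case is in line with the paper, which also isolates that term and obtains $(1-\beta_1)R^2/(2\beta_1^2)$ by using $\beta_1$ as the step-1 noise level.
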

	The proof of this theorem is in Appendix \ref{app:empirical optima}. The theorem indicates that the empirical optima of the noise prediction problem can have guaranteed generalization error when the R.H.S. of the above inequality is small. This happens when $1 / \beta_{1}$ is not extremely large, which requires when constructing noisy data $\{\bx_{t}\}$ the first $\bx_{1}$ should be pretty noisy according to \eqref{eq:forward process}.  
	\par
	Next, we use the following theorem to indicate that such empirical optima also converge to the optimal model $\mE[\beps_{t}\mid \bx_{t}]$ as discussed in \eqref{eq:eps}. Thus its ability to generate high-quality data is also guaranteed, as $\mE[\beps_{t}\mid \bx_{t}]$ minimizes $L_{\rm vb}$, which is an upper bound of KL-divergence between generated distribution and target one (measures optimization error). The following theorem is proved in \ref{app:empirical minima}. 
	\par
	\begin{restatable}{theorem}{convergenceoriginal}\label{thm:sample complexity original}
		Let $\beps_{\btheta_{\bS}}^{*}(\cdot, \cdot)$ be the model defined in \eqref{eq:empirical optimal eps}, then for any $t$, and $\bx_{t}$ with bounded norm, we have $\beps_{\btheta_{\bS}}^{*}(\bx_{t}, t)\overset{P}\longrightarrow \mE[\beps_{t}\mid \bx_{t}]$.  
	\end{restatable}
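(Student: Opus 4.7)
The plan is to recognize $\beps_{\btheta_{\bS}}^{*}(\bx_t,t)$ as (a linear function of) a Nadaraya--Watson kernel estimator with the Gaussian kernel $K(\bx_t,\bx_0)=\exp\!\left(-\tfrac{\|\bx_t-\sqrt{\bar\alpha_t}\bx_0\|^2}{2(1-\bar\alpha_t)}\right)$, and then show this estimator converges to $\mE[\bx_0\mid\bx_t]$ via a law-of-large-numbers argument on the numerator and denominator separately, followed by Slutsky's theorem. Comparing with \eqref{eq:eps} will then give the claim.

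First, I would rewrite \eqref{eq:empirical optimal eps} as
\begin{equation*}
\beps_{\btheta_{\bS}}^{*}(\bx_t,t)=\frac{\bx_t}{\sqrt{1-\bar\alpha_t}}-\frac{\sqrt{\bar\alpha_t}}{\sqrt{1-\bar\alpha_t}}\,\frac{\tfrac{1}{n}\sum_{i=1}^n K(\bx_t,\bx_0^i)\,\bx_0^i}{\tfrac{1}{n}\sum_{i=1}^n K(\bx_t,\bx_0^i)},
\end{equation*}
and rewrite \eqref{eq:eps} as $\mE[\beps_t\mid\bx_t]=\tfrac{\bx_t}{\sqrt{1-\bar\alpha_t}}-\tfrac{\sqrt{\bar\alpha_t}}{\sqrt{1-\bar\alpha_t}}\mE[\bx_0\mid\bx_t]$, so it suffices to prove the ratio converges in probability to $\mE[\bx_0\mid\bx_t]$.

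Next, since $\cX$ is bounded and $0<K(\bx_t,\bx_0)\le 1$, both $K(\bx_t,\bx_0^i)$ and $K(\bx_t,\bx_0^i)\bx_0^i$ are i.i.d. bounded random variables (with $\bx_t$ fixed). The weak law of large numbers then gives
\begin{equation*}
\frac{1}{n}\sum_{i=1}^n K(\bx_t,\bx_0^i)\bx_0^i\xrightarrow{P}\mE_{\bx_0\sim P_0}[K(\bx_t,\bx_0)\bx_0],\qquad \frac{1}{n}\sum_{i=1}^n K(\bx_t,\bx_0^i)\xrightarrow{P}\mE_{\bx_0\sim P_0}[K(\bx_t,\bx_0)].
\end{equation*}
The denominator limit is strictly positive because $K>0$ everywhere on $\cX$. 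By Bayes' rule, combined with \eqref{eq:noise schedule} which yields $p(\bx_t\mid\bx_0)\propto K(\bx_t,\bx_0)$ in $\bx_0$, the ratio of these population expectations equals $\int \bx_0\,p(\bx_0\mid\bx_t)\,d\bx_0=\mE[\bx_0\mid\bx_t]$; the normalizing constant in $p(\bx_t\mid\bx_0)$ cancels between numerator and denominator.

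Finally, Slutsky's theorem (continuous mapping applied to division, valid because the denominator limit is bounded away from zero) gives that the ratio converges in probability to $\mE[\bx_0\mid\bx_t]$. Multiplying by the deterministic factor $-\sqrt{\bar\alpha_t}/\sqrt{1-\bar\alpha_t}$ and adding $\bx_t/\sqrt{1-\bar\alpha_t}$ preserves convergence in probability, yielding $\beps_{\btheta_{\bS}}^{*}(\bx_t,t)\xrightarrow{P}\mE[\beps_t\mid\bx_t]$. The only subtlety worth being careful about is ensuring the denominator's population limit is uniformly bounded below for $\bx_t$ with bounded norm so that Slutsky applies cleanly; this follows from the compactness of $\cX$ and continuity of $K$ in $\bx_0$, which make the lower bound $\inf_{\bx_0\in\cX}K(\bx_t,\bx_0)$ strictly positive for each such $\bx_t$.
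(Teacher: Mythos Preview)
Your proposal is correct and considerably more elementary than the paper's proof. You fix $\bx_t$, apply the weak law of large numbers directly to the i.i.d.\ bounded summands $K(\bx_t,\bx_0^i)$ and $K(\bx_t,\bx_0^i)\bx_0^i$, identify the population limits via Bayes' rule, and finish with Slutsky. This is exactly what the pointwise statement requires, and your observation that the normalizing constant in $p(\bx_t\mid\bx_0)$ cancels is the key algebraic step.

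The paper proceeds differently: it first invokes Tweedie's formula to rewrite $\mE[\beps_t\mid\bx_t]=-\sqrt{1-\bar\alpha_t}\,\nabla_{\bx}\log P_t(\bx_t)$ and correspondingly rewrites $\beps_{\btheta_{\bS}}^{*}(\bx_t,t)=-\sqrt{1-\bar\alpha_t}\,\nabla_{\bx}f_{\bS}(\bx_t)/f_{\bS}(\bx_t)$, where $f_{\bS}$ is the Gaussian-kernel density estimate of $P_t$. It then proves \emph{uniform} convergence of $f_{\bS}\to P_t$ and $\nabla f_{\bS}\to\nabla P_t$ over $\{\|\bx_t\|\le D\}$ via McDiarmid's inequality combined with a sub-Gaussian process/covering-number argument (using Lipschitz bounds on the kernel and its gradient), and only then applies Slutsky. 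This buys a uniform-in-$\bx_t$ conclusion, which is stronger than the theorem as stated; your pointwise LLN argument is entirely sufficient for the claim and avoids the machinery of Lipschitz estimates, bounded-difference concentration, and metric entropy. The trade-off is that the paper's route would extend immediately to a $\sup_{\|\bx_t\|\le D}$ statement, while yours would not without additional work.
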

	\par
	Combining Theorem \ref{pro:generalization of ddpm} and \ref{thm:sample complexity original}, we conclude that for a training set with a sufficiently large number, the DDPM can have guaranteed excess risk (under small $\beta_{1}$) so that generating high-quality data with a small dependence on the training set. However, the transition rule of DDPM is low efficient owing to a large $T$ in practice, e.g., 1000 in \citep{ho2020denoising}. Because getting every $\bx_{t}$ during generation requires taking a forward propagation of learned model $\beps_{\btheta_{\bS}}$, which takes plenty of computational costs. Researchers have proposed a deterministic reverse process (e.g., DDIM \citep{song2020denoising}), which can generate high-quality data with fewer steps during its reverse process.
	\par
	Unfortunately, as can be seen in \eqref{eq:empirical optimal eps}, the empirical optima $\beps_{\btheta_{\bS}}^{*}(\bx, t)$ takes the form of a linear combination of difference between the $\bx$ and training set. Thus, any deterministic reverse process to generate $\bx_{t}$ will make the generated data highly dependent on the training set, then poor generalization. The formal results is stated in the following proposition.   
	\begin{restatable}{proposition}{genofdeterministic}\label{pro:gen of deterministic}
		If the transition rule of the diffusion model takes the form of $\bx_{t - 1} = f(\beps_{\btheta_{\bS}}^{*}, \bx_{t}, t)$ for some deterministic $f$. Then the generalization error of the diffusion model is infinity. 
	\end{restatable}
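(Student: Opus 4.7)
My plan is to exploit the explicit softmax form of the empirical optimum $\beps^{*}_{\btheta_{\bS}}$ in \eqref{eq:empirical optimal eps} to show that the generated sample $\bz = \bx_{0}$ is almost surely confined to a set measurably determined by $\bS$, and then to use that $\cF$ is a cone of continuous test functions to conclude that the IPM defining the generalization error in \eqref{eq:decom} is $+\infty$.

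First I would rewrite \eqref{eq:empirical optimal eps} in data-prediction form. Defining the softmax weights
\begin{equation*}
w_{i}^{t}(\bx) \coloneqq \frac{\exp\!\bigl(-\|\bx - \sqrt{\bar\alpha_{t}}\,\bx_{0}^{i}\|^{2}/(2(1-\bar\alpha_{t}))\bigr)}{\sum_{j=1}^{n}\exp\!\bigl(-\|\bx - \sqrt{\bar\alpha_{t}}\,\bx_{0}^{j}\|^{2}/(2(1-\bar\alpha_{t}))\bigr)} > 0, \qquad \sum_{i=1}^{n}w_{i}^{t}(\bx) = 1,
\end{equation*}
a direct algebraic rearrangement of \eqref{eq:empirical optimal eps} yields the data predictor
\begin{equation*}
\hat{\bx}_{0}(\bx, t; \bS) \coloneqq \frac{\bx - \sqrt{1-\bar\alpha_{t}}\,\beps^{*}_{\btheta_{\bS}}(\bx, t)}{\sqrt{\bar\alpha_{t}}} = \sum_{i=1}^{n} w_{i}^{t}(\bx)\,\bx_{0}^{i} \in \mathrm{conv}(\bS).
\end{equation*}
Because the reverse iteration $\bx_{t-1} = f(\beps^{*}_{\btheta_{\bS}}, \bx_{t}, t)$ is deterministic, $\bz = F(\bx_{T}, \bS)$ is a Borel function of the noise seed $\bx_{T}$ and $\bS$, and the standard final step of every such sampler (e.g.\ the $t = 1$ step of DDIM or the probability-flow ODE) reduces to $\bz = \hat{\bx}_{0}(\bx_{1}, 1; \bS) \in \mathrm{conv}(\bS)$, so $\bz \in \mathrm{conv}(\bS)$ almost surely under the conditional $P_{\bz \mid \bS}$.

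Next I would exhibit a continuous witness $g \in \cF$ separating $\mu \coloneqq P_{\bz_{\btheta_{\bS}} \times \bS}$ from $\nu \coloneqq P_{\bz_{\btheta_{\bS}}} \times P_{\bS}$. A natural candidate is $g(\bz, \bS) \coloneqq \dist\!\bigl(\bz, \mathrm{conv}(\bS)\bigr)$, which is jointly continuous on $\cX \times \cX^{n}$. The preceding step forces $\mE_{\mu}[g] = 0$. Under $\nu$ the pair $(\bz, \bS)$ is independent; $\bz$ is distributed as the marginal generated from an independent training set $\bS^{\prime} \sim P_{0}^{\otimes n}$, and the event $\bz \notin \mathrm{conv}(\bS)$ has positive probability (see the last paragraph), so $\mE_{\nu}[g] > 0$. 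Because $\cF$ is closed under scalar multiplication, every $cg$ with $c > 0$ lies in $\cF$, so $D_{\cF}(\mu, \nu) \geq c\,\mE_{\nu}[g] \to \infty$, which is the claimed conclusion.

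The main obstacle is the geometric fact that $\bz$ drawn from one training set escapes the convex hull of an independent one with positive probability. For $P_{0}$ supported on a set with non-empty interior in $\bbR^{d}$ with $d \geq 2$ this is elementary, since $\mathrm{conv}(\bS)$ is almost surely a proper subset of the support and the marginal of $\bz$ charges its complement. If one prefers to dispense with assumptions on $P_{0}$, one may alternatively take $g$ directly from the softmax structure of $\beps^{*}_{\btheta_{\bS}}(\bz, 1)$, which already encodes the full set $\bS$, and conclude by the same cone-of-test-functions argument.
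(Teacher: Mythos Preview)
Your approach differs from the paper's in two respects. First, you attack the IPM $D_{\cF}(P_{\bz\times\bS},P_{\bz}\times P_{\bS})$ of \eqref{eq:decom} directly with a witness function, whereas the paper shows that the mutual information $I(\bx_{0};\bS)$ is infinite, which is the quantity governing generalization in Proposition~\ref{pro:generalization def}. Second, you exploit the geometric observation that for DDIM-type samplers the terminal step outputs exactly the data predictor $\hat{\bx}_{0}(\bx_{1},1;\bS)=\sum_i w_i^{1}(\bx_1)\,\bx_0^i$, so $\bz\in\mathrm{conv}(\bS)$ almost surely; then $g(\bz,\bS)=\dist(\bz,\mathrm{conv}(\bS))$ separates the joint from the product, and the cone structure of $\cF$ forces the IPM to $+\infty$.

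The convex-hull step is where your argument loses the generality of the statement. The proposition allows an \emph{arbitrary} deterministic transition $f$, and for such $f$ the terminal output need not equal $\hat{\bx}_{0}(\bx_1,1;\bS)$, so $\bz\in\mathrm{conv}(\bS)$ is not guaranteed. Your proof therefore covers DDIM and the probability-flow ODE but not the proposition as written. The paper's route is shorter and sidesteps this: it only uses that $\bx_0=F(\bx_T,\bS)$ for a deterministic $F$ that is non-degenerate in $\bS$ (the softmax weights carry the full training set), then applies the chain rule
\[
I(\bx_0;\bS)=I(F(\bx_T,\bS);\bS\mid\bx_T)+I(\bx_T;\bS).
\]
The second summand vanishes because $\bx_T$ is Gaussian noise independent of $\bS$; the first is infinite because, conditionally on $\bx_T$, the output is a deterministic non-constant function of the continuous variable $\bS$. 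This avoids both the sampler-specific convex-hull claim and the $\bz\notin\mathrm{conv}(\bS')$ positivity fact you flag as an obstacle. Your alternative in the last paragraph---building $g$ from the softmax structure of $\beps^{*}_{\btheta_{\bS}}$ itself rather than from $\mathrm{conv}(\bS)$---is closer in spirit to what is needed for arbitrary $f$, but as written it is only a sketch.
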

	To clarify the poor generalization, we take DDIM \citep{song2020denoising} as an example. The $\bx_{t - 1}$ in DDIM is generating via a linear combination of $\bx_{t}$ and $\beps_{\btheta_{\bS}}^{*}(\bx_{t}, t)$, which results in the generated $\bx_{0}$ must be a linear combination of training set. Clearly, we do not want such generated data as they only depend on the training set. Compared with DDPM, the guaranteed generalization of DDPM \eqref{eq:ho reverse process} originates the injected noise during generating process, which decreases the dependence between $\bx_{t}$ and the training set. 
	\par
	The rationale for causing such a problem is the optimal model $\beps_{\btheta}^{*}$ in \eqref{eq:eps} involves conditional expectation $\mE[\bx_{0}\mid \bx_{t}]$, and we require training set to estimate it. Since  
	\begin{equation}\label{eq:estimating conditional expectation}
		\small
		\mE[\bx_{0}\mid \bx_{t}] = \int_{\cX}\bx_{0}P(\bx_{0}\mid \bx_{t})d\bx_{0} = \int_{\cX}\bx_{0}\frac{P(\bx_{t}\mid \bx_{0})}{P(\bx_{t})}P(\bx_{0})d\bx_{0} \approx \frac{1}{n}\sum\limits_{i=1}^{n}\bx^{i}_{0}\frac{P(\bx_{t}\mid \bx^{i}_{0})}{\hat{P}(\bx_{t})},
	\end{equation}
	with $\hat{P}(\bx_{t})$ as a proper estimation to $P(\bx_{t})$, the estimator can be easily highly related to the training set. This can be verified by combining \eqref{eq:eps} and \eqref{eq:empirical optimal eps}. 
	%	Finally, we point out that our analysis also holds for continuous diffusion model \citep{song2020score}, which estimates $\nabla_{\bx}\log{P_{t}(\bx_{t})}$ instead of estimating $\beps_{t}$ as its discrete counterpart. The claim is because $\mE[\bx_{0}\mid \bx_{t}] = \frac{1}{\sqrt{\bar{\alpha}_{t}}}(\bx_{t} + (1 - \bar{\alpha}_{t})\nabla_{\bx}\log{P_{t}(\bx_{t})})$ due to Lemma 11 in \citep{bao2022analytic}. Thus the transition rule involves $\nabla_{\bx}\log{P_{t}(\bx_{t})}$ may also potentially deteriorate the generalization property of generative model. 
	\section{The Optimization Bias Improves Generalization}\label{sec:optimization bias}
	\begin{figure*}[t!]\centering
		%		\begin{subfloature}[b]{0.49\textwidth}
			%			\includegraphics[width=\textwidth]{./figures/hard_loss.pdf}
			%			\caption{MMEL-H.}
			%			\label{fig:hard}
			%		\end{subfloature}
		\subfloat[Averaged distance $\|\bx_{t} - \bx_{t}^{*}\|^{2}$ per dimension. \label{fig:distance}]{
			\includegraphics[width=0.45\textwidth]{./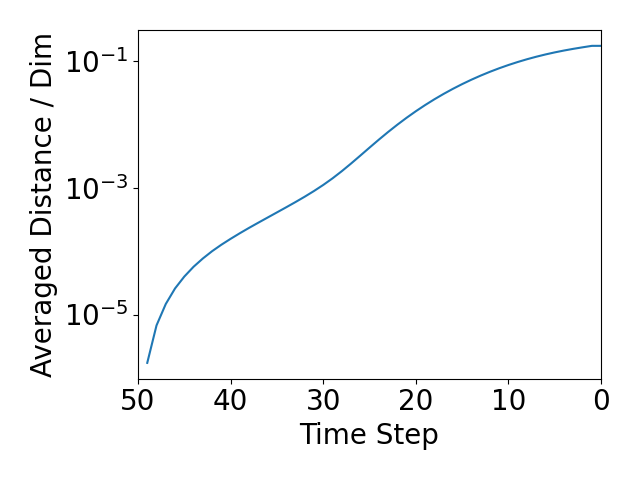}}
		\hspace{0.1in}
		\subfloat[Generated $\bx_{t}$ and $\bx^{*}_{t}$ \label{fig:genereated data}]{
			\includegraphics[width=0.5\textwidth]{./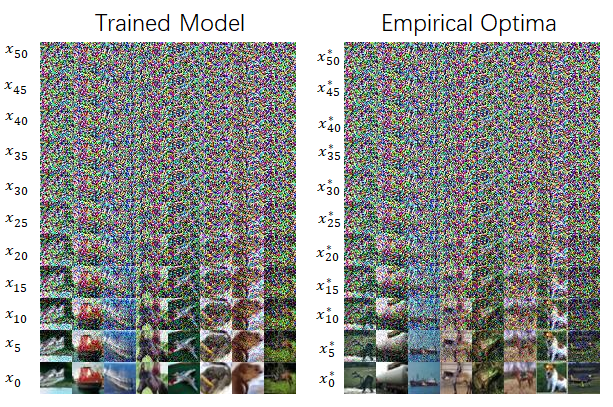}}
		%		\begin{subfloature}[b]{0.49\textwidth}
			%			\includegraphics[width=\textwidth]{./figures/soft_loss.pdf}
			%			\caption{MMEL-S.}
			%			\label{fig:soft}
			%		\end{subfloature}
		\caption{The first figure is the averaged distance $\|\bx_{t} - \bx_{t}^{*}\|$ per dimension (3$\times$32$\times$32) over 50k samples of generated \texttt{CIFAR10}. The second figure randomly samples a batch of $\bx_{t}$ and $\bx_{t}^{*}$ with the same $\bx_{T}=\bx_{T}^{*}$ and $T=50$.
			% 		Minimize the maximal expectation loss
		}
		\label{fig:comparsion}
	\end{figure*}
	As we have claimed above, the empirical optima have a potential generalization problem. Unfortunately, this problem may be transferred to the sufficiently trained model as it approximates the empirical optima. Thus in this section, we explore whether the sufficiently trained model has generalization problem. 
	\subsection{The Optimization Bias Regularizes Diffusion Model}
	Fortunately, we have the explicit formulation of empirical optima as in \eqref{eq:empirical optimal eps}. Thus we can directly compare it with a sufficiently trained model. As we have claimed, when generating data with a deterministic reverse process, the empirical optima may generate data highly related training set. We explore it and verify whether this happens to the trained model.  
	\par
	Following the pipeline in \citep{ho2020denoising}, we train a deep neural network, i.e., Unet \citep{ronneberger2015u} on an image data set \texttt{CIFAR10} \citep{krizhevsky2009learning} to verify the difference. We use 50-steps deterministic reverse process DDIM \footnote{The 50-step DDIM used here follows the one in \citep{song2020denoising}. After training a diffusion model build on $\bx_{0:1000}$, then using $\bx_{\lceil ci\rceil}$ with $i=1,\cdot, 50$ and $c=25$, as the new $\{\bx_{t}\}$.} as in \citep{song2020denoising} such that 
	\begin{equation}
		\small
		\bx_{t + 1} = \sqrt{\bar{\alpha}_{t - 1}}\left(\frac{\bx_{t} - \sqrt{1 - \bar{\alpha}_{t}}\beps_{\btheta(\bx_{t}, t)}}{\sqrt{\bar{\alpha}_{t}}}\right) + \sqrt{1 - \bar{\alpha}_{t - 1}}\beps_{\btheta}(\bx_{t}, t)
	\end{equation}
	to generate data. Let $\bx_{t}$ and $\bx_{t}^{*}$ respectively be the data generated by our trained model and the empirical optima. That means substituting $\beps_{\btheta}$ in the above equation with the trained model and empirical optima. We randomly sample 50K standard Gaussian and feed them into our trained and empirical optimal diffusion models. To check the difference, we summarize the averaged $l_{2}$-distance $\|\bx_{t} - \bx_{t}^{*}\|^{2}$ per dimension over the 50K iterates in Figure \ref{fig:distance}. We also randomly sample some iterates $\{\bx_{t}\}$ and $\{\bx_{t}^{*}\}$ to visualize the difference in Figure \ref{fig:genereated data}.   
	\par
	As can be seen, the distance between $\bx_{t}$ and $\bx_{t}^{*}$ increased with time step $t$. This is a natural result, as there is a gap between the trained model and the empirical optima owing to the bias brought by the optimization process. Then the difference will cumulatively increase, resulting in different generated data as shown in Figure \ref{fig:genereated data}. Thus we can conclude that the optimization bias regularizes the trained model to perfectly fit empirical optima, which instead potentially obviates the generalization problem.    
	\subsection{The Optimization Bias Helps Extrapolating}\label{sec:The Optimization Bias Helps Extrapolating}
	\begin{figure*}[t!]\centering
		%		\begin{subfloature}[b]{0.49\textwidth}
			%			\includegraphics[width=\textwidth]{./figures/hard_loss.pdf}
			%			\caption{MMEL-H.}
			%			\label{fig:hard}
			%		\end{subfloature}
		\subfloat[The generted data starting from noisy data in test set of \texttt{CIFAR10}. \label{fig:train}]{
			\includegraphics[width=0.48\textwidth]{./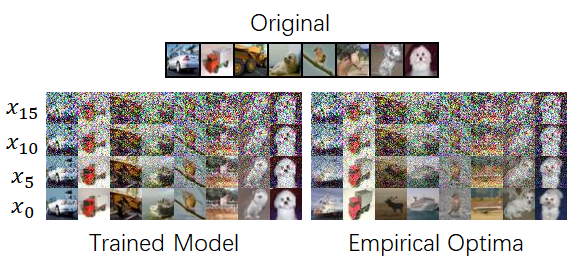}}
		\hspace{0.1in}
		\subfloat[Generted data started from noisy data in the training set.\label{fig:test}]{
			\includegraphics[width=0.48\textwidth]{./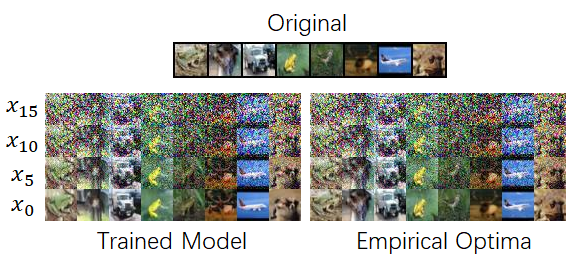}}
		%		\begin{subfloature}[b]{0.49\textwidth}
			%			\includegraphics[width=\textwidth]{./figures/soft_loss.pdf}
			%			\caption{MMEL-S.}
			%			\label{fig:soft}
			%		\end{subfloature}
		\caption{The data in the two top figures are $\bx_{0}$ respectively from test and training sets of \texttt{CIFAR10}. The bottom are the data generated by the trained model (left) and empirical optima (right).
			% 		Minimize the maximal expectation loss
		}
		\label{fig:generated data with initial}
	\end{figure*}
	Though the optimization process implicitly regularizes the trained model to generate data with the one generated by empirical optima. We should examine whether the two models will generate data that existed in the training set. Unfortunately, we observe that nearly all data generated by empirical optima exist in the training set, which also verifies our conclusion that the model has a generalization problem. On the other hand, for the trained model, we also compare the nearest data in the training set with its generated data to examine its generalization. Fortunately, we found that nearly all data does not appear in the training set. Thus, the optimization bias guarantees the extrapolation ability of the model. This phenomenon is shown in Appendix \ref{app:accu}.  
	\par
	To further verify the extrapolation ability of the trained diffusion model, we explicitly show that it can generate data that does not exist in the training set. Instead of starting the reverse process from $\bx_{50}$, we use $\bx_{15}$ ($\bx_{15} = 0.6678\bx_{0} + 0.7743\beps_{t}$) as an initial point so that we can check the generating process more clear. The $\bx_{0}$ we choose to $\bx_{15}$ are from the test set, and a batch of generated data is in Figure \ref{fig:test}. As can be seen, the trained diffusion model nearly recovers the original data from the noisy ones. Nevertheless, for empirical optima, though starting from the same points, it can not recover the original unseen test data. By the way, decreasing the total sampling steps does change the result since the accumulated bias depends on the distance between $\bx_{T}$ and $\bx_{0}$ (see \eqref{eq:forward process}) which does influence by the number of sampling steps.  
	\par
	It has been observed in \citep{carlini2023extracting} that the trained diffusion model occasionally generates data close to the one in the training set. Even though we think this does not threaten the extrapolation ability of the diffusion model. We conduct another experiment to explain such a phenomenon. Similar to the generating process in Figure \ref{fig:test}, we generated data by the diffusion model and empirical optima, starting from $\bx_{15}$ but with $\bx_{0}$ drawn from the training set.
	\par
	A batch of generated data is in Figure \ref{fig:train}. As can be seen, both the trained diffusion model and empirical optima can recover the original data from the noisy one. This explains why the diffusion model generates data in the training set. The generating happens if the reverse process moves to $\bx_{t}$ around noisy data close to the one potentially constructed by the training set (like the $\bx_{15}$) when $t$ is close to zero. The repeating generation is because the gap between generated data caused by the optimization bias of the trained diffusion model does not accumulate enough to regularize the training process. However, we think such repeating could hardly happen as $\bx_{t}$ locates in high-dimensional space, so noisy data generated by the training set is sparse in its support \citep{wainwright2019high}. Oppositely, with enough accumulated bias, we observe that such a phenomenon does not happen when taking $t=50$ as in Figure \ref{fig:genereated data} even though with $\bx_{50}$ generated by the training set. We verify it in Appendix \ref{app:accu}.    
	\par
	Finally, we point out that this empirically observed phenomenon also holds for reverse process DDPM \citep{ho2020denoising}. As we have shown in Theorem \ref{pro:generalization of ddpm}, the generalization problem is resolved when $\beta_{1}$ is large, which does not hold for the one of DDPM ($\beta_{1}=0.0001$). 
	\section{Estimating Previous Status Improves Generalization}\label{sec:estimating forward}
	As we have pointed out in \eqref{eq:estimating conditional expectation}, the potentially broken generalization property of the diffusion model originates from estimating $\mE[\beps_{t}\mid \bx_{t}]$ (equivalent to estimating $\mE[\bx_{0}\mid \bx_{t}]$), which may lead the generated data highly related to the training set. Though this phenomenon can be mitigated by the optimization bias. We propose another training objective to get a diffusion model and generate data. Unlike the one of \eqref{eq:empirical objective}, the empirical optima of our proposed training objective mitigate the potential generalization problem.
	\par
	Actually, we can rewrite the Proposition \ref{pro:argmin} such that 
	\begin{proposition}
		For $\bmu_{\btheta}(\bx_{t}, t)$ with enough functional capacity, then 
		\begin{equation}\label{eq:optimal transition probability xtm1}
			\small
			\argmin_{\bmu_{\btheta}(\bx_{t}, t)} L_{t - 1} = \mE[\bx_{t - 1}\mid \bx_{t}]; \qquad \argmin_{\Sigma_{\btheta}(\bx_{t}, t)} L_{t - 1} = \tilde{\beta}_{t}.  
		\end{equation}
	\end{proposition}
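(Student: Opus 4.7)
The plan is to deduce this proposition almost immediately from Proposition \ref{pro:argmin}, by showing that the two proposed minimizers $\tilde{\bmu}_t(\bx_t, \mE[\bx_0\mid \bx_t])$ and $\mE[\bx_{t-1}\mid \bx_t]$ are in fact the same function of $\bx_t$. The variance statement $\argmin_{\Sigma_\btheta} L_{t-1} = \tilde{\beta}_t$ is identical in both propositions, so it transfers verbatim and requires no additional work.

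First, I would recall the forward posterior identified earlier in the paper, $P(\bx_{t-1}\mid \bx_t, \bx_0) = \cN(\tilde{\bmu}_t(\bx_t, \bx_0), \tilde{\beta}_t \bI)$. In particular, $\mE[\bx_{t-1}\mid \bx_t, \bx_0] = \tilde{\bmu}_t(\bx_t, \bx_0)$. Then I would apply the tower property of conditional expectation conditioned on $\bx_t$:
\begin{equation*}
\small
\mE[\bx_{t-1}\mid \bx_t] \;=\; \mE\!\left[\mE[\bx_{t-1}\mid \bx_t, \bx_0]\,\big|\,\bx_t\right] \;=\; \mE\!\left[\tilde{\bmu}_t(\bx_t, \bx_0)\,\big|\,\bx_t\right].
\end{equation*}
The crucial step is now to invoke the affine dependence of $\tilde{\bmu}_t(\bx_t, \bx_0)$ on $\bx_0$ (holding $\bx_t$ fixed, it is just $a_t \bx_0 + b_t \bx_t$ for deterministic scalars $a_t, b_t$ as written in the paper). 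Linearity of expectation then gives $\mE[\tilde{\bmu}_t(\bx_t, \bx_0)\mid \bx_t] = \tilde{\bmu}_t(\bx_t, \mE[\bx_0\mid \bx_t])$, so that $\mE[\bx_{t-1}\mid \bx_t] = \tilde{\bmu}_t(\bx_t, \mE[\bx_0\mid \bx_t])$.

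Finally, combining this identity with Proposition \ref{pro:argmin}, which says $\argmin_{\bmu_\btheta(\bx_t,t)} L_{t-1} = \tilde{\bmu}_t(\bx_t, \mE[\bx_0\mid \bx_t])$ under enough functional capacity, yields the desired reformulation $\argmin_{\bmu_\btheta(\bx_t,t)} L_{t-1} = \mE[\bx_{t-1}\mid \bx_t]$. The variance claim is inherited directly.

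There is not really a hard step here; the proposition is a rewriting rather than a new result. The only point that needs care is making the linearity argument rigorous: since $\bx_t$ is measurable with respect to the conditioning $\sigma$-algebra, the coefficient $b_t \bx_t$ pulls out of the inner conditional expectation, and the coefficient $a_t$ is deterministic, which is what lets $\tilde{\bmu}_t$ pass through $\mE[\cdot \mid \bx_t]$. Once that is spelled out, the equivalence with Proposition \ref{pro:argmin} is immediate and the proof is complete.
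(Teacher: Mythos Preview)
Your proposal is correct and matches the paper's approach: the paper presents this proposition explicitly as a rewriting of Proposition~\ref{pro:argmin}, and your tower-property-plus-linearity argument is exactly the computation that justifies that rewriting (the paper leaves this implicit, so your write-up is, if anything, more complete).
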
 
	As can be seen, in contrast to the transition probability rule in \eqref{eq:optimal transition probability}, the new rule does not involve $\mE[\bx_{0}\mid \bx_{t}]$, so that it potentially obviates the generalization problem. Naturally, we may consider solving  $\inf_{\bx_{\btheta}}\mE[\|\bx_{t - 1} - \bx_{\btheta}(\bx_{t}, t)\|^{2}]$ ($\bx_{\btheta}(\cdot, \cdot)$ is the parameterized diffusion model) to get $\mE[\bx_{t - 1} \mid \bx_{t}]$, as it is the solution of the minimization problem. However, practically, we found that the minimizing $\mE[\|\bx_{t - 1} - \bx_{\btheta}(\bx_{t}, t)\|^{2}]$ is unstable. We speculate this is due to the $\bx_{t}$ and $\bx_{t - 1}$ are so close which makes $\bx_{\btheta}(\bx_{t}, t)$ rapidly converges to identity map of $\bx_{t}$ for each $t$. 
	\par
	Owing to the aforementioned training problem, we consider another method to estimate the $\mE[\bx_{t - 1} \mid \bx_{t}]$. Suppose that  
	\begin{equation}\label{eq:forward process}
		\small
		\bx_{t} = \sqrt{\frac{\bar{\alpha}_{t}}{\bar{\alpha}_{s}}}\bx_{s} + \sqrt{1 - \frac{\bar{\alpha}_{t}}{\bar{\alpha}_{s}}}\bxi_{t,s},
	\end{equation}
	and $\bar{\alpha}_{t} / \bar{\alpha}_{s} = r_{t, s}$, $\bxi_{t, s}\sim\cN(0, \bI)$. Then 
	\begin{equation}
		\small
		\mE\left[\bx_{t - 1}\mid \bx_{t}\right] = \frac{1}{\sqrt{r_{t, t - 1}}}\bx_{t} - \sqrt{\frac{1 - r_{t, t - 1}}{r_{t, t - 1}}}\mE\left[\bxi_{t, t - 1} \mid \bx_{t}\right].
	\end{equation}
	Thus estimating $\mE\left[\bx_{t - 1}\mid \bx_{t}\right]$ is equivalent to estimating $\mE\left[\bxi_{t, t - 1} \mid \bx_{t}\right]$. To get it, we have the following lemma, which is known as Tweedie’s formula \citep{efron2011tweedie}. 
	\begin{restatable}{lemma}{scoreequivalent}\label{lem:equivalence}
		For and $s < t$, we have $\frac{\mE[\bxi_{t, s}\mid \bx_{t}]}{\sqrt{1 - r_{t, s}}} = \frac{\mE\left[\bxi_{t, t - 1} \mid \bx_{t}\right]}{\sqrt{1 - r_{t, t - 1}}} = -\nabla_{\bx_{t}}\log{P_{t}}(\bx_{t})$.
	\end{restatable}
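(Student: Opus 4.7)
The plan is to recognize this as a direct instance of Tweedie's formula applied to the forward process \eqref{eq:forward process}. For any fixed $s < t$, that equation writes $\bx_t$ as a rescaled version of $\bx_s$ plus independent Gaussian noise of variance $1 - r_{t,s}$. Hence the marginal density satisfies
\begin{equation*}
P_t(\bx_t) = \int P_s(\bx_s)\, (2\pi(1 - r_{t,s}))^{-d/2} \exp\!\left(-\frac{\|\bx_t - \sqrt{r_{t,s}}\bx_s\|^2}{2(1 - r_{t,s})}\right) d\bx_s.
\end{equation*}
The core step is to differentiate inside the integral (justified since the Gaussian kernel and its gradient are bounded after integration against $P_s$, which has bounded support in $\cX$) and then divide by $P_t(\bx_t)$ to rewrite the resulting ratio as a posterior expectation.

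Concretely, I would compute
\begin{equation*}
\nabla_{\bx_t} \log P_t(\bx_t) = \frac{1}{P_t(\bx_t)} \int -\frac{\bx_t - \sqrt{r_{t,s}}\bx_s}{1 - r_{t,s}}\, P_s(\bx_s)\, \cN(\bx_t;\sqrt{r_{t,s}}\bx_s,(1-r_{t,s})\bI)\, d\bx_s.
\end{equation*}
Bayes' rule identifies the integrand divided by $P_t(\bx_t)$ with the conditional density $P(\bx_s \mid \bx_t)$. Thus the right-hand side becomes $-\frac{1}{1 - r_{t,s}}\mE[\bx_t - \sqrt{r_{t,s}}\bx_s \mid \bx_t]$. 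Finally, substituting $\bx_t - \sqrt{r_{t,s}}\bx_s = \sqrt{1 - r_{t,s}}\,\bxi_{t,s}$ from \eqref{eq:forward process} yields
\begin{equation*}
\nabla_{\bx_t}\log P_t(\bx_t) = -\frac{1}{\sqrt{1 - r_{t,s}}}\,\mE[\bxi_{t,s} \mid \bx_t],
\end{equation*}
which is exactly the claimed identity after rearranging. Because this derivation is valid for every $s < t$ (including the special choice $s = t - 1$), both expressions $\mE[\bxi_{t,s}\mid \bx_t]/\sqrt{1 - r_{t,s}}$ and $\mE[\bxi_{t,t-1}\mid \bx_t]/\sqrt{1 - r_{t,t-1}}$ equal the same score $-\nabla_{\bx_t}\log P_t(\bx_t)$, establishing the chain of equalities in a single stroke.

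The only technical subtlety, and the part I expect to require the most care, is the justification for exchanging gradient and integral. Since $\cX$ is bounded and $1 - r_{t,s} > 0$, the Gaussian factor and its derivative are uniformly bounded on compact sets of $\bx_t$, so dominated convergence applies cleanly; nothing deeper than this routine regularity argument is needed. Beyond that, the proof is essentially algebraic manipulation of the Gaussian kernel, so no significant obstacle remains.
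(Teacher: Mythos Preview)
Your proposal is correct and follows essentially the same approach as the paper: both arguments apply Tweedie's formula to the Gaussian relation $\bx_t = \sqrt{r_{t,s}}\bx_s + \sqrt{1-r_{t,s}}\,\bxi_{t,s}$ to obtain $\mE[\bxi_{t,s}\mid\bx_t] = -\sqrt{1-r_{t,s}}\,\nabla_{\bx_t}\log P_t(\bx_t)$, and then note that the resulting expression is independent of $s$. The only difference is cosmetic: the paper invokes Tweedie's formula (its Lemma~\ref{lem:score}) as a black box, whereas you unpack it explicitly via differentiation under the integral and Bayes' rule.
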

	From the above lemma, we know that estimating $\mE\left[\bxi_{t, t - 1} \mid \bx_{t}\right]$ is equivalent to estimate $\mE[\bxi_{t, s}\mid \bx_{t}]$ for any $0 \geq s < t$, but the difference between $\bx_{t}$ and $\bx_{s}$ can be large when $s$ is far away from $t$. We empirically find that a large gap benefits the optimization process. Thus our training objective becomes 
	\begin{equation}\label{eq:our objective population}
		\small
		\inf_{\bxi_{\btheta}}\sum_{t=1}^{T}\mE_{s}\left[\mE_{\bx_{s}, \bxi_{t, s}}\left[\left\|\frac{\bxi_{t, s}}{\sqrt{1 - r_{t, s}}} - \bxi_{\btheta}(\sqrt{r_{t, s}}\bx_{s} + \sqrt{1 - r_{t, s}}\bxi_{t, s}, t)\right\|^{2}\right]\right],
	\end{equation}
	where $s$ follows any distribution, e.g., uniform in $\{0,\cdots, T - 1\}$, and $\bxi_{\btheta}$ is the final parameterized diffusion model. This can be done as for any specific $t$ and $s$, the problem of minimizing $\mE_{\bx_{s}, \bxi_{t, s}}\left[\|\bxi_{t, s} / \sqrt{1 - r_{t, s}} - \bxi_{\btheta}(\sqrt{r_{t, s}}\bx_{s} + \sqrt{1 - r_{t, s}}\bxi_{t, s}, t)\|^{2}\right]$ has common global optima $\mE[\bxi_{t, t - 1}\mid \bx_{t}] / \sqrt{1 - r_{t, t - 1}}$ due to Lemma \ref{lem:equivalence} and \citep{banerjee2005optimality}. Practically, let us consider the empirical counterpart of the above problem such that 
	\begin{equation}\label{eq:our objective}
		\small
		\inf_{\bxi_{\btheta}}\sum_{t=1}^{T}\mE_{s}\left[\frac{1}{n}\sum_{i=1}^{n}\mE_{\bxi_{t, s}}\left[\left\|\frac{\bxi_{t, s}}{\sqrt{1 - r_{t, s}}} - \bxi_{\btheta}(\sqrt{r_{t, s}}\bx_{s}^{i} + \sqrt{1 - r_{t, s}}\bxi_{t, s}, t)\right\|^{2}\right]\right]
	\end{equation}  
	The $\{\bx_{s}^{i}\}$ is generated through training set that follows the distribution of $\bx_{t}$. The objective is actually equivalent to the (7) in reverse-SDE \citep{song2020score} but substituting $\bx_{0}$ with $\bx_{s}$ as discussed in \ref{app:proofs of section estimating forward}. The following proposition gives the empirical optimal of \eqref{eq:our objective}. 
	\begin{restatable}{proposition}{ourempiricalsolution}
		Suppose the model $\bxi_{\btheta}(\cdot, \cdot)$ has enough functional capacity the optimal solution of \eqref{eq:our objective} is 
		\begin{equation}\label{eq:our empirical solution}
			\small
			\bxi_{\btheta_{\bS}}^{*}(\bx, t) = \sum_{i=1}^{n}\frac{\mE_{s}\left[\left(\frac{1}{2\pi(1 - r_{t, s})}\right)^{\frac{d}{2}}\exp\left(-\frac{\|\bx - \sqrt{r_{t, s}}\bx_{s}^{i}\|^{2}}{2(1 - r_{t, s})}\right)\left(\frac{\bx - \sqrt{r_{t, s}}\bx_{s}^{i}}{1 - r_{t,s}}\right)\right]}{\sum_{i=1}^{n}\mE_{s}\left[\left(\frac{1}{2\pi(1 - r_{t, s})}\right)^{\frac{d}{2}}\exp\left(-\frac{\|\bx - \sqrt{r_{t, s}}\bx_{s}^{i}\|^{2}}{2(1 - r_{t, s})}\right)\right]}.
		\end{equation}
	\end{restatable}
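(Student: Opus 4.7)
The plan is to reduce the joint minimization over all time steps to a separate pointwise minimization problem at each $(\bx,t)$, and then recognize the minimizer as a weighted least-squares solution whose closed form matches \eqref{eq:our empirical solution}.

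First, since the objective in \eqref{eq:our objective} is a sum over $t$ of non-negative terms, and $\bxi_{\btheta}(\bx,t)$ can, by the enough-capacity assumption, be chosen independently at each $t$, I would fix an arbitrary $t$ and minimize only the corresponding summand. For this summand I would perform, for each fixed $s$ and each $i$, the change of variables $\bx=\sqrt{r_{t,s}}\bx_{s}^{i}+\sqrt{1-r_{t,s}}\bxi_{t,s}$, so that the standard Gaussian density of $\bxi_{t,s}$ pushes forward to the density
\[
p_{s}^{i}(\bx)=\left(\frac{1}{2\pi(1-r_{t,s})}\right)^{d/2}\exp\left(-\frac{\|\bx-\sqrt{r_{t,s}}\bx_{s}^{i}\|^{2}}{2(1-r_{t,s})}\right),
\]
and the integrand $\bxi_{t,s}/\sqrt{1-r_{t,s}}$ becomes $(\bx-\sqrt{r_{t,s}}\bx_{s}^{i})/(1-r_{t,s})$. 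Interchanging the $\mE_{s}$, finite sum over $i$, and $\bx$-integration (justified by non-negativity via Tonelli), the summand rewrites as
\[
\int \mE_{s}\!\left[\frac{1}{n}\sum_{i=1}^{n}\Big\|\tfrac{\bx-\sqrt{r_{t,s}}\bx_{s}^{i}}{1-r_{t,s}}-\bxi_{\btheta}(\bx,t)\Big\|^{2}p_{s}^{i}(\bx)\right]\!d\bx .
\]

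Because $\bxi_{\btheta}(\cdot,t)$ has enough functional capacity, the outer integral is minimized by choosing, at each point $\bx$, the value $\bv=\bxi_{\btheta}(\bx,t)$ that minimizes the bracketed expression. That inner problem is a (finite) weighted least-squares problem in $\bv$: for non-negative weights $w_{s,i}(\bx)=\tfrac{1}{n}p_{s}^{i}(\bx)$ and targets $\bv_{s,i}(\bx)=(\bx-\sqrt{r_{t,s}}\bx_{s}^{i})/(1-r_{t,s})$, the unique minimizer of $\mE_{s}[\sum_{i}w_{s,i}(\bx)\|\bv_{s,i}(\bx)-\bv\|^{2}]$ is the weight-average of the targets. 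Setting the gradient in $\bv$ to zero and solving gives
\[
\bxi_{\btheta_{\bS}}^{*}(\bx,t)=\frac{\sum_{i=1}^{n}\mE_{s}\!\left[p_{s}^{i}(\bx)\,\tfrac{\bx-\sqrt{r_{t,s}}\bx_{s}^{i}}{1-r_{t,s}}\right]}{\sum_{i=1}^{n}\mE_{s}\!\left[p_{s}^{i}(\bx)\right]},
\]
which is exactly \eqref{eq:our empirical solution} after substituting the explicit form of $p_{s}^{i}$.

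The calculation is largely mechanical, so I do not expect any single step to be a serious obstacle. The most delicate points are: (i) justifying the interchange of $\mE_{s}$, the average over $i$, and the integral over $\bx$ (handled by non-negativity and the Gaussian tails of $p_{s}^{i}$, which make every term finite for bounded training data); and (ii) the pointwise-minimization step, which is valid only because the objective at different $(\bx,t)$ decouples and the parameterization is rich enough that $\bxi_{\btheta}(\bx,t)$ can be chosen freely on the support of the mixture density $\mE_{s}[\sum_{i}p_{s}^{i}(\bx)]$. Everywhere this mixture density vanishes, the value of $\bxi_{\btheta_{\bS}}^{*}(\bx,t)$ is irrelevant to the loss, so the formula above can be taken as the canonical representative of the minimizer.
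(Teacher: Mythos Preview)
Your proposal is correct and follows essentially the same approach as the paper: perform the change of variables $\bx=\sqrt{r_{t,s}}\bx_{s}^{i}+\sqrt{1-r_{t,s}}\bxi_{t,s}$ to rewrite the expectation as an integral against the Gaussian density $p_{s}^{i}(\bx)$, then minimize pointwise in $\bx$ by setting the gradient of the resulting weighted quadratic to zero. The paper's proof simply defers this last pointwise step to the analogous argument in the proof of Theorem~\ref{pro:generalization of ddpm}, whereas you spell it out and add the Tonelli and capacity justifications explicitly, but the substance is identical.
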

	As can be seen, in contrast to \eqref{eq:empirical optimal eps}, the optimal solution $\bxi_{\btheta_{\bS}}^{*}(\bx, t)$ does not highly relate to the training set. It involves $\{\bx_{s}\}$ for series of $s$ (depending on the distribution of $s$), and these $\{\bx_{s}\}$ are noisy data generated by training set. Thus, despite the optimization bias discussed in Section \ref{sec:optimization bias}, updating with \eqref{eq:our empirical solution} does not cause the potential generalization problem. The proof of this theorem is in Appendix \ref{app:proofs of section estimating forward}. 
	\par
	%	\begin{remark}
		%		Our training objective requires $\{\bx_{t - 1}^{i}\}_{i = 1}^{N}$ with follows $P_{t - 1}(\bx_{t - 1})$, which can be generated from training set due to \eqref{eq:noise schedule}. It worth noting that this will not making $\bx_{\btheta_{\bS}}^{*}(\bx, t)$ becomes the linear combination of training set, as we inject random noise to generate each of $\bx_{t - 1}^{i}$
		%	\end{remark}
	Similar to the Theorem \ref{thm:sample complexity original}, the proposed $\bxi_{\btheta_{\bS}}^{*}(\cdot, \cdot)$ also converges to its approximation target $\mE[\bxi_{t - 1} \mid \bx_{t}]$, so that it has small optimization error when $n$ is large enough. The result is illustrated in the following theorem, which is proved in \ref{app:proofs of section estimating forward}. 
	\begin{restatable}{theorem}{convergenceofourobjective}
		Let $\bxi_{\btheta_{\bS}}(\cdot, \cdot)$ be the model defined in \eqref{eq:our empirical solution}, then for any $t$ and $\bx_{t}$ with bounded norm, we have $\bxi_{\btheta_{\bS}}^{*}(\bx_{t}, t)\overset{P}{\longrightarrow}\mE[\bxi_{t, t - 1}\mid \bx_{t}] / \sqrt{1 - r_{t, t - 1}}$. 
	\end{restatable}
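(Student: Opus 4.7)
The plan is to apply the weak law of large numbers separately to the numerator and denominator of \eqref{eq:our empirical solution}, then combine them via the continuous mapping theorem, and finally simplify the limit using Lemma \ref{lem:equivalence}. I first rewrite
$\bxi_{\btheta_{\bS}}^{*}(\bx, t) = \bigl(\tfrac{1}{n}\sum_{i=1}^n N_i(\bx)\bigr)\big/\bigl(\tfrac{1}{n}\sum_{i=1}^n D_i(\bx)\bigr)$
with $D_i(\bx)=\mE_s[\phi_{t,s}(\bx,\bx_s^i)]$ and $N_i(\bx)=\mE_s[\phi_{t,s}(\bx,\bx_s^i)(\bx-\sqrt{r_{t,s}}\bx_s^i)/(1-r_{t,s})]$, where $\phi_{t,s}(\bx,\bx_s^i)$ is the Gaussian kernel appearing in \eqref{eq:our empirical solution}, namely the conditional density of $\bx_t$ given $\bx_s=\bx_s^i$ under \eqref{eq:forward process}. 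Since each $\bx_s^i$ comes from $\bx_0^i\sim P_0$ by independent forward noising, the random variables $\{(N_i(\bx),D_i(\bx))\}_{i=1}^n$ are i.i.d.

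Next, I identify the population limits. Marginalizing the identity $\phi_{t,s}(\bx,\bx_s)=P(\bx_t=\bx\mid\bx_s)$ over $\bx_s\sim P_s$ gives $\mE_{\bx_0}[D_1(\bx)]=\mE_s[P_t(\bx)]=P_t(\bx)$. By the same Bayes identity applied to $N_1$,
\begin{equation*}
\small
\mE_{\bx_0}[N_1(\bx)]=P_t(\bx)\,\mE_s\!\left[\mE\!\left[\tfrac{\bx-\sqrt{r_{t,s}}\bx_s}{1-r_{t,s}}\,\Big|\,\bx_t=\bx\right]\right]=P_t(\bx)\,\mE_s\!\left[\tfrac{1}{\sqrt{1-r_{t,s}}}\mE[\bxi_{t,s}\mid\bx_t=\bx]\right].
\end{equation*}
Lemma \ref{lem:equivalence} implies that the bracketed inner quantity is independent of $s$ and equals $\mE[\bxi_{t,t-1}\mid\bx_t]/\sqrt{1-r_{t,t-1}}$, so $\mE_{\bx_0}[N_1(\bx)]=P_t(\bx)\mE[\bxi_{t,t-1}\mid\bx_t]/\sqrt{1-r_{t,t-1}}$.

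To invoke the WLLN I must check finite first moments of $N_1,D_1$. Because $P_0$ is supported in a bounded set $\cX$ and $\|\bx\|$ is bounded by hypothesis, for each fixed $(t,s)$ the Gaussian kernel $\phi_{t,s}(\bx,\bx_s^i)$ and the linear factor $(\bx-\sqrt{r_{t,s}}\bx_s^i)/(1-r_{t,s})$ are bounded in $\bx_s^i$ (and any Gaussian noise coordinates have moments of all orders), and $s$ ranges over the finite set $\{0,\dots,T-1\}$ so $\mE_s$ preserves boundedness. The WLLN then yields $\tfrac{1}{n}\sum_i D_i(\bx)\overset{P}{\to} P_t(\bx)$ and $\tfrac{1}{n}\sum_i N_i(\bx)\overset{P}{\to} P_t(\bx)\mE[\bxi_{t,t-1}\mid\bx_t]/\sqrt{1-r_{t,t-1}}$. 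Since $P_t$ is the convolution of $P_0$ with a non-degenerate Gaussian, $P_t(\bx)>0$, so the map $(a,b)\mapsto a/b$ is continuous at the limit point and the continuous mapping theorem gives the claimed convergence in probability.

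The main obstacle I anticipate is the positivity of the denominator limit $P_t(\bx)>0$ and the uniformity in $\bx$ needed for the continuous mapping step; both are handled by restricting to the bounded-norm regime stated in the theorem together with boundedness of $\cX$, which mirror exactly the structure used to prove Theorem \ref{thm:sample complexity original}, so I expect the argument to parallel that one with $\bx_0^i$ replaced by the collection $\{\bx_s^i\}$ indexed over $s$.
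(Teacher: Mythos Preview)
Your proposal is correct and follows essentially the same route as the paper: identify the numerator and denominator of \eqref{eq:our empirical solution} as unbiased estimators of $-\nabla_{\bx_t}P_t(\bx_t)$ and $P_t(\bx_t)$ respectively (using that $\phi_{t,s}(\bx,\bx_s)=P_{t\mid s}(\bx\mid\bx_s)$ and averaging over the finite range of $s$), apply a law-of-large-numbers argument to each, and combine via Slutsky/continuous mapping together with Lemma \ref{lem:equivalence}. The paper's own proof is just a pointer back to the machinery of Theorem \ref{thm:sample complexity original}; your pointwise WLLN is a slightly lighter variant of the same idea and is sufficient for the stated pointwise-in-$\bx_t$ conclusion.
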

	\section{Experiments}\label{sec:exp}
	\begin{figure*}[t!]\centering
		%		\begin{subfloature}[b]{0.49\textwidth}
			%			\includegraphics[width=\textwidth]{./figures/hard_loss.pdf}
			%			\caption{MMEL-H.}
			%			\label{fig:hard}
			%		\end{subfloature}
		\subfloat[Averaged distance per dimension on \texttt{CIFAR10}. \label{fig:cifar10}]{
			\includegraphics[width=0.3\textwidth]{./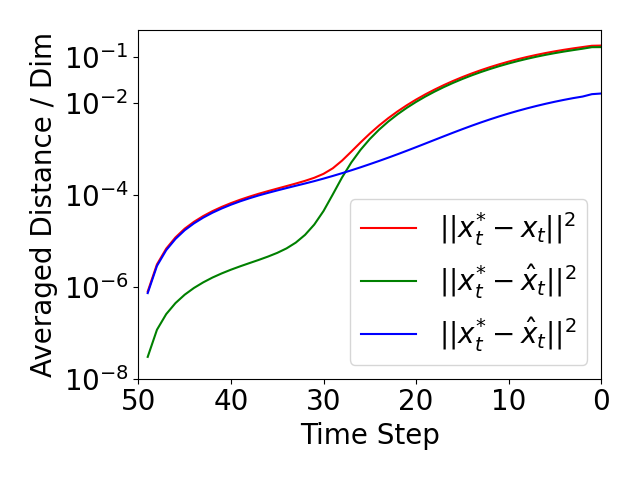}}
		\hspace{0.1in}
		\subfloat[Averaged distance per dimension on \texttt{CelebA} \label{fig:celeba}]{
			\includegraphics[width=0.3\textwidth]{./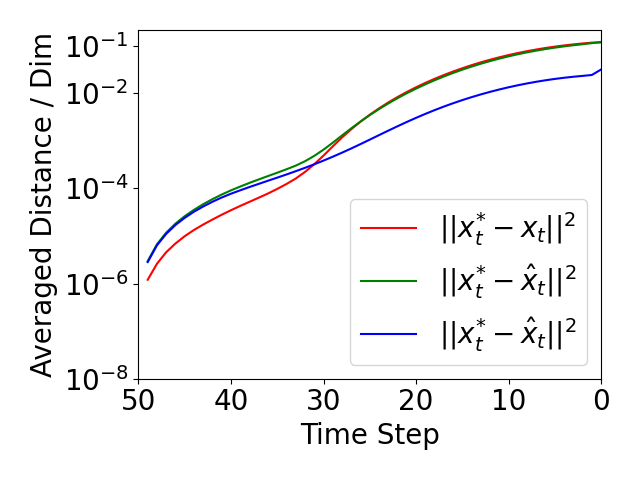}}
		\subfloat[Comparasion of $\bx_{0}, \hat{\bx}_{0}, \bx_{0}^{*}$. \label{fig:comp}]{
			\includegraphics[width=0.3\textwidth]{./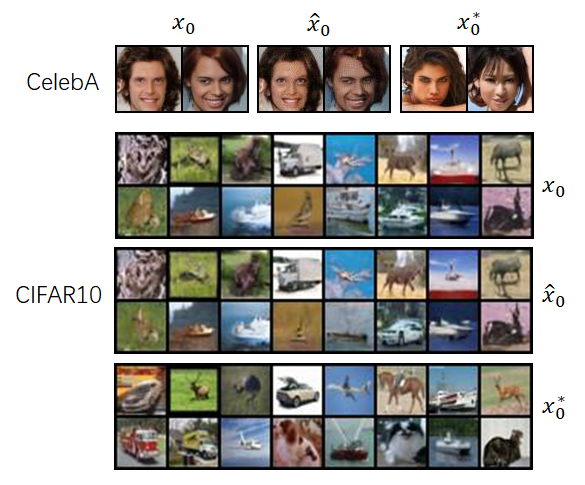}}
		%		\begin{subfloature}[b]{0.49\textwidth}
			%			\includegraphics[width=\textwidth]{./figures/soft_loss.pdf}
			%			\caption{MMEL-S.}
			%			\label{fig:soft}
			%		\end{subfloature}
		\caption{The comparisons of $\bx_{t}, \hat{\bx}_{t}, \bx_{t}^{*}$, where they are respectively generated by diffusion models trained by \eqref{eq:empirical objective}, \eqref{eq:our objective}, and the empirical optima \eqref{eq:empirical optimal eps}.
		}
		\label{fig:distance total}
	\end{figure*}
	In Section \ref{sec:optimization bias}, we empirically verify that although the empirical optimal diffusion model has a generalization problem, i.e., generating data from the training set. The optimization bias regularizes the trained diffusion model and enables it to generalize. In this section, we further verify the generalization capability of the diffusion model. We have shown in Section \ref{sec:estimating forward} proposed training objective \eqref{eq:our objective} can obviate the potential generalization problem. Thus in this section, we empirically verify the difference between diffusion models trained by \eqref{eq:empirical objective}, \eqref{eq:our objective}, and the empirical optima \eqref{eq:empirical optimal eps}.
	\par
	\paragraph{Setup.} Our experimental settings are similar to the ones in Section \eqref{sec:optimization bias}. That is, taking the reverse process as 50 steps DDIM \citep{song2020denoising} to generate data. The diffusion models trained \eqref{eq:empirical objective} and \eqref{eq:our objective} are Unets \citep{ronneberger2015u} with size depending on the dataset as in \citep{nichol2021improved}. To get $s$ when training diffusion under our objective \eqref{eq:our objective}, for $k=t - s$, we first uniformly sample $k$ from $1, \cdots, T - 1$, then uniformly sample a $s$ from $0,\cdots, T - k$. In addition, the sampled $\bx_{s}$ during the training stage is generated by the training set according to \eqref{eq:forward process}. The other experimental settings follow the ones in \citep{ho2020denoising}.
	\paragraph{Datasets.} Our experiments are conducted on image datasets \texttt{CIFAR10} \citep{krizhevsky2009learning}, \texttt{CelebA} \citep{liu2015deep}, which are all benchmark datasets with size $32\times32$ and $64\times64$. 
	\paragraph{Main Results.} Similar to Section \ref{sec:optimization bias}. Let $\bx_{t}^{*}$, $\bx_{t}$ and $\tilde{\bx}_{t}$ respectively be the iterates generated by empirical optima, diffusion models trained by \eqref{eq:empirical objective} and \eqref{eq:our objective}. For each model, we generate 50k series of iterates to compare the average difference (per dimension) between them. The comparisons are summarized in Figure \ref{fig:distance total}. As can be seen, compared with $\bx_{t}^{*}$, the iterates generated by trained diffusion models are pretty similar. This illustrates that though they are trained by different objectives, but the optimization bias pushes them towards a similar model with generalization ability. As the model trained by \eqref{eq:our objective} does not have a potential generalization problem, the similarity between $\bx_{t}$ and $\hat{\bx}_{t}$ indicates the generalization ability of diffusion model trained by \eqref{eq:our objective}. 
	\par
	Some samples generated by the three models are in Figure \ref{fig:comp}. As can be seen, the $\bx_{0}$ and $\hat{\bx}_{0}$ are visually close to each other, while $\hat{\bx}_{0}$ are noisy compared with $\bx_{0}$. In fact, $\hat{\bx}_{0}$ exhibits higher FID score (lower is better)\citep{heusel2017gans} than $\bx_{0}$ (evaluated as in \citep{ho2020denoising}), that is 11.30 v.s. 3.17 on \texttt{CIFAR10} and 108.73 v.s. 8.91 on \texttt{CelebA}. We speculate this is because $\hat{\bx}_{t}$ are more noisy, which improves generalization but increases the optimization error. This illustrates that there is a trade-off between the two errors. Thus when training the diffusion model, we should consider balancing them. 
	\section{Conclusion}
	In this paper, we first formally define the excess risk of the generative model to evaluate it. The excess risk can be decomposed into optimization and generalization errors, which relate to the quality of generated data and the model's exploration ability, respectively. We mainly focus on exploring the generalization of the diffusion model. We verify that though the empirical optimal diffusion model has poor generalization, the optimization bias brought by the training stage of the diffusion model enables it to generate high quality, but meanwhile preserving the generalization ability.                
	\newpage
	\bibliography{reference}
	\bibliographystyle{apalike}
	\appendix
	\section{Proofs in Section \ref{sec:excess risk of generative model}}\label{app:proofs in section excess risk of generative model}
\probabilitydistance*
\begin{proof}
	As given the training set $\bS$, the generated data $\bz^{j}$ are conditional independent with each other. Then for any $g$, and a realization of training set $\bS_{0}$, we have that 
	\begin{equation}
		\small
		\limsup_{m\rightarrow\infty}\frac{1}{m}g(\bz^{j}, \bS_{0}) = \mE_{\bz\sim Q_{\btheta_{\bS_{0}}}}[g(\bz, \bS_{0})], \qquad a.s., 
	\end{equation}
	where $a.s.$ means almost surely. Due to $\cX$ is bounded, it has countable dense sets. Then for any dense set $\cX^{n}_{0}$ of $\cX^{n}$, we have the above equation holds for any $\bS_{0}\in\cX^{n}_{0}$ almost surely. Then for any $\bS\in\cX^{n}$, due to the continuity of $g$ w.r.t. $\bS$, and $\cX^{n}_{0}$ is a dense subset of $\cX^{n}$, we have 
	\begin{equation}
		\small
		\begin{aligned}
			\limsup_{m\rightarrow\infty}\frac{1}{m}g(\bz^{j}, \bS) - \mE_{\bz\sim Q_{\btheta_{\bS}}}[g(\bz, \bS)] &= \limsup_{m\rightarrow\infty}\frac{1}{m}g(\bz^{j}, \bS) - \limsup_{m\rightarrow\infty}\frac{1}{m}g(\bz^{j}, \bS^{\epsilon}) \\
			& + \mE_{\bz\sim Q_{\btheta_{\bS^{\epsilon}}}}[g(\bz, \bS^{\epsilon})] - \mE_{\bz\sim Q_{\btheta_{\bS}}}[g(\bz, \bS)] \\
			& \leq 2\cO(\epsilon), \qquad a.s.
		\end{aligned}
	\end{equation}
	where $\bS^{\epsilon}\in\cX_{0}^{n}$ such that $\|\bS^{\epsilon} - \bS\|\leq \epsilon$. Then due to the arbitrary of $\epsilon$, we get that 
	\begin{equation}
		\small
		\mE_{\bS}\left[\limsup_{m\rightarrow\infty}\frac{1}{m}g(\bz^{j}, \bS)\right] = \mE_{\bS}\mE_{\bz\sim Q_{\btheta_{\bS}}}[g(\bz, \bS)], \qquad a.s.
	\end{equation}
	holds for any fixed $g$. For any countable dense set $\cF_{0}$ of $\cF$, we have 
	\begin{equation}
		\small
		d_{\cF_{0}}(Q_{\btheta_{\bS}}, P_{0}) = \sup_{g\in\cF_{0}}\left|\mE_{\bS}\left[\mE_{\bz\sim Q_{\btheta_{\bS}}}[g(\bz, \bS)] - \mE_{\bx\sim P_{0}}[g(\bx, \bS)]\right]\right|. \qquad a.s. 
	\end{equation} 
	Then for any $\epsilon > 0$, there must exists dense set $\cF_{0}$ such that 
	\begin{equation}\label{eq:eps inequality}
		\small
		\begin{aligned}
			& \left|d_{\cF}(Q_{\btheta_{\bS}}, P_{0}) - \sup_{g\in\cF}\left|\mE_{\bS}\left[\mE_{\bz\sim Q_{\btheta_{\bS}}}[g(\bz, \bS)] - \mE_{\bx\sim P_{0}}[g(\bx, \bS)]\right]\right|\right| \\
			& \leq \left|d_{\cF}(Q_{\btheta_{\bS}}, P_{0}) - d_{\cF_{0}}(Q_{\btheta_{\bS}}, P_{0})\right| \\
			& + \left|d_{\cF}(Q_{\btheta_{\bS}}, P_{0}) - \sup_{g\in\cF}\left|\mE_{\bS}\left[\mE_{\bz\sim Q_{\btheta_{\bS}}}[g(\bz, \bS)] - \mE_{\bx\sim P_{0}}[g(\bx, \bS)]\right]\right|\right| \\
			& + \left|\sup_{g\in\cF}\left|\mE_{\bS}\left[\mE_{\bz\sim Q_{\btheta_{\bS}}}[g(\bz, \bS)] - \mE_{\bx\sim P_{0}}[g(\bx, \bS)]\right]\right| - \sup_{g\in\cF_{0}}\left|\mE_{\bS}\left[\mE_{\bz\sim Q_{\btheta_{\bS}}}[g(\bz, \bS)] - \mE_{\bx\sim P_{0}}[g(\bx, \bS)]\right]\right|\right| \\
			& \leq 3\epsilon. 
		\end{aligned}
	\end{equation}
	Let us define event 
	\begin{equation} 
		\small
		A = \left\{d_{\cF}(Q_{\btheta_{\bS}}, P_{0})\neq \sup_{g\in\cF}\left|\mE_{\bS}\left[\mE_{\bz\sim Q_{\btheta_{\bS}}}[g(\bz, \bS)] - \mE_{\bx\sim P_{0}}[g(\bx, \bS)]\right]\right|\right\}.
	\end{equation} 
	Then $A = \bigcup_{\epsilon > 0}A^{\epsilon} = \bigcup_{j=1}^{\infty}A^{\frac{1}{j}}$, with 
	\begin{equation}
		\small
		A^{\epsilon} = \left\{\left|d_{\cF}(Q_{\btheta_{\bS}}, P_{0}) - \sup_{g\in\cF}\left|\mE_{\bS}\left[\mE_{\bz\sim Q_{\btheta_{\bS}}}[g(\bz, \bS)] - \mE_{\bx\sim P_{0}}[g(\bx, \bS)]\right]\right|\right|\geq \epsilon \right\}.
	\end{equation}
	Due to the denseness of $\cF_{0}$, we can get 
	\begin{equation}
		\small
		\bbP(A) = \bbP\left(\bigcup_{\epsilon > 0}A^{\epsilon}\right) = \bbP\left(\bigcup_{j=1}^{\infty}A^{\frac{1}{j}}\right) \leq \sum_{j=1}^{\infty}\bbP(A^{\frac{1}{j}}) = 0,
	\end{equation}
	where the last equality is due to \eqref{eq:eps inequality}. Thus we prove our result. 
\end{proof}
\generalization*
\begin{proof}
	Let us check the generalization error first. Due to the formulation of $g(\bz, \bS)$, for any $\lambda > 0$
	\begin{equation}
		\small
		\begin{aligned}
			& \lambda\sup_{g\in\cF}\mE_{\bS, \bS^{\prime}}\left[\mE_{\bz\sim Q_{\btheta_{\bS}}}[g(\bz, \bS)] - \mE_{\bz\sim Q_{\btheta_{\bS}}}[g(\bz, \bS^{\prime})]\right] = \lambda\mE_{\bS, \bz\sim Q_{\btheta_{\bS}}}\left[\frac{1}{n}\sum_{i=1}^{n}\left(f(\bz, \bx^{i}_{0}) - \mE_{\bS^{\prime}}[f(\bz, \bx^{i^\prime}_{0})]\right)\right] \\
			& \leq D_{KL}(P_{\bz_{\btheta_{\bS}}\times \bS}, P_{\bz_{\btheta_{\bS}}}\times P_{\bS}) + \log{\mE_{\bS^{\prime}, \bz\sim Q_{\btheta_{\bS}}}\left[\exp\left(\frac{\lambda}{n}\sum_{i=1}^{n}\left(f(\bz, \bx^{i}_{0}) - \mE_{\bS^{\prime}}[f(\bz, \bx^{i^\prime}_{0})]\right)\right)\right]} \\
			& \overset{a}{\leq} I(\bz_{\btheta_{\bS}}, \bS) + \frac{\lambda^{2}M^{2}}{2n},
		\end{aligned}
	\end{equation}
	where inequality $a$ is from the sub-Gaussian property \citep{duchi2016lecture}. By taking infimum of $\lambda$, and similarly applying the result to $\lambda < 0$, we prove an upper bound to the generalization error that is  
	\begin{equation}
		\small 
		D_{\cF}(P_{\bz_{\btheta_{\bS}}\times \bS}, P_{\bz_{\btheta_{\bS}}}\times P_{\bS}) \leq \sqrt{\frac{M^{2}I(\bz_{\btheta_{\bS}}, \bS)}{n}}. 
	\end{equation}
	On the other hand, for the optimization error, then 
	\begin{equation}
		\small
		\begin{aligned}
			& \sup_{g\in\cF}\mE_{\bS, \bS^{\prime}}\left|\left[\mE_{\bz\sim Q_{\btheta_{\bS}}}[g(\bz, \bS^{\prime})] - \mE_{\bz\sim P_{0}}[g(\bz, \bS)]\right]\right| \\
			& = \sup_{f}\left|\mE_{\bx\sim P_{0}}\left[\left(\mE_{\bz\sim Q_{\btheta_{\bS}}}\left[f(\bz, \bx)\right] - \mE_{\bz\sim P_{0}}\left[f(\bz, \bx)\right]\right)\right] \right| \\
			& = d_{\cF_{P_{0}}}(Q_{\btheta_{\bS}}, P_{0}),
		\end{aligned}
	\end{equation}
	where $\cF_{P_{0}} = \{\mE_{\bx\sim P_{0}}[f(\bz, \bx)]: |f(\bz, \bx)| \leq M; \mE_{Q_{\btheta_{\bS}\times P_{0}}}[\exp f(\bz, \bx)] < \infty\}$. Then due to the Donsker–Varadhan representation \citep{duchi2016lecture}, we have $d_{\cF_{P_{0}}}(Q_{\btheta_{\bS}}, P_{0}) \leq \max\{D_{KL}(P_{0}, Q_{\btheta_{\bS}}), D_{KL}(Q_{\btheta_{\bS}}, P_{0})\}$, which proves our theorem. 
\end{proof}
\section{Some Examples of Excess Risk}\label{app:examples}
To make our excess risk more practical, we use the following example to illustrate the effectiveness. 
\begin{example}
	The excess risk of empirical distribution $Q_{\btheta_{\bS}} = P_{n}$. 
\end{example}
According to Theorem \ref{thm:excess risk formulation}, the excess risk of empirical distribution is 
\begin{equation}
	\small
	\begin{aligned}
		d_{\cF}(P_{n}, P_{0}) & = \left|\sup_{g\in\cF}\mE_{\bS}\left[\frac{1}{n}\sum\limits_{i=1}^{n}g(\bx^{i}_{0}, \bS) - \mE_{\bx\sim P_{0}}[g(\bx, \bS)]\right]\right| \\
		& \geq \left|\mE_{\bS}\left[\frac{1}{n^{2}}\sum\limits_{i = 1}^{n}\sum\limits_{j=1}^{n}\frac{1}{\|\bx^{i}_{0} - \bx^{j}\|^{2}}\right] - \frac{1}{n}\sum\limits_{i=1}^{n}\mE_{\bx\sim P_{0}}\frac{1}{\|\bx - \bx^{i}_{0}\|^{2}}\right| \\
		& = \infty.
	\end{aligned} 
\end{equation}
As can be seen, when we involve the generalization into excess risk, the empirical distribution has poor performance, which is consistent with our intuition. However, under the original metric, the empirical distribution can have great performance with sufficiently large $n$. This is because the existing evaluation \citep{kingma2013auto,song2020score,arora2017generalization} is usually probabilistic distance or divergence between generated and target distributions, e.g., Wasserstein distance or KL divergence. However, the classical results of empirical process \citep{van1996weak} indicate that the empirical distribution can converge $P_{0}$ under these metrics. This contradicts to our intuition that memorizing training data is a bad behavior for the generative model. 
\par
Next, let us present an example to exactly compute the excess risk of generative model.  
\begin{example}
	Let i.i.d. training set $\bS$ such that $\bx_{i}\sim\cN(\bmu, \bI)$. Our goal is using $\bS$ to estimate $\bmu$ by $\hat{\bmu}$ and generate data $\bz\sim\cN(\hat{\bmu}, \bI)$. Now let us check the generalization and optimization error of generated data $\bz$.
\end{example}  
The classical way to get $\hat{\bmu}$ is minimizing the square loss $\frac{1}{n}\sum_{i=1}^{n}\|\hat{\bmu} - \bx_{i}\|^{2}$, which is obtained by $\hat{\bmu} = \frac{1}{n}\sum_{i=1}^{n}\bx_{i}$. Thus, $\bz\sim\cN(\hat{\bmu}, \bI)$. We consider the function class as $\cF = \{f:|f| \leq M\}$ for some $M > 0$. As in the proof of proposition \ref{pro:generalization def}, the $D_{\cF}(\cdot, \cdot)$ defined in \eqref{eq:decom} can be upper bounded by $\max\{D_{KL}(Q_{\hat{\bmu}} || P_{0}), D_{KL}(P_{0} || Q_{\hat{\bmu}})\}$ by applying Jensen's inequality and its definition. Thus, the optimization error of $Q_{\hat{\bmu}}$ can be explicitly computed due to KL-divergence between two Gaussian distributions \citep{duchi2016lecture} such that 
\begin{equation}
	\small
		D_{\cF}(P_{0}, Q_{\hat{\bmu}}) \leq D_{KL}(Q_{\hat{\bmu}} || P_{0}) = D_{KL}(P_{0} || Q_{\hat{\bmu}}) = \mE[\|\hat{\bmu} - \bmu\|^{2}] = \frac{d}{n}.
\end{equation}
On the other hand, for the generalization error, similar to the proof of Proposition \ref{pro:generalization def}, for some standard Gaussian distribution $\bxi$, we have 
\begin{equation}
	\small
	\begin{aligned}
		D_{\cF}(P_{{\bz}\times \bS}, Q_{\hat{\bmu}}\times P_{\bS}) & \leq I(\bz; \bS) \\
		&= \sum\limits_{i=1}^{n}I(\bz; \bx_{i}\mid \bx_{1: i - 1}) \\
		& = \sum\limits_{i=1}^{n}I\left(\frac{1}{n}\sum_{i=1}^{n}\bx_{i} + \bxi; \bx_{i}\mid \bx_{1: i - 1}\right) \\
		& = \sum\limits_{i=1}^{n}\left(H\left(\frac{1}{n}\sum_{i=1}^{n}\bx_{i} + \bxi\mid \bx_{1: i - 1}\right) - H\left(\frac{1}{n}\sum_{i=1}^{n}\bx_{i} + \bxi\mid \bx_{1: i}\right)\right) \\
		& = H\left(\frac{1}{n}\sum_{i=1}^{n}\bx_{i} + \bxi\right) + H\left(\bxi\right) \\
		& = \frac{d}{2}\log{\left(1 + \frac{1}{n}\right)}, 
	\end{aligned}
\end{equation}
where the last equality is due to the entropy of Gaussian distribution \citep{duchi2016lecture}. Thus we respectively characterize the upper bounds of generalization and optimization errors.

\section{Proofs in Section \ref{sec:excess risk of diffusion model}}\label{app:proofs in section excess risk of diffusion model}
We prove a general result such that our Proposition \ref{pro:argmin} is a corollary of it. We first present the definition of exponential family distributions, which is adopted from \citep{duchi2016lecture} 
\begin{definition}[Exponential Family Distributions]
	The exponential family associated with the function $\phi(\cdot)$ is defined as the set of distributions with densities $Q_{\btheta}$, where 
	\begin{equation}
		\small
			Q_{\btheta}(\bx) = \exp\left(\left\langle\btheta, \phi(\bx)\right\rangle - A(\btheta)\right),
	\end{equation}
	and the function $A(\btheta)$ is the log-partition-function defined as 
	\begin{equation}
		A(\btheta) = \log{\int_{\cX}\exp\left(\langle\btheta, \phi(\bx)\rangle \right)d\bx}
	\end{equation} 
\end{definition}
Before proving Proposition \ref{pro:argmin}, we need the following lemma. 
\begin{lemma}\label{lem:kl divergence for exp}
	For densities functions $P(\cdot)$ and $Q_{\btheta}(\cdot)$, if $Q_{\btheta}(\cdot)$ is an exponential family variable, then 
	\begin{equation}\label{eq:minimize kl}
		\small
			\btheta^{*} = \argmin_{\btheta}D_{KL}(P\parallel Q_{\btheta}) = \nabla A^{-1}(\mE_{p}[\phi(\bx)]),
	\end{equation}
	and $\mE_{Q_{\btheta^{*}}}[\phi(\bx)] = \mE_{P}[\phi(\bx)]$, where $\nabla A^{-1}(\btheta)$ is the inverse of $\nabla A(\btheta)$, due to the convexity of $A(\btheta)$
\end{lemma}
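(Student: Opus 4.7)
The plan is to minimize $D_{KL}(P \parallel Q_{\btheta})$ directly by unfolding the definition and using the well-known identity $\nabla A(\btheta) = \mathbb{E}_{Q_{\btheta}}[\phi(\bx)]$ for exponential families. First I would substitute the exponential-family density into the KL functional to obtain
\begin{equation}
D_{KL}(P \parallel Q_{\btheta}) = \mE_{P}[\log P(\bx)] - \langle \btheta, \mE_{P}[\phi(\bx)]\rangle + A(\btheta),
\end{equation}
so that, viewed as a function of $\btheta$, only the last two terms matter. Because $A(\btheta)$ is a log-partition function, it is convex (the standard argument via Hölder's inequality, or equivalently noting that its Hessian is the covariance of $\phi(\bx)$ under $Q_{\btheta}$), hence $D_{KL}(P \parallel Q_{\btheta})$ is convex in $\btheta$ and any stationary point is a global minimizer.

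Next I would compute the gradient in $\btheta$, differentiating under the integral sign to obtain $\nabla A(\btheta) = \mE_{Q_{\btheta}}[\phi(\bx)]$; this is the classical mean-parametrization identity. Setting the gradient of the KL to zero yields
\begin{equation}
\nabla A(\btheta^{*}) = \mE_{P}[\phi(\bx)],
\end{equation}
which gives immediately $\btheta^{*} = \nabla A^{-1}(\mE_{P}[\phi(\bx)])$ as claimed in \eqref{eq:minimize kl}. The moment-matching conclusion $\mE_{Q_{\btheta^{*}}}[\phi(\bx)] = \mE_{P}[\phi(\bx)]$ then follows by applying $\nabla A$ to both sides of the definition of $\btheta^{*}$, using $\nabla A \circ \nabla A^{-1} = \mathrm{id}$.

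The only real subtlety is the existence of $\nabla A^{-1}$, i.e., the invertibility of $\nabla A$. I would address this by invoking the standing assumption (implicit in writing $\nabla A^{-1}$ in the statement) that the exponential family is in minimal representation, so that $A$ is strictly convex and $\nabla A$ is a bijection from the natural parameter space onto the interior of the set of realizable mean parameters. Under this assumption, $\mE_{P}[\phi(\bx)]$ lies in the image of $\nabla A$ whenever $P$ itself is absorbable into the family's moment space, and the inversion is well-defined; otherwise the infimum is not attained. This invertibility and the regularity needed to exchange differentiation and expectation in computing $\nabla A$ will be the main technical obstacle, but both are standard facts for exponential families and I would cite them rather than reprove them.
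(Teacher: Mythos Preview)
Your proposal is correct and follows essentially the same route as the paper: expand $D_{KL}(P\parallel Q_{\btheta})$ using the exponential-family form, invoke convexity of the log-partition function $A(\btheta)$, set the gradient to zero to obtain $\nabla A(\btheta^{*}) = \mE_{P}[\phi(\bx)]$, and then use the identity $\nabla A(\btheta) = \mE_{Q_{\btheta}}[\phi(\bx)]$ for the moment-matching conclusion. Your discussion of the invertibility of $\nabla A$ is in fact more careful than the paper's own proof, which simply assumes it.
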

\begin{proof}
	From the definition 
	\begin{equation}
		\small
		\begin{aligned}
			D_{KL}(P\parallel Q_{\btheta}) &= \int_{\cX}P(\bx)\log{P(\bx)}d\bx - \int_{\cX}P(\bx)\log{Q_{\btheta}(\bx)}d\bx \\
			& = \int_{\cX}p(\bx)\log{P(\bx)}d\bx - \int_{\cX}P(\bx)\left(\langle\btheta, \phi(\bx) - A(\btheta)\right)d\bx.
		\end{aligned}
	\end{equation}
	Then minimizing $D_{KL}(P\parallel Q_{\btheta})$ is equivalent to maximizing $- \int_{\cX}P(\bx)\left(\langle\btheta, \phi(\bx) - A(\btheta)\right)d\bx$. According to Proposition 14.4 in \citep{duchi2016lecture}, $A(\btheta)$ is a convex function w.r.t. $\btheta$, then let $Q_{\btheta^{*}}$ solves \eqref{eq:minimize kl}, we must have $\nabla A(\btheta^{*}) = \mE_{\bx\sim P}[\phi(\bx)]$. On the other hand, we have 
	\begin{equation}
		\small
			\nabla A(\btheta) = \frac{\int_{\cX}\exp\left(\langle\btheta, \phi(\bx)\rangle\right)\phi(\bx)d\bx}{\int_{\cX}\exp\left(\langle\btheta, \phi(\bx)\rangle \right)d\bx} = \mE_{\bx\sim Q_{\btheta}}[\phi(\bx)],
	\end{equation}
	which verifies the second conclusion. 
\end{proof}
\argminkl*
\begin{proof}
	The normal distribution is exponential family with the form $Q_{\btheta, \Sigma}(\bx) \propto \exp(\langle\btheta, \bx\rangle + 1/2\langle\bx\bx^{\top}, \Sigma\rangle)$, where $\Sigma$ is the covariance matrix of $Q_{\btheta, \Sigma}(\bx)$ and $\btheta$ is $\Sigma^{-1}\bmu$ with $\bmu$ is the mean of $Q_{\btheta, \Sigma}(\bx)$. Then the result is a corollary of Lemma \ref{lem:kl divergence for exp} due to the linearity of $\tilde{\bmu}(\bx_{0}, \bx_{t})$ w.r.t. $\bx_{0}$.  
\end{proof}
\subsection{The Empirical Optima of Noise Prediction}\label{app:empirical optima}
Next we prove the Theorem \ref{pro:generalization of ddpm}.
\generalizationofode*
\begin{proof}
	Let
	\begin{equation}
		\small
		\begin{aligned}
			J(\beps_{\btheta}) & = \frac{1}{n}\sum\limits_{i=1}^{n}\mE_{\beps_{t}}\left[\left\|\beps_{\btheta}(\sqrt{\bar{\alpha}_{t}}\bx^{i}_{0} + \sqrt{1 - \bar{\alpha}_{t}}\beps_{t}, t) - \beps_{t}\right\|^{2}\right] \\
			& = \frac{1}{n}\sum\limits_{i=1}^{n}\int_{\bbR^{d}}\left\|\beps_{\btheta}(\sqrt{\bar{\alpha}_{t}}\bx^{i}_{0} + \sqrt{1 - \bar{\alpha}_{t}}\beps_{t}, t) - \beps_{t}\right\|^{2}\left(\frac{1}{2\pi}\right)^{\frac{d}{2}}\exp\left(-\frac{\|\beps_{t}\|^{2}}{2}\right)d\beps_{t} \\
			& = \int_{\bbR^{d}}\frac{1}{n}\sum\limits_{i=1}^{n}\left\|\beps_{\btheta}(\bx, t) - \frac{\bx - \sqrt{\bar{\alpha}_{t}}\bx^{i}_{0}}{\sqrt{1 - \bar{\alpha}_{t}}}\right\|^{2}\left(\frac{1}{2\pi}\right)^{\frac{d}{2}}\exp\left(-\frac{\left\|\bx - \sqrt{\bar{\alpha}_{t}}\bx^{i}_{0}\right\|^{2}}{2(1 - \bar{\alpha}_{t})}\right)d\bx
		\end{aligned}
	\end{equation}
	For any given $\bx$, the optimization problem of minimizing $\beps_{\btheta}$ in the integral is a strongly convex problem w.r.t. $\beps_{\btheta}$. Thus it has single global minimum which can be obtained taking gradient to it such that   
	\begin{equation}
		\small
		\begin{aligned}
			0 &= \nabla_{\btheta}\frac{1}{n}\sum\limits_{i=1}^{n}\left\|\beps_{\btheta}(\bx, t) - \frac{\bx - \sqrt{\bar{\alpha}_{t}}\bx^{i}_{0}}{\sqrt{1 - \bar{\alpha}_{t}}}\right\|^{2}\left(\frac{1}{2\pi}\right)^{\frac{d}{2}}\exp\left(-\frac{\left\|\bx - \sqrt{\bar{\alpha}_{t}}\bx^{i}_{0}\right\|^{2}}{2(1 - \bar{\alpha}_{t})}\right)\\
			&= \frac{2}{n}\sum\limits_{i=1}^{n}\left(\beps_{\btheta}(\bx, t) - \frac{\bx - \sqrt{\bar{\alpha}_{t}}\bx^{i}_{0}}{\sqrt{1 - \bar{\alpha}_{t}}}\right)\left(\frac{1}{2\pi}\right)^{\frac{d}{2}}\exp\left(-\frac{\left\|\bx - \sqrt{\bar{\alpha}_{t}}\bx^{i}_{0}\right\|^{2}}{2(1 - \bar{\alpha}_{t})}\right), 
		\end{aligned}
	\end{equation}
	which shows 
	 \begin{equation}
	 	\small
	 	\begin{aligned}
	 		\beps_{\btheta_{\bS}}^{*}(\bx, t) & = \sum_{i=1}^{n}\frac{\exp\left(-\frac{\|\bx - \sqrt{\bar{\alpha}_{t}}\bx^{i}_{0}\|^{2}}{2(1 - \bar{\alpha}_{t})}\right)}{\sum_{i=1}^{n}\exp\left(-\frac{\|\bx - \sqrt{\bar{\alpha}_{t}}\bx^{i}_{0}\|^{2}}{2(1 - \bar{\alpha}_{t})}\right)}\left(\frac{\bx - \sqrt{\bar{\alpha}_{t}}\bx^{i}_{0}}{\sqrt{1 - \bar{\alpha}_{t}}}\right) \\
	 		& = \frac{\bx}{\sqrt{1 - \bar{\alpha}_{t}}} - \left(\frac{\sqrt{\bar{\alpha}_{t}}}{\sqrt{1 - \bar{\alpha}_{t}}}\right)\sum_{i=1}^{n}\frac{\exp\left(-\frac{\|\bx - \sqrt{\bar{\alpha}_{t}}\bx^{i}_{0}\|^{2}}{2(1 - \bar{\alpha}_{t})}\right)\bx^{i}_{0}}{\sum_{i=1}^{n}\exp\left(-\frac{\|\bx - \sqrt{\bar{\alpha}_{t}}\bx^{i}_{0}\|^{2}}{2(1 - \bar{\alpha}_{t})}\right)}
	 	\end{aligned}
	 \end{equation} 
 	\par
 	Next, we prove the claim of generalization, due to the Proposition \ref{pro:generalization def}, we should control the mutual information $I(\bx_{0}, \bS)$, where $\bx_{0}$ is obtained via 
 	\begin{equation}
 		\small
 		\bx_{t - 1} = \frac{1}{\sqrt{\alpha_{t}}}\left(\bx_{t} - \frac{\beta_{t}}{\sqrt{1 - \bar{\alpha}_{t}}}\beps_{\btheta}^{*}(\bx_{t}, t)\right) + \tilde{\beta}_{t}\bxi_{t},
 	\end{equation}
 	where $\bxi_{t}$ is a standard Gaussian that is independent of $\bx_{t}$ and $\bS$. Then by Data processing inequality \citep{xu2017information},
 	\begin{equation}
 		\small
 		\begin{aligned}
 			I(\bx_{0}; \bS) \leq I(\bx_{0:T}; \bS) = I(\bx_{0}; \bS \mid \bx_{1:T}) + I(\bx_{1}; \bS\mid \bx_{2:T}) + \cdots + I(\bx_{T}; \bS). 
 		\end{aligned}
 	\end{equation}
 	Then for any $1 \leq t \leq T$,
 	\begin{equation}
 		\small
 		\begin{aligned}
 			I(\bx_{t - 1}; \bS\mid \bx_{t: T}) = H(\bx_{t - 1}\mid \bx_{t: T}) - H(\bx_{t - 1}\mid \bS, \bx_{t: T}) = H(\bx_{t - 1}\mid \bx_{t}) - H(\bx_{t - 1}\mid \bS, \bx_{t}),
 		\end{aligned}
 	\end{equation}
 	where the last equality is due to the Markovian property of $\bx_{t - 1}$. Next we compute the two terms in the last equality. First, due to the definition of $\beps_{\btheta}^{*}$
 	\begin{equation}
 		\small
 		\begin{aligned}
 			H(\bx_{t - 1}\mid \bx_{t}) & = H\left(\bx_{t - 1} - \frac{1}{\sqrt{\alpha_{t}}}\bx_{t}\mid \bx_{t}\right) \\
 			& = H\left(\tilde{\beta}_{t}\bxi_{t} + \left(\beta_{t}\frac{\sqrt{\bar{\alpha}_{t}}}{1 - \bar{\alpha}_{t}}\right)\sum_{i=1}^{n}\frac{\exp\left(-\frac{\|\bx - \sqrt{\bar{\alpha}_{t}\bx^{i}_{0}}\|^{2}}{2(1 - \bar{\alpha}_{t})}\right)\bx^{i}_{0}}{\sum_{i=1}^{n}\exp\left(-\frac{\|\bx - \sqrt{\bar{\alpha}_{t}\bx^{i}_{0}}\|^{2}}{2(1 - \bar{\alpha}_{t})}\right)} \mid \bx_{t}\right). 
 		\end{aligned}
 	\end{equation}
 	Then since $\bx_{t}$ and $\bxi_{t}$ are independent we have 
 	\begin{equation}\label{eq:upper bound of l2}
 		\small
 		\begin{aligned}
 			& \mE\left[\left\|\tilde{\beta}_{t}\bxi_{t} + \left(\frac{\beta_{t}\sqrt{\bar{\alpha}_{t}}}{1 - \bar{\alpha}_{t}}\right)\sum_{i=1}^{n}\frac{\exp\left(-\frac{\|\bx - \sqrt{\bar{\alpha}_{t}\bx^{i}_{0}}\|^{2}}{2(1 - \bar{\alpha}_{t})}\right)\bx^{i}_{0}}{\sum_{i=1}^{n}\exp\left(-\frac{\|\bx - \sqrt{\bar{\alpha}_{t}\bx^{i}_{0}}\|^{2}}{2(1 - \bar{\alpha}_{t})}\right)}\right\|^{2}\right] \\
 			& = \tilde{\beta}_{t}^{2}d + \frac{\beta_{t}^{2}\bar{\alpha}_{t}}{(1 - \bar{\alpha}_{t})^{2}} \mE\left[\left\|\sum_{i=1}^{n}\frac{\exp\left(-\frac{\|\bx - \sqrt{\bar{\alpha}_{t}\bx^{i}_{0}}\|^{2}}{2(1 - \bar{\alpha}_{t})}\right)\bx^{i}_{0}}{\sum_{i=1}^{n}\exp\left(-\frac{\|\bx - \sqrt{\bar{\alpha}_{t}\bx^{i}_{0}}\|^{2}}{2(1 - \bar{\alpha}_{t})}\right)}\right\|^{2}\right] \\
 			& \leq \tilde{\beta}_{t}^{2}d + \frac{\beta_{t}^{2}\bar{\alpha}_{t}}{(1 - \bar{\alpha}_{t})^{2}}R^{2},
 		\end{aligned}
 	\end{equation}
 	where $R$ is the radius of data support $\cX$. Due to Theorem 14.7 in \citep{duchi2016lecture}, that among all random variables $X$ with $\mE[\|X\|^{2} \leq C]$, the Gaussian distribution $\cN(0, \sqrt{C / d}I_{d})$ has the largest entropy such that 
 	\begin{equation}
 		\small
 		H(\cN(0, \sqrt{C / d}I_{d})) = \frac{d}{2}\log{\left(\frac{2\pi eC}{d}\right)}.
 	\end{equation}
 	Combining this result with \eqref{eq:upper bound of l2}, we get 
 	\begin{equation}
 		\small
 		H(\bx_{t - 1}\mid \bx_{t}) \leq \frac{d}{2}\log{\left(2\pi e\left(\tilde{\beta}_{t}^{2} + \frac{\beta_{t}^{2}\bar{\alpha}_{t}}{d(1 - \bar{\alpha}_{t})^{2}}R^{2}\right)\right)}.
 	\end{equation}
 	On the other hand, due to the definition of $\beps_{\btheta}^{*}$, 
 	\begin{equation}
 		\small
 		H(\bx_{t - 1}\mid \bS, \bx_{t}) = H(\tilde{\beta}_{t}\bxi_{t}) = \frac{d}{2}\log(2\pi e\tilde{\beta}_{t}^{2}),
 	\end{equation}
 	which implies 
 	\begin{equation}
 		\small
 		I(\bx_{t - 1}; \bS\mid \bx_{t:T}) \leq \frac{d}{2}\log{\left(1 + \frac{\beta_{t}^{2}\bar{\alpha}_{t}}{d\tilde{\beta}_{t}^{2}(1 - \bar{\alpha}_{t})^{2}}R^{2}\right)} \leq \frac{\beta_{t}^{2}\bar{\alpha}_{t}R^{2}}{2\tilde{\beta}_{t}^{2}(1 - \bar{\alpha}_{t})^{2}} = \frac{\bar{\alpha}_{t}R^{2}}{2(1 - \bar{\alpha}_{t - 1})^{2}}. 
 	\end{equation}
 	We should point out that when $t=1$, the upper bounded in the above becomes $\frac{(1 - \beta_{1})R^{2}}{2\beta_{1}^{2}}$. Then we prove our result.
	\end{proof}
	\begin{remark}
		As we have shown in main text, the ideal  $\beps_{\btheta}$ is \eqref{eq:eps}, and the empirical $\beps_{\btheta_{\bS}}$ in \eqref{eq:empirical optimal eps} is approximating \eqref{eq:eps} as in \eqref{eq:estimating conditional expectation}. This conclusion can be easily verified due to $P(\bx_{t}\mid \bx_{0})\sim \cN(\sqrt{\bar{\alpha}_{t}}\bx_{0}, \sqrt{1 - \bar{\alpha}_{t}}\bI)$, by viewing the numerator of \eqref{eq:empirical optimal eps} as an unbiased estimator to $P(\bx_{t})$, which is $\hat{P}(\bx_{t})$ in \eqref{eq:estimating conditional expectation}. 
	\end{remark}
	In the last of this subsection, we use the following proposition to show the generalization problem of empirical when the diffusion model takes deterministic update rule e.g., DDIM \citep{song2020denoising}, DPM-Solver \citep{lu2022dpm}. 
	\genofdeterministic*
	\begin{proof}
		As can be seen, the $\beps_{\btheta_{\bS}}^{*}(\bx)$ is a linear combination of the difference between $\bx$ and training set $\bS$. Thus according to the transition rule $\bx_{t - 1} = f(\beps_{\btheta_{\bS}}^{*}, \bx_{t}, t)$, we know the generated data $\bx_{0}$ only depends on $\bS$ and $\bx_{T}$. Due to the linear formulation of $\beps_{\btheta_{\bS}}^{*}(\bx, t)$, there exists $\bx_{0} = F(\bx_{T}, \bS)$ with $F$ does not degenerated w.r.t. $\bS$. Thus 
		\begin{equation}
			\small
			I(\bx_{0}, \bS) = I(F(\bx_{T}, \bS); \bS) = I(F(\bx_{T}, \bS); \bS \mid \bx_{T}) + I(\bx_{T}; \bS) = I(F(\bx_{T}, \bS); \bS \mid \bx_{T}) = \infty,
		\end{equation}
		which verifies our conclusion. 
	\end{proof}
	The proposition indicates that though the deterministic update rule of diffusion model has improved sampling efficiency compared with the stochastic one \citep{song2020denoising,lu2022dpm}, but it potentially face the challenge of generalization.  
\subsection{Convergence of Empirical Minima}\label{app:empirical minima}
In this section, we prove the convergence result of empirical minima \eqref{eq:empirical optimal eps}. Before proving Theorem \ref{thm:sample complexity original}, we give some notations and present some useful lemmas. Let us define 
\begin{equation}
	\small
	\begin{aligned}
		K_{t}(\bx_{t}, \bx_{0}) &= \exp\left(-\frac{\|\bx - \sqrt{\bar{\alpha}_{t}}\bx_{0}\|^{2}}{2(1 - \bar{\alpha}_{t})}\right); \\
		f_{\bx_{0}}(\bx_{t}) &= \left(\frac{1}{2\pi(1 - \bar{\alpha}_{t})}\right)^{\frac{d}{2}}K_{t}(\bx_{t}, \bx_{0}); \\
		f_{\bS}(\bx_{t}) &= \frac{1}{n}\sum\limits_{i=1}^{n}f_{\bx_{0}^{i}}(\bx_{t}). 
	\end{aligned}
\end{equation}
\begin{lemma}\label{lem:continuity}
	The function $f_{\bS}(\bx_{t})$ and $P_{t}(\bx_{t})$ is $\left(\frac{1}{2\pi(1 - \bar{\alpha}_{t})}\right)^{\frac{d}{2}}\sqrt{\frac{1}{e(1 - \bar{\alpha}_{t})}}$-Lipschitz continuous and their gradients are all $\left(\frac{1}{2\pi(1 - \bar{\alpha}_{t})}\right)^{\frac{d}{2}}\left(\frac{2 + e}{e(1 - \bar{\alpha_{t}})}\right)$-Lipschitz continuous. 
\end{lemma}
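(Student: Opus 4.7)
The plan is to reduce everything to pointwise bounds on the gradient and Hessian of the single-data-point Gaussian kernel $f_{\bx_0}(\bx_t)$, and then extend to $f_{\bS}$ and $P_t$ by linearity and Jensen's inequality, since both are averages of $f_{\bx_0}$ against (respectively) the empirical measure on the training set and the population measure $P_0$. The uniform bounds on $\|\nabla f_{\bx_0}\|$ and $\|\nabla^2 f_{\bx_0}\|$ that we will derive also dominate the integrand, so differentiation under the integral for $P_t(\bx_t)=\mE_{\bx_0\sim P_0}[f_{\bx_0}(\bx_t)]$ is justified.

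For the first claim, note that $f_{\bx_0}(\bx_t)$ is the density of $\cN(\sqrt{\bar{\alpha}_t}\bx_0,\,(1-\bar{\alpha}_t)\bI)$ evaluated at $\bx_t$, so a direct calculation gives
\[
\nabla_{\bx_t} f_{\bx_0}(\bx_t) \;=\; -\,\frac{\bx_t - \sqrt{\bar{\alpha}_t}\bx_0}{1-\bar{\alpha}_t}\, f_{\bx_0}(\bx_t).
\]
Writing $u = \|\bx_t - \sqrt{\bar{\alpha}_t}\bx_0\|$ and $\sigma^2 = 1-\bar{\alpha}_t$, bounding $\|\nabla f_{\bx_0}\|$ reduces to maximizing the scalar quantity $u\sigma^{-2}\exp(-u^2/(2\sigma^2))$, whose maximizer is $u=\sigma$ with value $1/(\sigma\sqrt{e})$. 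Multiplying by the Gaussian prefactor $(2\pi\sigma^2)^{-d/2}$ gives exactly the Lipschitz constant $\sqrt{1/(e(1-\bar{\alpha}_t))}\cdot(2\pi(1-\bar{\alpha}_t))^{-d/2}$ claimed for $f_{\bx_0}$, and the same bound then transfers to $f_{\bS}$ and $P_t$ by the averaging argument above.

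For the gradient-Lipschitz statement, a further differentiation yields
\[
\nabla^2_{\bx_t} f_{\bx_0}(\bx_t) \;=\; \left[\frac{(\bx_t-\sqrt{\bar{\alpha}_t}\bx_0)(\bx_t-\sqrt{\bar{\alpha}_t}\bx_0)^{\top}}{(1-\bar{\alpha}_t)^2}-\frac{\bI}{1-\bar{\alpha}_t}\right] f_{\bx_0}(\bx_t).
\]
The plan is to bound its operator norm by the triangle inequality, $\|\nabla^2 f_{\bx_0}\| \leq f_{\bx_0}/\sigma^2 + u^2 f_{\bx_0}/\sigma^4$, and then maximize each summand separately in $u$. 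The first summand is maximized at $u=0$, contributing $\sigma^{-2}(2\pi\sigma^2)^{-d/2}$. For the second, the substitution $w=u^2/\sigma^2$ turns it into $\sigma^{-2}(2\pi\sigma^2)^{-d/2}\cdot w e^{-w/2}$, whose optimum at $w=2$ yields $2/(e\sigma^2)\cdot(2\pi\sigma^2)^{-d/2}$. Summing the two contributions and collecting gives the prefactor $1+2/e = (2+e)/e$ claimed in the statement.

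No serious obstacle arises; the main things to get right are the two scalar optimizations and the organization of the triangle-inequality split so that the constants collapse cleanly to $(2+e)/e$. Tighter bounds (without a triangle inequality) are available by diagonalizing the Hessian, but they would not produce the constant stated in the lemma, so the looser split is actually the intended route.
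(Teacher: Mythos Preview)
Your proposal is correct and follows essentially the same route as the paper: compute $\nabla K_t$ and $\nabla^2 K_t$ for the single-point kernel, bound their norms via the one-variable optimizations $u\sigma^{-2}e^{-u^2/(2\sigma^2)}\leq 1/(\sigma\sqrt{e})$ and $w e^{-w/2}\leq 2/e$, then average over the empirical or population law to obtain the constants for $f_{\bS}$ and $P_t$. Your Hessian formula carries the correct sign on the $-\bI/\sigma^2$ term (the paper drops the minus sign), but since both proofs bound the two pieces separately via the triangle inequality this does not affect the final constant.
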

\begin{proof}
	Due to the definition of $K_{t}(\bx_{t}, \bx_{0})$, we have 
	\begin{equation}
		\small
			\|\nabla_{\bx_{t}}K_{t}(\bx_{t}, \bx_{0})\| = -\exp\left(-\frac{\|\bx_{t} - \sqrt{\bar{\alpha}_{t}}\bx_{0}\|^{2}}{2(1 - \bar{\alpha}_{t})}\right)\left\|\frac{\bx_{t} - \sqrt{\bar{\alpha}_{t}}\bx_{0}}{1 - \bar{\alpha}_{t}}\right\| \leq \sqrt{\frac{1}{e(1 - \bar{\alpha}_{t})}},
	\end{equation}
	where we use the inequality $axe^{-\frac{ax^{2}}{2}} \leq \sqrt{a / e}$, then the Lipschitz continuity of $f_{\bS}(\bx_{t})$ and $P_{t}(\bx_{t})$ are directly obtained since $P_{t}(\bx_{t}) = \int_{\bbR^{d}}\left(\frac{1}{2\pi(1 - \bar{\alpha}_{t})}\right)^{\frac{d}{2}}P_{0}(\bx_{0})d\bx_{0}$. On the other hand,
	\begin{equation}
		\small
		\begin{aligned}
			\nabla^{2}_{\bx_{t}\bx_{t}}K_{t}(\bx_{t}, \bx_{0}) &= \exp\left(-\frac{\|\bx_{t} - \sqrt{\bar{\alpha}_{t}}\bx_{0}\|^{2}}{2(1 - \bar{\alpha}_{t})}\right)\left(\frac{\bx_{t} - \sqrt{\bar{\alpha}_{t}}\bx_{0}}{1 - \bar{\alpha}_{t}}\right)\left(\frac{\bx_{t} - \sqrt{\bar{\alpha}_{t}}\bx_{0}}{1 - \bar{\alpha}_{t}}\right)^{\top} \\
			& + \left(\frac{1}{1 - \bar{\alpha_{t}}}\right)\exp\left(-\frac{\|\bx_{t} - \sqrt{\bar{\alpha}_{t}}\bx_{0}\|^{2}}{2(1 - \bar{\alpha}_{t})}\right)\bI.
		\end{aligned}
	\end{equation}
	Thus for any $\bxi\in\bbR^{d}$ with $\|\bxi\| = 1$, we have 
	\begin{equation}
		\small
		\begin{aligned}
			\sup_{\bxi:\|\bxi\|=1}\bxi^{\top}\nabla^{2}_{\bx_{y}\bx_{t}}K_{t}(\bx_{t}, \bx_{0})\bxi & = \exp\left(-\frac{\|\bx_{t} - \sqrt{\bar{\alpha}_{t}}\bx_{0}\|^{2}}{2(1 - \bar{\alpha}_{t})}\right)\left\|\frac{\bx_{t} - \sqrt{\bar{\alpha}_{t}}\bx_{0}}{1 - \bar{\alpha}_{t}}\right\|^{2} \\
			& + \left(\frac{1}{1 - \bar{\alpha_{t}}}\right)\exp\left(-\frac{\|\bx_{t} - \sqrt{\bar{\alpha}_{t}}\bx_{0}\|^{2}}{2(1 - \bar{\alpha}_{t})}\right) \\
			& \leq \frac{2}{e(1 - \bar{\alpha_{t}})} + \frac{1}{1 - \bar{\alpha_{t}}},
		\end{aligned}
	\end{equation}
	where we use the inequality $axe^{-\frac{ax}{2}} \leq 2e^{-1}$, which proves our second conclusion. 
\end{proof}
The following lemma is an important transformation of conditional expectation which is Tweedie’s Formula \citep{efron2011tweedie}.
\begin{lemma}\label{lem:score}
	Suppose that $\by\mid \bx\sim\cN(\alpha\bx, \beta\bI)$, then $\mE_{\bx\mid \by}[\bx] = \frac{1}{\alpha}(\by + \beta\nabla_{\bv}\log{P}(\bv))$. 
\end{lemma}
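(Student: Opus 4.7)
The plan is to derive the identity from a direct calculation on the marginal density of $\by$, exploiting the explicit form of the Gaussian conditional density. First, I would write the marginal as $P(\by) = \int P(\by \mid \bx) P(\bx) d\bx$ with $P(\by \mid \bx) = (2\pi\beta)^{-d/2}\exp(-\|\by - \alpha\bx\|^{2}/(2\beta))$, and then differentiate with respect to $\by$ under the integral sign. Justifying the exchange of gradient and integral is routine here, since $\bx$ is supported on the bounded set $\cX$ and the Gaussian kernel together with its gradient is uniformly integrable on compact sets of $\by$.

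The central computation uses the identity $\nabla_{\by} P(\by \mid \bx) = -\frac{\by - \alpha \bx}{\beta} P(\by \mid \bx)$, which follows directly from differentiating the Gaussian exponent. Substituting this, the gradient of the marginal becomes $\nabla_{\by} P(\by) = -\frac{\by}{\beta} P(\by) + \frac{\alpha}{\beta}\int \bx P(\by \mid \bx) P(\bx) d\bx$. I would then apply Bayes' rule $P(\by \mid \bx) P(\bx) = P(\bx \mid \by) P(\by)$ so that the remaining integral equals $P(\by)\, \mE[\bx \mid \by]$. Dividing through by $P(\by)$ produces the score-form identity $\nabla_{\by}\log P(\by) = -\by/\beta + (\alpha/\beta)\, \mE[\bx \mid \by]$, which rearranges to the claimed $\mE[\bx \mid \by] = (1/\alpha)(\by + \beta\, \nabla_{\by}\log P(\by))$.

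I do not anticipate any significant obstacle: the argument is the classical Tweedie derivation, and the only minor point needing care is the differentiation under the integral sign, which is immediate from the boundedness of the data support and the smooth Gaussian decay in $\by$. Everything else reduces to an algebraic rearrangement using Bayes' rule.
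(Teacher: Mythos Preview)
Your derivation is the standard Tweedie argument and is correct. The paper does not actually prove this lemma; it simply states it and attributes it to Tweedie's formula \citep{efron2011tweedie}, so your proposal supplies a valid proof where the paper offers none.
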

\convergenceoriginal*
\begin{proof}
	Due to Lemma \ref{lem:score}, we have 
	\begin{equation}
		\small
		\begin{aligned}
			\mE[\beps_{t}\mid \bx_{t}] & = \mE\left[\frac{1}{\sqrt{1 - \bar{\alpha}_{t}}}\bx_{t} - \frac{\sqrt{\bar{\alpha}_{t}}}{\sqrt{1 - \bar{\alpha}_{t}}}\bx_{0}\mid \bx_{t}\right] \\
			& = \frac{1}{\sqrt{1 - \bar{\alpha}_{t}}}\bx_{t} - \frac{1}{\sqrt{1 - \bar{\alpha}_{t}}}\left(\bx_{t} + (1 - \bar{\alpha}_{t})\nabla_{\bx}\log{P_{t}(\bx_{t})}\right) \\
			& = -\sqrt{1 - \bar{\alpha}_{t}}\nabla_{\bx}\log{P_{t}(\bx_{t})}.
		\end{aligned}
	\end{equation}
	Thus our goal is proving $\beps_{\btheta_{\bS}}(\bx_{t}, t)\overset{P}\longrightarrow -\sqrt{1 - \bar{\alpha}_{t}}\nabla_{\bx}\log{P_{t}(\bx_{t})}$. 
	\begin{equation}\label{eq:score function}
		\small
		\begin{aligned}
			\nabla_{\bx}\log P_{t}(\bx_{t}) &= \nabla_{\bx}P_{t}(\bx_{t})/P_{t}(\bx_{t}) \\
			& = \frac{\nabla_{\bx}\int_{\bbR^{d}}P_{t\mid 0}(\bx_{t}\mid \bx_{0})P_{0}(\bx_{0})d\bx_{0}}{\int_{\bbR^{d}}P_{t\mid 0}(\bx_{t}\mid \bx_{0})P_{0}(\bx_{0})}d\bx_{0} \\
			& = \frac{\mE_{\bx_{0}}[\nabla_{\bx}P_{t\mid 0}(\bx_{t}\mid \bx_{0})]}{\mE_{\bx_{0}}[P_{t\mid 0}(\bx_{t}\mid \bx_{0})]}.
		\end{aligned}
	\end{equation}
	Rewriting the \eqref{eq:empirical optimal eps} as 
	\begin{equation}
		\small
		\begin{aligned}
		\beps_{\btheta_{\bS}}^{*}(\bx_{t}, t) &= -\sqrt{1 - \bar{\alpha}_{t}} \frac{\frac{1}{n}\sum\limits_{i=1}^{n}\left(\frac{1}{2\pi(1 - \bar{\alpha}_{t})}\right)^{\frac{d}{2}}K_{t}(\bx_{t}, \bx^{i}_{0})\left(\frac{\bx_{t} - \sqrt{\bar{\alpha}_{t}}\bx_{0}^{i}}{1 - \bar{\alpha}_{t}}\right)}{\frac{1}{n}\sum\limits_{i=1}^{n}\left(\frac{1}{2\pi(1 - \bar{\alpha}_{t})}\right)^{\frac{d}{2}}K_{t}(\bx_{t}, \bx_{0}^{i})} \\
		& = -\sqrt{1 - \bar{\alpha}_{t}}\frac{\nabla_{\bx} f_{\bS}(\bx_{t})}{f_{\bS}(\bx_{t})},	
		\end{aligned}
	\end{equation}
	where $K_{t}(\bx_{t}, \bx_{0}^{i}) = \exp\left(-\frac{\|\bx - \sqrt{\bar{\alpha}_{t}}\bx^{i}_{0}\|^{2}}{2(1 - \bar{\alpha}_{t})}\right)$. Then, what left is to show that $\frac{\nabla_{\bx} f_{\bS}(\bx_{t})}{f_{\bS}(\bx_{t})}\overset{P}\longrightarrow \nabla_{\bx}\log{P_{t}(\bx_{t})}$. 
	\par
	As can be seen the numerator and denominator of the above equation are respectively empirical estimator of the numerator and denominator of the one in \eqref{eq:score function}. Then, both of them are consistency so that we get the conclusion. To check this, we have 
	\begin{equation}
		\small
		\begin{aligned}
			\mE_{\bS}\left[f_{\bS}(\bx_{t})\right] = \mE_{\bx_{0}}[P_{t\mid 0}(\bx_{t}\mid \bx_{0})] = P_{t}(\bx_{t}).
		\end{aligned}
	\end{equation}
	Note that for any $\bS^{i^{\prime}}$ equals to $\bS$ expected $\bx_{0}^{i^{\prime}}\neq \bx_{0}^{i}$, then for any $D > 0$
	\begin{equation}
		\small
		\begin{aligned}
			\sup_{\bx_{t}: \|\bx_{t}\| < D}\left(f_{\bS}(\bx_{t}) - P_{t}(\bx_{t})\right) & - \sup_{\bx_{t}: \|\bx_{t}\| < D}\left(f_{\bS^{i^{\prime}}}(\bx_{t}) - P_{t}(\bx_{t})\right) \leq \sup_{\bx_{t}: \|\bx_{t}\| < D}(f_{\bS}(\bx_{t}) - f_{\bS^{i^{\prime}}}(\bx_{t})) \\
			& = \frac{1}{n}\left(\frac{1}{2\pi(1 - \bar{\alpha}_{t})}\right)^{\frac{d}{2}}\sup_{\bx_{t}}\left(K_{t}(\bx_{t}, \bx_{0}^{i}) - K_{t}(\bx_{t}, \bx_{0}^{i^{\prime}})\right) \\
			& \leq \frac{1}{n}\left(\frac{1}{2\pi(1 - \bar{\alpha}_{t})}\right)^{\frac{d}{2}}.
		\end{aligned}
	\end{equation}
	Thus by McDiarmid's inequality, we must have 
	\begin{equation}
		\small
		\begin{aligned}
		\bbP\left(\left|\sup_{\bx_{t}:\|\bx_{t}\|\leq D}\left(f_{\bS}(\bx_{t}) - P_{t}(\bx_{t})\right) - \mE\left[\sup_{\bx_{t}:\|\bx_{t}\|\leq D}\left(f_{\bS}(\bx_{t}) - P_{t}(\bx_{t})\right)\right]\right| \geq \epsilon\right) \leq \exp\left(-2N(2\pi(1 - \bar{\alpha}_{t}))^{d}\epsilon^{2}\right).
		\end{aligned}
	\end{equation}
	Thus $\sup_{\bx_{t}:\|\bx_{t}\|\leq D}(f_{\bS}(\bx_{t}) - P_{t}(\bx_{t}))\overset{P}\longrightarrow \mE\left[\sup_{\bx_{t}:\|\bx_{t}\|\leq D}(f_{\bS}(\bx_{t}) - P_{t}(\bx_{t}))\right]$. For any $\bx_{t}, \by_{t}$ with norm smaller than $D$ and $\lambda > 0$, let
	\begin{equation}
		\small
		\begin{aligned}
			D_{j} = \mE\left[f_{\bS}(\bx_{t}) - P_{t}(\bx_{t}) \mid \bx_{0}^{1:j}\right] - \mE\left[f_{\bS}(\bx_{t}) - P_{t}(\bx_{t}) \mid \bx_{0}^{1:j - 1}\right].
		\end{aligned} 
	\end{equation}
	Then $f_{\bS}(\bx_{t}) - P_{t}(\bx_{t}) = \sum_{j=1}^{n}D_{j}$. Let 
	\begin{equation}
		\small
		\begin{aligned}
			U_{j} = \sup_{\bx_{0}^{j}} \mE\left[f_{\bS}(\bx_{t}) - P_{t}(\bx_{t}) \mid \bx_{0}^{1:j}\right] - \mE\left[f_{\bS}(\bx_{t}) - P_{t}(\bx_{t}) \mid \bx_{0}^{1:j - 1}\right]; \\
			L_{j} = \inf_{\bx_{0}^{j}} \mE\left[f_{\bS}(\bx_{t}) - P_{t}(\bx_{t}) \mid \bx_{0}^{1:j}\right] - \mE\left[f_{\bS}(\bx_{t}) - P_{t}(\bx_{t}) \mid \bx_{0}^{1:j - 1}\right],
		\end{aligned}
	\end{equation}
	we have $L_{j}\leq D_{j} \leq U_{j}$. Thus 
	\begin{equation}
		\small
		\begin{aligned}
			\mE_{\bS}\left[\exp\left(\lambda\left[f_{\bS}(\bx_{t}) - P_{t}(\bx_{t})\right]\right)\right] &= \mE_{\bS}\left[\exp\left(\lambda\sum_{j=1}^{n}D_{j}\right)\right] \\
			& = \mE_{\bS}\left[\mE\left[\exp\left(\lambda\sum_{j=1}^{n}D_{j}\right)\mid \bx_{0}^{1:N - 1}\right]\right] \\
			& = \mE_{\bS}\left[\exp\left(\lambda\sum_{j=1}^{N - 1}D_{j}\right)\mE\left[\exp\left(\lambda D_{N}\right)\mid \bx_{0}^{1:N - 1}\right]\right] \\
			& = \prod_{j=1}^{n}\mE_{\bS}\left[\mE\left[\exp\left(\lambda D_{j}\right)\mid \bx_{0}^{1:j - 1}\right]\right] \\
			& \leq \exp\left(\sum\limits_{j=1}^{n}\frac{\lambda^{2}(U_{j} - L_{j})^{2}}{8}\right),
		\end{aligned}
	\end{equation}
	where the last inequality is due to Azuma-Hoeffding's inequality \citep{duchi2016lecture}. On the other hand, we have 
	\begin{equation}
		\small
		\begin{aligned}
			U_{j} - L_{j} & \leq \sup_{\bx_{0}^{i}, \bx_{0}^{i^\prime}}\frac{1}{n}\left[\left(f_{\bx_{0}^{i}}(\bx_{t}) - P_{t}(\bx_{t})\right) - \left(f_{\bx_{0}^{i^\prime}}(\bx_{t}) - P_{t}(\bx_{t})\right)\right] \\
			& \leq \frac{2}{N}\left(\frac{1}{2\pi(1 - \bar{\alpha}_{t})}\right)^{\frac{d}{2}}.
		\end{aligned}
	\end{equation}
	Plugging this into the above equation, we get 
	\begin{equation}\label{eq:sub-gaussian}
		\small
			\mE_{\bS}\left[\exp\left(\lambda\left[f_{\bS}(\bx_{t}) - P_{t}(\bx_{t})\right]\right)\right] \leq \exp\left(\frac{\lambda^{2}}{2n\left(2\pi(1 - \bar{\alpha}_{t})\right)^{d}}\right),
	\end{equation}
	which shows that $f_{\bS}(\bx_{t}) - P_{t}(\bx_{t})$ is a sub-Gaussian process w.r.t. $\bx_{t}$. 
	\par
	Due to $\bx_{t}$ has bounded norm, there exists a $\delta$-cover $\cC(\delta, D)$ of $l_{2}$-ball with radius $D$ such that for any $\bx_{t}$ there exists $\by_{t}\in \cC(\delta, D)$ with $\|\bx_{t} - \by_{t}\|\leq \delta$. Due to Lemma \ref{lem:continuity}
	\begin{equation}
		\small
		\begin{aligned}
			\mE_{\bS}\left[\sup_{\bx_{t}}(f_{\bS}(\bx_{t}) - P_{t}(\bx_{t}))\right] &= \mE_{\bS}\left[\sup_{\substack{\bx_{t}, \by_{t};\\ \|\bx_{t} - \by_{t}\|\leq \delta}}(f_{\bS}(\bx_{t}) - P_{t}(\bx_{t})) - (f_{\bS}(\by_{t}) - P_{t}(\by_{t}))\right]\\
			& + \mE\left[\sup_{\by_{t}\in\cC(\delta, D)}(f_{\bS}(\by_{t}) - P_{t}(\by_{t}))\right] \\
			& \leq 2\delta\sqrt{\frac{1}{e(1 - \bar{\alpha}_{t})}} + \sqrt{\frac{2\log{|\cC(\delta, D)}|}{n\left(2\pi(1 - \bar{\alpha}_{t})\right)^{d}}},
		\end{aligned}
	\end{equation} 	
	where the last inequality is due to \eqref{eq:sub-gaussian} and Exercise 3.7 in \citep{duchi2016lecture}. Due to the arbitrarity of $\delta$ and taking $n\to \infty$, we get $\mE_{\bS}\left[\sup_{\bx_{t}}(f_{\bS}(\bx_{t}) - P_{t}(\bx_{t}))\right]\longrightarrow 0$, which implies $f_{\bS}(\bx_{t})\overset{P}\longrightarrow P_{t}(\bx_{t}$ for any $\bx_{t}$. 
	\par
	Thus we show that denominator of \eqref{eq:empirical optimal eps} converge to the one of \eqref{eq:score function} in probability. Similarly, we can prove the numerator of \eqref{eq:empirical optimal eps} converge to the one of \eqref{eq:score function} in probability. First, we have 
	\begin{equation}
		\small
		\begin{aligned}
			\mE_{\bS}\left[\nabla_{\bx} f_{\bS}(\bx_{t})\right]= \mE_{\bx_{0}}[\nabla_{\bx}P_{t\mid 0}(\bx_{t}\mid \bx_{0})] = \nabla_{\bx}P_{t}(\bx_{t}). 
		\end{aligned}
	\end{equation}
	Then,
	\begin{equation}
		\small
		\begin{aligned}
			& \sup_{\bx_{t}: \|\bx_{t}\| < D}\left\|\nabla_{\bx}f_{\bS}(\bx_{t}) - \nabla_{\bx}P_{t}(\bx_{t})\right\|  - \sup_{\bx_{t}: \|\bx_{t}\| < D}\left\|\nabla_{\bx}f_{\bS^{i^{\prime}}}(\bx_{t}) - \nabla_{\bx}P_{t}(\bx_{t})\right\| \\
			& \leq \sup_{\bx_{t}: \|\bx_{t}\| < D}\left\|\nabla_{\bx}f_{\bS}(\bx_{t}) - \nabla_{\bx}f_{\bS^{i^{\prime}}}(\bx_{t})\right\| \\
			& = \frac{1}{n}\left(\frac{1}{2\pi(1 - \bar{\alpha}_{t})}\right)^{\frac{d}{2}}\sup_{\bx_{t}}\left\|K_{t}(\bx_{t}, \bx_{0}^{i})\left(\frac{\bx_{t} - \sqrt{\bar{\alpha}_{t}}\bx_{0}^{i}}{1 - \bar{\alpha}_{t}}\right) - K_{t}(\bx_{t}, \bx_{0}^{i^{\prime}})\left(\frac{\bx_{t} - \sqrt{\bar{\alpha}_{t}}\bx_{0}^{i^{\prime}}}{1 - \bar{\alpha}_{t}}\right)\right\| \\
			& \leq \frac{1}{n}\left(\frac{1}{2\pi(1 - \bar{\alpha}_{t})}\right)^{\frac{d}{2}}\sup_{\bx_{t}, \bx_{0}}K_{t}(\bx_{t}, \bx_{0})\left\|\frac{\bx_{t} - \sqrt{\bar{\alpha}_{t}}\bx_{0}^{i^{\prime}}}{1 - \bar{\alpha}_{t}}\right\| \\
			& \leq \frac{1}{n}\left(\frac{1}{2\pi(1 - \bar{\alpha}_{t})}\right)^{\frac{d}{2}}\sqrt{\frac{1}{e(1 - \bar{\alpha}_{t})}},
		\end{aligned}
	\end{equation}
	where the last inequality is due to $axe^{-\frac{ax^{2}}{2}} \leq \sqrt{a / e}$. Thus by McDiarmid's inequality, we must have 
	\begin{equation}
		\small
		\begin{aligned}
			& \bbP\left(\left|\sup_{\bx_{t}:\|\bx_{t}\|\leq D}\left\|\nabla_{\bx}f_{\bS}(\bx_{t}) - \nabla_{\bx}P_{t}(\bx_{t})\right\| - \mE\left[\sup_{\bx_{t}:\|\bx_{t}\|\leq D}\left\|\nabla_{\bx}f_{\bS}(\bx_{t}) - \nabla_{\bx}P_{t}(\bx_{t})\right\|\right]\right| \geq \epsilon\right) \\
			& \leq \exp\left(-2en(2\pi(1 - \bar{\alpha}_{t}))^{d}(1 - \bar{\alpha}_{t})\epsilon^{2}\right).
		\end{aligned}
	\end{equation}
	Then we show that $\sup_{\bx_{t}:\|\bx_{t}\|\leq D}\left\|\nabla_{\bx}f_{\bS}(\bx_{t}) - \nabla_{\bx}P_{t}(\bx_{t})\right\|$ converge to its expectation in probability. What left is showing its expectation converges to zero. Similar to the proof of \eqref{eq:sub-gaussian}, we can prove 
	\begin{equation}
		\small
			\mE_{\bS}\left[\exp\left(\lambda\left[\|\nabla_{\bx_{t}}f_{\bS}(\bx_{t}) - \nabla_{\bx_{t}}P_{t}(\bx_{t})\right]\right)\right] \leq \exp\left(\sum\limits_{j=1}^{n}\frac{\lambda^{2}\|\nabla_{\bx_{t}}U_{j} - \nabla_{\bx_{t}}L_{j}\|^{2}}{8}\right).	
	\end{equation} 
	On the other hand, due to Lemma \ref{lem:continuity}, we have 
	\begin{equation}
		\small
		\begin{aligned}
			\|\nabla_{\bx_{t}}U_{j} - \nabla_{\bx_{t}}L_{j}\| & \leq \sup_{\bx_{0}^{i}, \bx_{0}^{i^\prime}}\frac{1}{n}\left\|\left[\left(\nabla_{\bx_{t}}f_{\bx_{0}^{i}}(\bx_{t}) - \nabla_{\bx_{t}}P_{t}(\bx_{t})\right) - \left(\nabla_{\bx_{t}}f_{\bx_{0}^{i^\prime}}(\bx_{t}) - \nabla_{\bx_{t}}P_{t}(\bx_{t})\right)\right]\right\| \\
			& \leq \frac{4}{N}\left(\frac{1}{2\pi(1 - \bar{\alpha}_{t})}\right)^{\frac{d}{2}}\sqrt{\frac{1}{e(1 - \bar{\alpha}_{t})}},
		\end{aligned}
	\end{equation}
	which implies 
	\begin{equation}
		\small
			\mE_{\bS}\left[\exp\left(\lambda\left[\|\nabla_{\bx_{t}}f_{\bS}(\bx_{t}) - \nabla_{\bx_{t}}P_{t}(\bx_{t})\right]\right)\right] \leq\exp\left(\frac{2\lambda^{2}}{en(2\pi(1 - \bar{\alpha}_{t}))^{\frac{d}{2}}(1 - \bar{\alpha}_{t})}\right).
	\end{equation}
	Thus, due to Lemma \ref{lem:continuity},
	\begin{equation}
		\small
		\begin{aligned}
			& \mE_{\bS}\left[\sup_{\bx_{t}}\|\nabla_{\bx_{t}}f_{\bS}(\bx_{t}) - \nabla_{\bx_{t}}P_{t}(\bx_{t})\|\right] \\
			&\leq  \mE_{\bS}\left[\sup_{\substack{\bx_{t}, \by_{t};\\ \|\bx_{t} - \by_{t}\|\leq \delta}}\|(\nabla_{\bx_{t}}f_{\bS}(\bx_{t}) - \nabla_{\bx_{t}}P_{t}(\bx_{t})) - (\nabla_{\by_{t}}f_{\bS}(\by_{t}) - \nabla_{\by_{t}}P_{t}(\by_{t}))\|\right]\\
			& + \mE\left[\sup_{\by_{t}\in\cC(\delta, D)}\|\nabla_{\by_{t}}f_{\bS}(\by_{t}) - \nabla_{\by_{t}}P_{t}(\by_{t})\|\right] \\
			& \leq 2\delta\left(\frac{1}{2\pi(1 - \bar{\alpha}_{t})}\right)^{\frac{d}{2}}\left(\frac{2 + e}{e(1 - \bar{\alpha_{t}})}\right) + \sqrt{\frac{8\log{|\cC(\delta, D)}|}{en\left(2\pi(1 - \bar{\alpha}_{t})\right)^{d}(1 - \bar{\alpha}_{t})}},
		\end{aligned}
	\end{equation}
	By taking a proper $\delta$ and $n\to \infty$, we show that $\mE_{\bS}\left[\sup_{\bx_{t}}\|\nabla_{\bx_{t}}f_{\bS}(\bx_{t}) - \nabla_{\bx_{t}}P_{t}(\bx_{t})\|\right]$ converges to zero. Thus, the denominator and numerator of \eqref{eq:empirical optimal eps} are respectively converge to the ones of \eqref{eq:score function}. 
	\par
	Finally, due to $\|\bx_{t}\|$ is bounded, $P_{t}(\bx)$ and $\nabla_{\bx_{t}}P_{t}(\bx_{t})$ are all continuous, $\nabla_{\bx_{t}}\log{P_{t}(\bx_{t})}$ is continuous. Thus by Slutsky's theorem \citep{shiryaev2016probability}, we prove our result. 
\end{proof}
\section{Proofs in Section \ref{sec:estimating forward}}\label{app:proofs of section estimating forward}
\scoreequivalent*
\begin{proof}
	Due to \eqref{eq:forward process} and Tweedie’s formula \citep{efron2011tweedie}, we know
	\begin{equation}
		\small
			\mE[\bxi_{t, s}\mid \bx_{t}] = \frac{1}{\sqrt{1 - r_{t, s}}}\bx_{t} - \frac{1}{\sqrt{1 - r_{t, s}}}\left(\bx_{t} + (1 - r_{t, s})\nabla_{\bx}\log{P_{t}}(\bx_{t})\right) = -\sqrt{1 - r_{t, s}}\nabla_{\bx}\log{P_{t}}(\bx_{t}). 
	\end{equation}
	Thus $\mE[\bxi_{t, s}\mid \bx_{t}] / \sqrt{1 - r_{t, s}}$ is invariant w.r.t. $s$, which verifies our conclusion. 
\end{proof}
\ourempiricalsolution*
\begin{proof}
	Due to \eqref{eq:forward process}, for any $t$, our training objective \eqref{eq:our objective} can be written as 
	\begin{equation}
		\small
		\begin{aligned}
			& \inf_{\bxi_{\btheta}}\mE_{s}\left[\frac{1}{n}\sum_{i=1}^{n}\mE_{\bxi_{t, s}}\left[\left\|\frac{\bxi_{t, s}}{\sqrt{1 - r_{t, s}}} - \bxi_{\btheta}(\sqrt{r_{t, s}}\bx_{s}^{i} + \sqrt{1 - r_{t, s}}\bxi_{t, s}, t)\right\|^{2}\right]\right] \\
			& = \inf_{\bxi_{\btheta}}\mE_{s}\left[\frac{1}{n}\sum_{i=1}^{n}\int_{\bbR^{d}}\left\|\frac{\bxi_{t, s}}{\sqrt{1 - r_{t, s}}} - \bxi_{\btheta}(\sqrt{r_{t, s}}\bx_{s}^{i} + \sqrt{1 - r_{t, s}}\bxi_{t, s}, t)\right\|^{2}\left(\frac{1}{2\pi}\right)^{\frac{d}{2}}\exp\left(\frac{-\|\bxi_{t, s}\|^{2}}{2}\right)d\bxi_{t, s}\right] \\
			& = \inf_{\bxi_{\btheta}}\mE_{s}\left[\frac{1}{n}\sum_{i=1}^{n}\int_{\bbR^{d}}\left\|\bxi_{\btheta}(\bx, t) - \frac{\bx - \sqrt{r_{t, s}}\bx_{s}^{i}}{1 - r_{t, s}}\right\|^{2}\left(\frac{1}{2\pi(1 - r_{t, s})}\right)^{\frac{d}{2}}\exp\left(-\frac{\left\|\bx - \sqrt{r_{t, s}}\bx_{s}^{i}\right\|^{2}}{2(1 - r_{t, s})}\right)d\bx\right].
		\end{aligned}
	\end{equation}
	Then following the proof of Theorem \ref{pro:generalization of ddpm}, we prove our conclusion. 
\end{proof}
As we have clarified in the mainbody of this paper, the objective \eqref{eq:our objective population} has global minima $\mE[\bxi_{t, t - 1}\mid \bx_{t}]$. We formally prove this conclusion in the following lemma. 
\begin{lemma}
	For $\bxi_{\btheta}(\cdot, \cdot)$ with enough functional capacity, the problem \eqref{eq:our objective population} has global minima $\bxi_{\btheta}(\bx_{t}, t) = \mE[\bxi_{t, t - 1}\mid \bx_{t}] / \sqrt{1 - r_{t, t - 1}}$.
\end{lemma}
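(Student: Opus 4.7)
The plan is to exploit the separable structure of the objective \eqref{eq:our objective population}. Since $\bxi_{\btheta}(\cdot, t)$ may be chosen independently for each $t$ under the hypothesis of sufficient functional capacity, the outer sum $\sum_{t=1}^{T}$ decouples and it suffices to verify the claimed minimizer for a single $t$. Fixing $t$, the relevant expression is $\mE_{s}\bigl[\mE_{\bx_{s},\bxi_{t,s}}[\|\bxi_{t,s}/\sqrt{1-r_{t,s}}-\bxi_{\btheta}(\bx_{t},t)\|^{2}]\bigr]$ where $\bx_{t}=\sqrt{r_{t,s}}\bx_{s}+\sqrt{1-r_{t,s}}\bxi_{t,s}$; crucially, the marginal law of $\bx_{t}$ is $P_{t}$ regardless of $s$.

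For each fixed $s$, I would use the tower property to rewrite the inner expectation as $\mE_{\bx_{t}}\bigl[\mE\bigl[\|\bxi_{t,s}/\sqrt{1-r_{t,s}}-\bxi_{\btheta}(\bx_{t},t)\|^{2}\mid \bx_{t}\bigr]\bigr]$. For each value of $\bx_{t}$, the inner conditional expectation is a strictly convex quadratic in the vector $\bxi_{\btheta}(\bx_{t},t)$, hence has a unique pointwise minimizer. By the optimality of the conditional expectation under expected square loss \citep{banerjee2005optimality}, that minimizer is $\mE[\bxi_{t,s}\mid \bx_{t}]/\sqrt{1-r_{t,s}}$. By Lemma \ref{lem:equivalence}, this quantity is identically equal to $\mE[\bxi_{t,t-1}\mid \bx_{t}]/\sqrt{1-r_{t,t-1}}$ and does not depend on $s$. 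Therefore the single function $\bxi_{\btheta}^{*}(\bx_{t},t)=\mE[\bxi_{t,t-1}\mid \bx_{t}]/\sqrt{1-r_{t,t-1}}$ simultaneously minimizes the integrand for every $s$, so taking $\mE_{s}$ preserves the minimization and $\bxi_{\btheta}^{*}$ is the global minimizer.

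The only subtlety worth flagging is that the argument requires a common pointwise minimizer across all $s$; if Lemma \ref{lem:equivalence} failed, then the optimizer of the $\mE_{s}$-averaged objective would merely be a weighted average of the per-$s$ optimizers rather than any one of them. The Tweedie-type identity supplies exactly the invariance needed to avoid this obstacle, and no further estimates are required. The sufficient-capacity assumption is used only to guarantee that the pointwise-in-$\bx_{t}$ optimization can be realized by a single parameterized function $\bxi_{\btheta}$.
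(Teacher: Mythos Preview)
Your proposal is correct and follows essentially the same route as the paper: decouple over $t$, observe that for each fixed $s$ the mean-square minimizer is the conditional expectation $\mE[\bxi_{t,s}\mid\bx_{t}]/\sqrt{1-r_{t,s}}$, and then invoke Lemma~\ref{lem:equivalence} to conclude these per-$s$ minimizers coincide so a single function attains the infimum of the $\mE_{s}$-averaged objective. The paper packages the last step as the inequality $\inf_{\bxi_{\btheta}}\mE_{s}[\cdot]\geq\mE_{s}[\inf_{\bxi_{\btheta}}\cdot]$ with equality achieved by the common minimizer, which is exactly the argument you give in words.
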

\begin{proof}
	For any specific $t$, due to the optimality of conditional expectation of minimizing min-square estimation \citep{banerjee2005optimality}, 
	\begin{equation}
		\small
		\begin{aligned}
		& \inf_{\bxi_{\btheta(\cdot, t)}}\mE_{s}\left[\mE_{\bx_{s}, \bxi_{t, s}}\left[\left\|\frac{\bxi_{t, s}}{\sqrt{1 - r_{t, s}}} - \bxi_{\btheta}(\sqrt{r_{t, s}}\bx_{s} + \sqrt{1 - r_{t, s}}\bxi_{t, s}, t)\right\|^{2}\right]\right] \\
		& \geq \mE_{s}\left[\inf_{\bxi_{\btheta(\cdot, t)}}\mE_{\bx_{s}, \bxi_{t, s}}\left[\left\|\frac{\bxi_{t, s}}{\sqrt{1 - r_{t, s}}} - \bxi_{\btheta}(\sqrt{r_{t, s}}\bx_{s} + \sqrt{1 - r_{t, s}}\bxi_{t, s}, t)\right\|^{2}\right]\right] \\
		& = \mE_{s}\left[\mE_{\bx_{s}, \bxi_{t, s}}\left[\left\|\frac{\bxi_{t, s}}{\sqrt{1 - r_{t, s}}} - \mE\left[\frac{\bxi_{t, s}}{\sqrt{1 - r_{t, s}}}\mid \bx_{t}\right]\right\|^{2}\right]\right],
		\end{aligned}
	\end{equation}
	where the first inequality becomes when $\bxi_{\btheta}(\sqrt{r_{t, s}}\bx_{s} + \sqrt{1 - r_{t, s}}\bxi_{t, s}, t) = \mE\left[\frac{\bxi_{t, s}}{\sqrt{1 - r_{t, s}}}\mid \bx_{t}\right]$, which is invariant w.r.t. s due to Lemma \ref{lem:equivalence}.  
\end{proof}
\par
It worth noting that our training objective is another view of score matching \citep{song2020score}, which approximate score function  $\nabla_{\bx_{t}}\log{P_{t}}(\bx_{t})$. Then using the approximated $\nabla_{\bx_{t}}\log{P_{t}}(\bx_{t})$ to running a reverse-time stochastic differential equation to generate data. In this regime, they leverage a model $\bs_{\btheta}(\bx_{t}, t)$ to minimizing $\mE_{\bx_{t}}[\|\bs_{\btheta}(\bx_{t}, t) - \nabla_{\bx_{t}}\log{P_{t}}(\bx_{t})\|^{2}]$ to get the approximated score function $\bs_{\btheta}(\bx_{t}, t)$. It has been proven in \citep{vincent2011connection} that for any $s < t$, it holds 
\begin{equation}
	\small
	\inf_{\bs_{\btheta}} \mE_{\bx_{t}}\left[\left\|\bs_{\btheta}(\bx_{t}, t) - \nabla_{\bx_{t}}\log{P_{t}}(\bx_{t})\right\|^{2}\right] = \inf_{\bs_{\btheta}}\mE_{\bx_{s}}\left[\mE_{\bx_{t}\mid \bx_{s}}\left[\left\|\bs_{\btheta}(\bx_{t}, t) - \nabla_{\bx_{t}}\log{P_{t\mid s}}(\bx_{t}\mid \bx_{s})\right\|^{2}\right]\right].
\end{equation}
Due to \eqref{eq:forward process}, we know that our training objective \eqref{eq:our objective population} is equivalent to 
\begin{equation}\small
	\inf_{\bs_{\btheta}}\mE_{s}\left[\mE_{\bx_{s}}\left[\mE_{\bx_{t}\mid \bx_{s}}\left[\left\|\bs_{\btheta}(\bx_{t}, t) - \nabla_{\bx_{t}}\log{P_{t\mid s}}(\bx_{t}\mid \bx_{s})\right\|^{2}\right]\right]\right],
\end{equation}
which is equivalent to score-matching as in \citep{song2020score} but with a random initial time step $s$ (the $s$ in \citep{song2020score} is fixed as zero). 
\par
\convergenceofourobjective*
\begin{proof}
	 For any given $t$ and $s < t$, we know
	 \begin{equation}
	 	\small
	 	\begin{aligned}
	 		P_{t\mid s}(\bx_{t}\mid \bx_{s}) &= \left(\frac{1}{2\pi(1 - r_{t, s})}\right)^{\frac{d}{2}}\exp\left(-\frac{\left\|\bx - \sqrt{r_{t, s}}\bx_{s}\right\|^{2}}{2(1 - r_{t, s})}\right); \\
	 		\nabla_{\bx_{t}}P_{t\mid s}(\bx_{t}\mid \bx_{s}) &= \left(\frac{1}{2\pi(1 - r_{t, s})}\right)^{\frac{d}{2}}\exp\left(-\frac{\|\bx - \sqrt{r_{t, s}}\bx_{s}^{i}\|^{2}}{2(1 - r_{t, s})}\right)\left(\frac{\bx - \sqrt{r_{t, s}}\bx_{s}}{1 - r_{t,s}}\right). 
	 	\end{aligned}
 		\end{equation}
 	Thus, the numerator and denominator are respectively unbiased estimator to $\nabla_{\bx_{t}}P_{t}(\bx_{t})$ and $P_{t}(\bx_{t})$. Since $s$ is finite, and $\mE[\bxi_{t, t - 1}\mid \bx_{t}] / \sqrt{1 - r_{t, t - 1}} = \nabla\log{P_{t}(\bx_{t})}$, we can similarly prove our result as in Theorem \ref{thm:sample complexity original}. 
\end{proof}
\section{Extra Experiments}
In this section, we present some of generated data by different diffusion models in the main part of this paper.  
\subsection{Accumulated Optimization Bias Improves Generalization}\label{app:accu}
\begin{figure}[t]\centering
	\includegraphics[width=1\textwidth]{./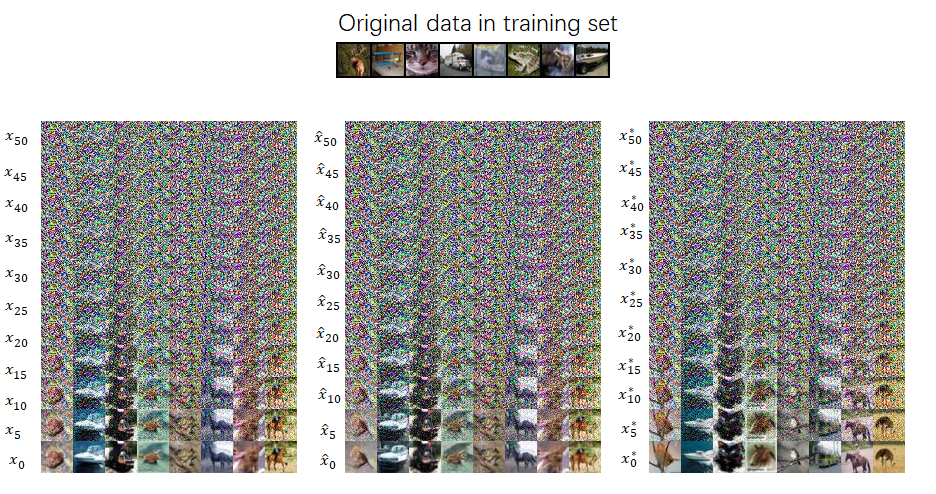}
	\caption{The generated \texttt{CIFAR10}, starting with noisy data constructed by training set. From the left to right are respectively the data generated by diffusion models trained by \eqref{eq:our objective} and \eqref{eq:empirical objective}.}
	\label{fig:cifar10 starting}
\end{figure}
\begin{figure}[t]\centering
	\includegraphics[width=1\textwidth]{./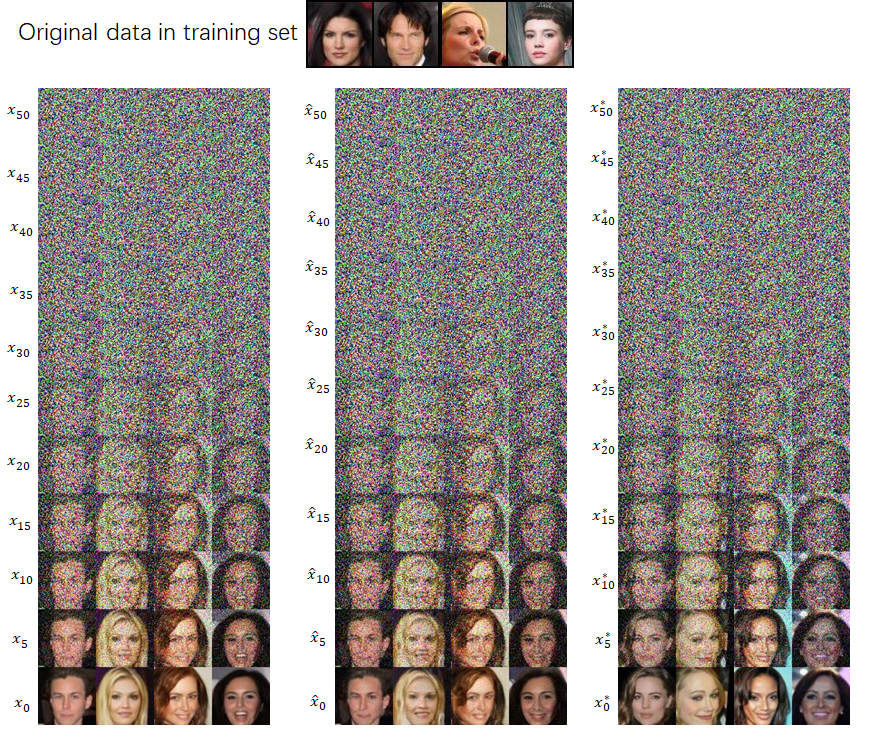}
	\caption{The generated \texttt{CelebA}, starting with noisy data constructed by training set. From the left to right are respectively the data generated by diffusion models trained by \eqref{eq:our objective} and \eqref{eq:empirical objective}.}
	\label{fig:celeba starting}
\end{figure}
In section \ref{sec:The Optimization Bias Helps Extrapolating}, we have verified that the generated $\bx_{0}$ can exist in training set when starting from $\bx_{15}$ generated by data in training set. However, we claim that when accumulating enough bias during the reverse process of generating, the generalization problem can be obviated. That says we start the reverse process from the same $\bx_{50}$ generated by the data in training set. The results are in Figure \ref{fig:cifar10 starting} and \ref{fig:celeba starting}. As can be seen, the generated data do not visually similar to the original training data. 
\par
Finally, as we have claim in Section \ref{sec:The Optimization Bias Helps Extrapolating}, the optimization bias enables the trained diffusion model to obviate generate data existed in the training set. For each generated data, we verifies it by searching the nearest data in the training set. Some of generated data are in Figure \ref{fig:cifar10 nearest} and \ref{fig:celeba nearest}.  
\begin{figure}[t]\centering
	\includegraphics[width=1\textwidth]{./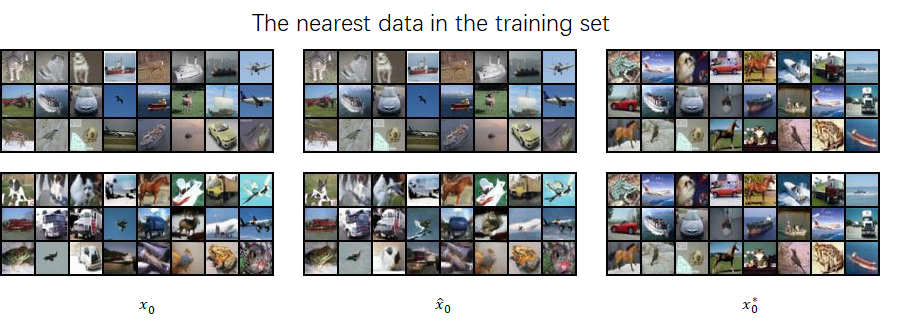}
	\caption{The generated \texttt{CIFAR10}, the bottom and top line are respectively the generated data and the its nearest data in the training set. From the left to right are respectively the data generated by diffusion models trained by \eqref{eq:empirical objective}, \eqref{eq:empirical objective} and empirical optima.}
	\label{fig:cifar10 nearest}
\end{figure}
\begin{figure}[t]\centering
	\includegraphics[width=1\textwidth]{./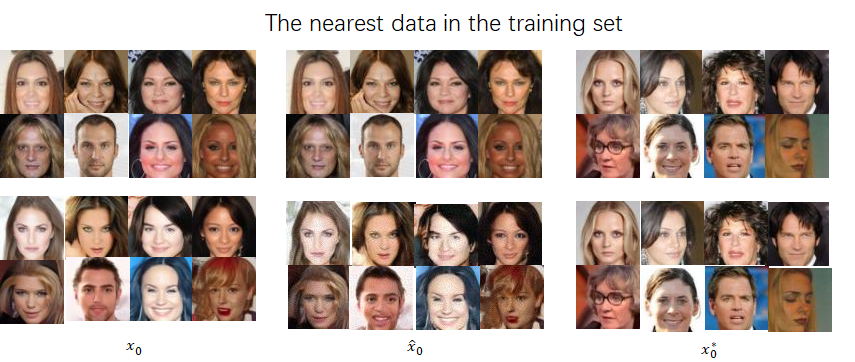}
	\caption{The generated \texttt{CelebA}, the bottom and top line are respectively the generated data and the its nearest data in the training set. From the left to right are respectively the data generated by diffusion models trained by \eqref{eq:empirical objective}, \eqref{eq:empirical objective} and empirical optima.}
	\label{fig:celeba nearest}
\end{figure}
\subsection{Data Generated by Different Diffusion Models}\label{app:Data Generated by Empirical Optima}
We present batch of generated data by different diffusion models, i.e., the ones trained by \eqref{eq:empirical objective}, \eqref{eq:our objective} and the empirical optima. They are respectively represented by $\bx_{t}$, $\hat{\bx}_{t}$, and $\bx_{t}^{*}$, and staring with the same standard Gaussian noise. Similar to Section \ref{sec:optimization bias}, the data are generated by 50 steps DDIM \citep{song2020denoising}. The \texttt{CIFAR10} and \texttt{CelebA} are respectively in Figure \ref{fig:cifar10 gen all} and \ref{fig:celeba gen all}. As can be seen, the $\bx_{t}$ and $\hat{\bx}_{t}$ are close to each other, while $\hat{\bx}_{t}$ is noisy than $\bx_{t}$. This further verifies there is a trade-off between generalization and optimization as we discussed in the Section \ref{sec:exp}. 

\begin{figure}[t]\centering
	\includegraphics[width=1\textwidth]{./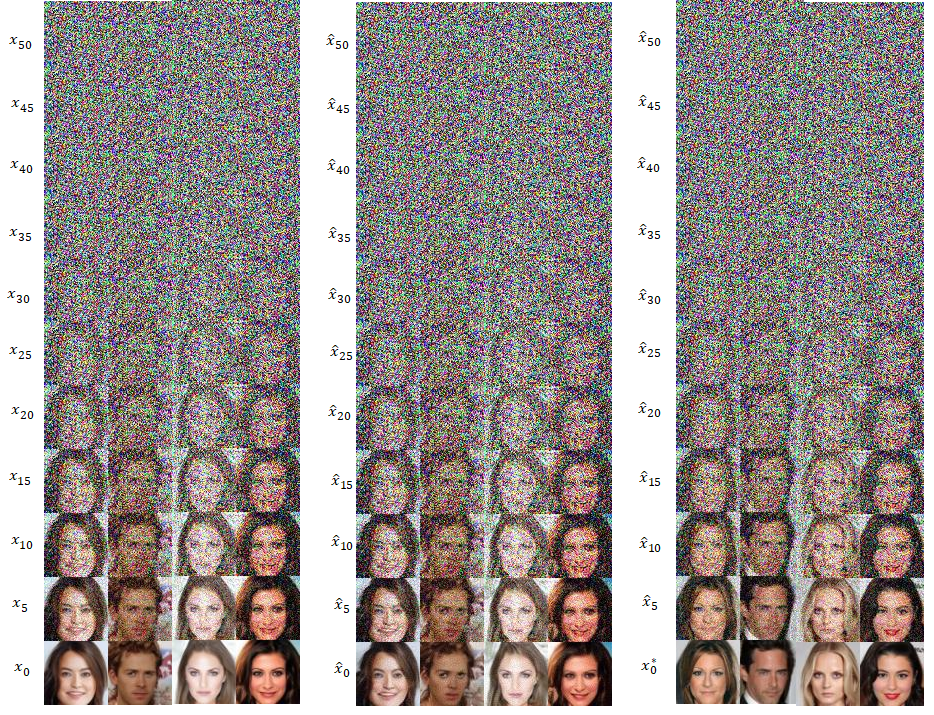}
	\caption{The generated \texttt{CelebA}, from the left to right are respectively the data generated by diffusion models trained by \eqref{eq:empirical objective}, \eqref{eq:our objective} and the empirical optima \eqref{eq:empirical optimal eps}.}
	\label{fig:celeba gen all}
\end{figure} 
 
\begin{figure*}[t!]\centering
	%		\begin{subfloature}[b]{0.49\textwidth}
		%			\includegraphics[width=\textwidth]{./figures/hard_loss.pdf}
		%			\caption{MMEL-H.}
		%			\label{fig:hard}
		%		\end{subfloature}
	\subfloat[\texttt{CIFAR10} generated by the model trained by \eqref{eq:empirical objective}.]{
		\includegraphics[width=1.0\textwidth]{./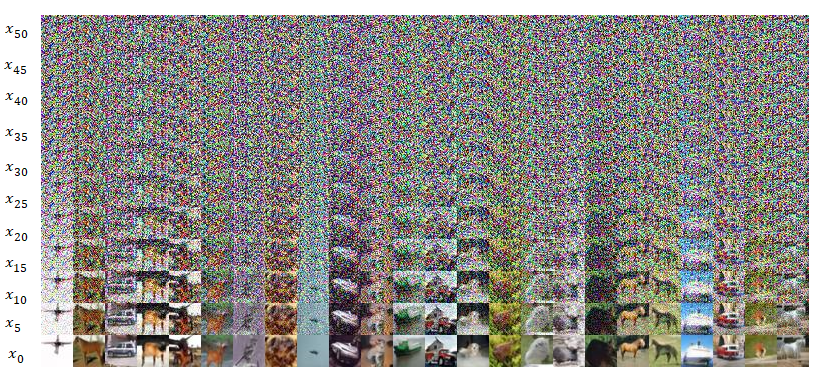}} \\
	\subfloat[\texttt{CIFAR10} generated by the model trained by \eqref{eq:our empirical solution}.]{
		\includegraphics[width=1.0\textwidth]{./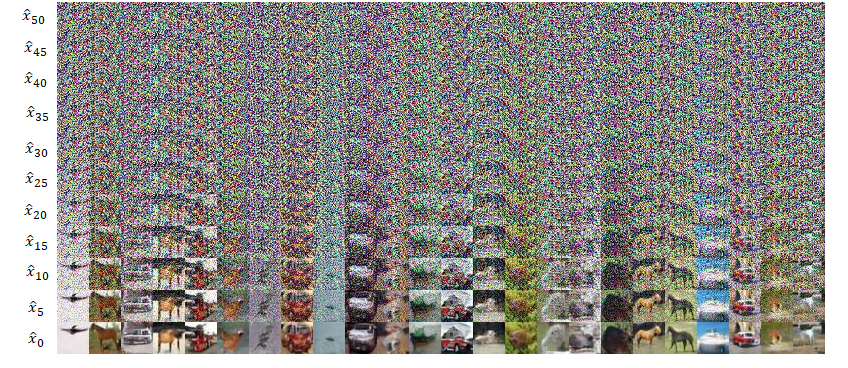}} \\
	\subfloat[\texttt{CIFAR10} generated by the model trained empirical optima \eqref{eq:empirical optimal eps}.]{
		\includegraphics[width=1.0\textwidth]{./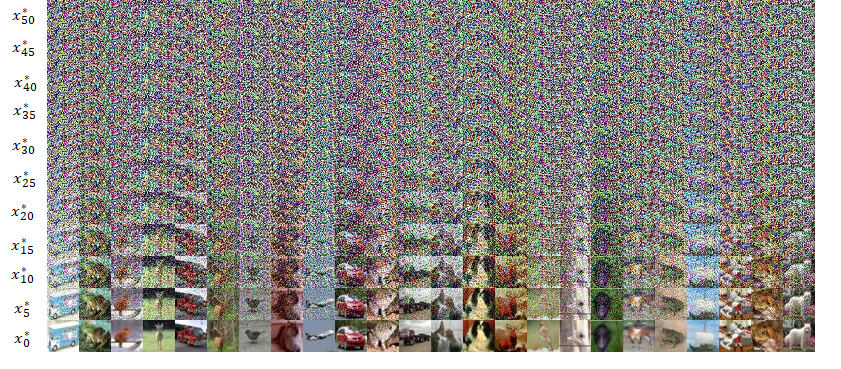}}
	%		\begin{subfloature}[b]{0.49\textwidth}
		%			\includegraphics[width=\textwidth]{./figures/soft_loss.pdf}
		%			\caption{MMEL-S.}
		%			\label{fig:soft}
		%		\end{subfloature}
	\caption{The generated \texttt{CIFAR10}
	}
	\label{fig:cifar10 gen all}
\end{figure*}
\subsection{Generated $\hat{\bx_{0}}$}
In this subsection, we compare some data generated by the diffusion model trained by \eqref{eq:our objective} and \eqref{eq:empirical objective}. Though the first model has no potential generalization problem, its generated data are noisy compared with $\bx_{t}$. The data are in Figure \ref{fig:cifar10 comp} and \ref{fig:celeba comp}. 
\begin{figure}[t]\centering
	\includegraphics[width=1\textwidth]{./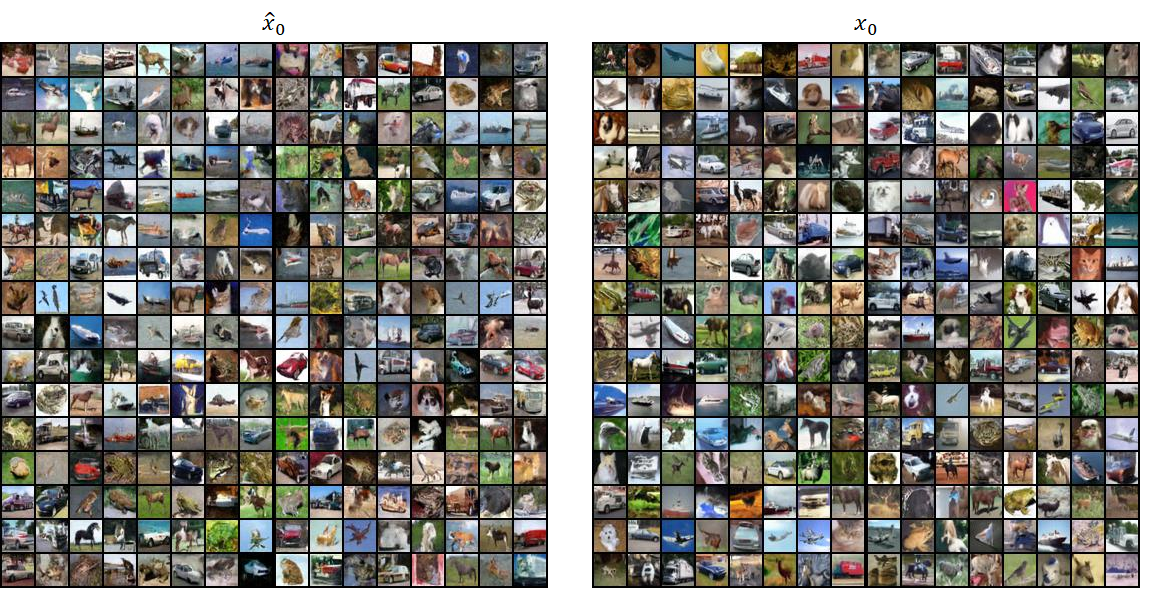}
	\caption{The generated \texttt{CIFAR10}, from the left to right are respectively the data generated by diffusion models trained by \eqref{eq:our objective} and \eqref{eq:empirical objective}.}
	\label{fig:cifar10 comp}
\end{figure}
\begin{figure}[t]\centering
	\includegraphics[width=1\textwidth]{./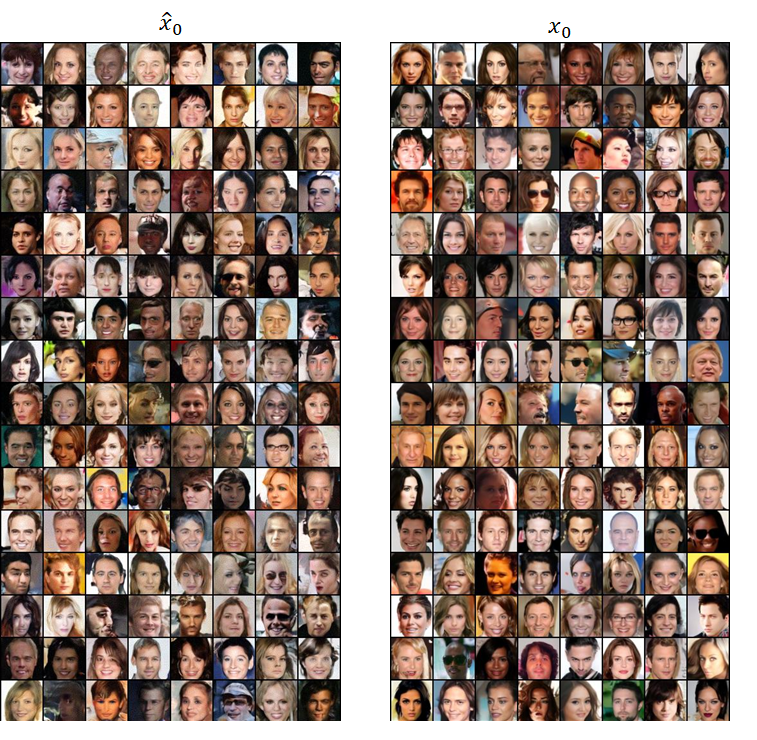}
	\caption{The generated \texttt{CelebA}, from the left to right are respectively the data generated by diffusion models trained by \eqref{eq:our objective} and \eqref{eq:empirical objective}.}
	\label{fig:celeba comp}
\end{figure}
  
\end{document}